\newtheorem{theorem}{Theorem}[]
\newtheorem{corollary}{Corollary}[]
\newtheorem{proposition}{Proposition}[]
\newtheorem{remark}{Remark}
\title{
Adaptive KL-UCB based Bandit Algorithms for Markovian and i.i.d. Settings}
\author{Arghyadip~Roy,~\IEEEmembership{Member,~IEEE,}
        Sanjay~Shakkottai,~\IEEEmembership{Fellow,~IEEE}
        ~and
        R. Srikant,~\IEEEmembership{Fellow,~IEEE}% <-this % stops a space
        
%\author{%
%  Arghyadip ~Roy\thanks{Use footnote for providing further information
%    about author (webpage, alternative address)---\emph{not} for acknowledging
%    funding agencies.} \\
%  CSL\\
%  University of Illinois at Urbana-Champaign\\
%  \texttt{arghyad4@illinois.edu} \\
  % examples of more authors
%   \And
%   Sanjay ~Shakkottai \\
%  UT Austin \\
  % Address \\
%   \texttt{sanjay.shakkottai@utexas.edu } \\
%   \AND
%   R. Srikant \\
%   ECE \& CSL \\
%  University of Illinois at Urbana-Champaign\\
 % \texttt{rsrikant@illinois.edu} \\
  % \And
  % Coauthor \\
  % Affiliation \\
  % Address \\
  % \texttt{email} \\
  % \And
  % Coauthor \\
  % Affiliation \\
  % Address \\
  % \texttt{email} \\
  \thanks{A. Roy is with Mehta Family School of Data Science \& Artificial Intelligence, Indian Institute of Technology Guwahati, India 781039 (e-mail: arghyadip@iitg.ac.in).
  S. Shakkottai  is  with  the  Department  of Electrical  and  Computer  Engineering,  The  University  of  Texas  at  Austin, Austin, TX 78712 USA    (e-mail:shakkott@austin.utexas.edu).
R. Srikant is with the Department of Electrical and Computer Engineering, and Coordinated Science Laboratory, University of Illinois at Urbana-Champaign, Champaign, IL 61801 USA (e-mail: rsrikant@illinois.edu).}}
\begin{document}
\maketitle

\begin{abstract}

In the regret-based formulation of Multi-armed Bandit (MAB) problems, except in rare instances, much of the literature focuses on arms with i.i.d. rewards. In this paper, we consider the problem of obtaining regret guarantees for MAB problems in which the rewards of each arm form a Markov chain which may not belong to a single parameter exponential family. To achieve a logarithmic regret in such problems is not difficult: a variation of standard \textcolor{black}{Kullback-Leibler Upper Confidence Bound (KL-UCB)} does the job. However, the constants obtained from such an analysis are poor for the following reason: i.i.d. rewards are a special case of Markov rewards and it is difficult to design an algorithm that works well independent of whether the underlying model is truly Markovian or i.i.d. To overcome this issue, we introduce a novel algorithm that identifies whether the rewards from each arm are truly Markovian or i.i.d. using a
\textcolor{black}{total variation}
distance-based test. Our algorithm then switches from using a standard KL-UCB to a specialized version of KL-UCB when it determines that the arm reward is Markovian, thus resulting in low regrets for both i.i.d. and Markovian settings.

\end{abstract}
\begin{IEEEkeywords}
Online learning, regret, multi-armed bandit, rested bandit,  KL-UCB. %Hellinger distance.
\end{IEEEkeywords}

\section{Introduction}
In a Multi-armed Bandit (MAB) problem \cite{robbins1952some,lai1985asymptotically}, 
a player needs to choose arms sequentially
%to decide which arm to play in a sequence of discrete time steps so as
to maximize the total reward or minimize the total regret \cite{auer2002nonstochastic}. 
%While there is a plethora of works 
%which focus on arms with 
Unlike i.i.d. arm reward case \cite{thompson1933likelihood,lai1985asymptotically,anantharam1987asymptotically-1,agrawal1995sample,auer2002finite,garivier2011kl,cappe2013kullback},
%However, 
there are only a few works which consider bandits with Markovian rewards. When the states of the arms which are not played remain frozen (\textit{rested} bandit \cite{tekin2010online,anantharam1987asymptotically-2,tekin2012online,moulos2020finite}), the states observed in their next selections do not depend on the interval between successive plays of the arms. 
In the \textit{restless} \cite{liu2012learning,tekin2011online,tekin2012online} case, states of the arms continue to evolve irrespective of the selections by the player. 
In this paper, we focus on the rested bandit setting. Many real-world problems such as gambling, ad placement and clinical trials fall into this category \cite{cortes2017}. 
%From the viewpoint of gambling, the arms correspond to slot-machines. 
If a slot-machine (viewed as an arm) produces high reward in a particular play, then the probability that it will produce \textcolor{black}{high} reward in the next play is \textcolor{black}{not} very high. Therefore, it is reasonable to assume that the reward distribution depends on the previous outcome and can be modeled through the rested bandit setting. Another example is in an online advertising setting to a customer (e.g. video \textcolor{black}{advertisements (ads)} with IMDb TV, social ads on Facebook), where an agent presents an ad to a customer from a pool of ads and adapts future displayed ads based on the customer's past responses. 
%Each ad can be mapped to an arm of a MAB. 
% When the same ad is presented to a user, it is likely that the user's response depends on the user's previous response. 
%In a medical diagnosis problem involving
%where a doctor needs to prescribe a medicine from a set of medicines, 
%repeated interventions of different medications, 
%may result in progressively different outcomes. Therefore, it is reasonable to assume that 
%the outcome of an intervention depends on the outcome of previous interventions. 
\textcolor{black}{Other examples are medical diagnosis problem (repeated interventions of different medications), click-through rate prediction and rating
systems \cite{graepel2010web,herbrich2006trueskill}.}\par  
%and hence, cannot be captured by the i.i.d. model. 
%As discussed next, 
Existing studies in the literature on rested bandits either consider a particular class of Markov chains (with single-parameter families of transition matrices) \cite{anantharam1987asymptotically-2,moulos2020finite}, or result in large asymptotic regret bounds \cite{tekin2010online,tekin2012online}. In general, however, these Markov chains may not belong to a single parameter family of transition matrices; nevertheless, we would desire low regret. 
%In our work, we address these issues.
%We briefly describe the setting considered by us.  
In the setting considered by us, we assume that each arm evolves according to a two-state Markov chain. When an arm is chosen, a finite reward is generated based on the present state of the arm.
Imagine a scenario where a company (Netflix, say \cite{lattimore2018bandit})
recommends a set of products (places movies in user's browser) to users (subscribers) to maximize the generated
revenue. A user’s response to a recommended product (a movie genre) can be captured using a two state (corresponding to negative and positive ratings) Markov
chain where buying a product (providing a positive rating) generates a positive reward, else the reward is zero. A user's rating for a movie may be temporally correlated based on the genre. Using a MAB approach, the company can adapt the set of displayed products \cite{avadhanula2021stochastic}.
In such a setting, we can obtain a logarithmic regret by a variant of standard  KL-UCB \cite{garivier2011kl,cappe2013kullback} (originally designed for i.i.d. rewards). Although the constant obtained from the regret analysis is 
better than that of \cite{tekin2010online}, the performance is poor for i.i.d. rewards (special case of Markovian rewards). We address this problem in this paper.
The main contributions of this paper are as follows.\par
1. \textbf{\textcolor{black}{Adaptive} KL-UCB Algorithm}: We propose a novel \textcolor{black}{Total Variation} KL-UCB (\textcolor{black}{TV}-KL-UCB) algorithm which can identify whether the rewards from an arm are truly Markovian or i.i.d. The identification is done using a (time-dependent) \textcolor{black}{total variation} distance-based test between the \textcolor{black}{empirical} estimates of transition probabilities from one state to another.   
%based on a combination of sample transition probabilities between states considered one at a time depending on current states of the arms and sample means. 
The proposed algorithm switches from standard sample mean based KL-UCB to a sample transition probability based KL-UCB when it determines that the arm
reward is Markovian using the test. %\textcolor{black}{total variation} distance-based test.
The sample transition probability based KL-UCB determines the upper confidence bound on the transition probability from the current state of the arm and uses the estimate of the transition probability to the current state while evaluating the mean reward. 
%is Hellinger distance between the estimates of the transition probabilities from one state to another. 
An arm can be represented uniquely by the transition probability matrix (two parameters), however, requiring a single parameter (mean reward) in an i.i.d setting.
%However, in an i.i.d. setting, an arm can be represented uniquely by a single parameter (mean reward). 
Our approach adapts itself from learning two parameters for truly Markovian arms to learning a single parameter for i.i.d. arms, thereby producing low regrets in both cases. An extension to multi-state Markovian arms is also described.\par 
%producing the ``best of both worlds.''\par
2. \textbf{Upper Bound on Regret}:  We derive finite time and asymptotic upper bounds on the regret of TV-KL-UCB (capturing the worst-case regret \cite{lattimore2018bandit}) for both truly Markovian and i.i.d. rewards. We prove that \textcolor{black}{TV}-KL-UCB is order-optimal for Markovian rewards and optimal when all arm rewards are i.i.d. \par
The standard analysis for regret with KL-UCB (either for i.i.d. rewards \cite{garivier2011kl}, or rewards from a Markov transition matrix  specified through a single-parameter exponential family \cite{anantharam1987asymptotically-1,anantharam1987asymptotically-2}) crucially relies on the invertibility of the KL divergence function when applied to \textcolor{black}{empirical} estimates. This allows one to translate concentration guarantees for sums of random variables to one for level crossing of the KL divergence function applied to empirical estimates. In our multi-parameter estimation setting, this invertibility property no longer directly holds. 
%In Lemma~\ref{lemma:2a} (in Supplementary material), 
However, we convert this multi-parameter problem 
into a collection of single parameter problems 
and 
derive concentration bounds for individual single parameter problems
%. Then, concentration bounds obtained from individual problems 
which are combined to obtain an upper bound on the regret of \textcolor{black}{TV}-KL-UCB.  
To derive our bound, we establish that a certain condition on the \textcolor{black}{total variation} distance between estimates of transition probabilities (for using a sample mean based KL-UCB) is satisfied infinitely often iff the arm rewards are i.i.d. over time, which in turn implies that the regret due to choosing the incorrect variant of KL-UCB vanishes asymptotically.

Analytical and experimental results establish that \textcolor{black}{TV}-KL-UCB performs better than the state-of-the-art algorithms \cite{tekin2010online,moulos2020finite} when at least one of the arm rewards is truly Markovian. Moreover, \textcolor{black}{TV}-KL-UCB is optimal \cite{garivier2011kl} when all arm rewards are i.i.d.\par
\textbf{Related Work:}
%In this section, we describe the related works on MABs with i.i.d. and rested Markovian settings.\par
While much of the literature on MAB focuses on arms with i.i.d. rewards, arms with Markovian rewards have not been studied extensively. 
%Related works in the literature on MAB can be classified into following two categories.\par
%\textit{i.i.d. Rewards:}
%Most of the works in the literature on MAB focus on the i.i.d. setting. 
In \cite{lai1985asymptotically}, when the parameter space is dense and can be represented using a single-parameter density function, 
% under assumptions such as denseness of the parameter space and single-parameter reward density, 
a lower bound on the regret is derived.
%In \cite{lai1985asymptotically}, under assumptions such as denseness of the parameter space and single-parameter reward density, a lower bound on the regret is derived. 
The authors in \cite{lai1985asymptotically} also propose policies that asymptotically achieve the lower bound. The work in \cite{lai1985asymptotically} is extended in \cite{anantharam1987asymptotically-1} for the case when multiple arms can be played at a time. A sample mean based index policy in \cite{agrawal1995sample} achieves a logarithmic regret for one-parameter family of distributions. %Another sample mean based index policy is proposed in \cite{auer2002finite} for bandits having i.i.d. rewards with bounded supports.
%The policy proposed in \cite{auer2002finite} is
%considerably simpler than that of \cite{agrawal1995sample} and is applicable to a wider family of reward distributions. Although the regret achieved by the policy is order-optimal asymptotically, the constant associated with the logarithmic regret in \cite{auer2002finite} is larger than that in
%\cite{lai1985asymptotically}. 
The KL-UCB based index policy \cite{garivier2011kl,cappe2013kullback} is asymptotically optimal for Bernoulli rewards and 
%is proposed. 
%The proposed policy in \cite{garivier2011kl} is further extended in \cite{cappe2013kullback} to consider a wider class of reward densities. The KL-UCB policy \cite{garivier2011kl} 
performs better than the UCB policy \cite{auer2002finite}. \par 
%However, there are only few works \cite{tekin2010online,anantharam1987asymptotically-2,tekin2011online,moulos2020finite} in the literature which consider bandits with Markovian rewards. 
%\textit{Markovian Rewards:}
\cite{zheng2016sequential} provides an overview of the state-of-the-art on Markovian bandits. %Contrary to the i.i.d. case where the arms which are not played remaining frozen or not, do not make any difference, in the Markovian case, there is a clear difference between these two cases. When the states of the arms which are not played remain frozen (referred to as \textit{rested} bandit), the states observed in their next selections do not depend on the time between successive plays of the arms. 
%In the \textit{restless} case, states of the arms continue to evolve irrespective of the selections by the player. Since the state of the arm which is not chosen is unobserved, the state of the arm observed at the next play depends on the time interval between two successive plays.  \par 
Under the assumption of single-parameter families of transition matrices, an index policy which matches the corresponding lower bound asymptotically, is proposed in \cite{anantharam1987asymptotically-2}.
In \cite{tekin2010online}, a UCB policy based on the sample mean reward is proposed. Unlike \cite{anantharam1987asymptotically-2}, the analysis in \cite{tekin2010online} is not restricted to single-parameter family of transition matrices. Moreover, since the index calculation is based on the sample mean, the policy is significantly simpler than that of \cite{anantharam1987asymptotically-2}. Although order-optimal, the proposed policy may be worse than that in \cite{anantharam1987asymptotically-2} in terms of the constant. In \cite{moulos2020finite}, a straightforward extension of KL-UCB using sample mean is proposed for the optimal allocation problem involving multiple plays. 
%is considered. The authors propose a straightforward extension of KL-UCB using sample mean. 
%It is proved that the proposed policy is asymptotically optimal under the assumption that the 
%However, 
Similar to \cite{anantharam1987asymptotically-2}, 
%\cite{moulos2020finite} assumes that the 
rewards are generated from Markov chains belonging to a one-parameter exponential family.\par  
\textcolor{black}{Unlike i.i.d. rewards, in many practical applications, temporal variation in reward distribution is present \cite{gittins1974dynamic,whittle1988restless,lazaric2014online}
%, this notion of variation in reward is captured 
ranging from Markovian to general history-dependent rewards and
%Overview of other works which allow
adversarial rewards 
%to be chosen by an adversary 
%(where regret is measured with respect to single best action in hindsight)
%, is described in  
\cite{foster1999regret,cesa2006prediction}. In \cite{besbes2014stochastic}, regret analysis for a large class of MAB problems involving non-stationary rewards is conducted. A relation between the rate of variation in reward (limited by a variation budget unlike the unbounded adverserial setup, however including  a large set of non-stationary stochastic MABs) and minimal regret is established in \cite{besbes2014stochastic}.  Low-regret algorithms are developed to learn the optimal policy in Markov Decision Processes (MDPs).  UCRL2 \cite{jaksch2010near} switches between  computing the optimal policy with the largest optimal gain in each phase for the MDP and implementing the policy, based on a state-action visit criteria. \cite{azar2017minimax,fruit2018near} improve upon UCRL2 by demonstrating better finite time behavior and weaker dependence on diameter and state space of the MDP, respectively.}\par
\textbf{\textcolor{black}{Our Contributions:}}
\textcolor{black}{Existing works on rested
Markovian bandits either consider only single-parameter families of transition matrices
(one-parameter exponential family of Markov chains) \cite{anantharam1987asymptotically-2,moulos2020finite} or undertake a sample
mean based approach \cite{tekin2010online,tekin2012online} which is closer in spirit to i.i.d bandits than Markovian
bandits. We, for the first time, consider a sample transition probability based approach
combined with a sample mean based approach in \cite{tekin2010online,tekin2012online} with a significant improvement in
performance than that of \cite{tekin2010online,tekin2012online}.}
Following \cite{tekin2010online} and unlike \cite{anantharam1987asymptotically-2, moulos2020finite}, we do not consider any parameterization on the transition probability matrices. The only assumption we require is that the Markov chains have to be irreducible. The key reasons why our approach achieves lower regret than \cite{tekin2010online} are (i) unlike us (confidence bounds on sample transition probabilities for truly Markovian arms coupled with a sample mean based approach for i.i.d arms), \cite{tekin2010online} always uses sample mean-based indices which may not uniquely represent the arms in a truly Markovian setting, and (ii) usage of KL-UCB (based on the Chernoff’s bound) provides a tighter confidence bound than the Hoeffding’s bound for UCB \cite{lattimore2018bandit}.
\textcolor{black}{
%Although our algorithm uses elements from classical Bernoulli bandit problem \cite{garivier2011kl}, 
Our analysis is different from that of
the Bernoulli bandit \cite{garivier2011kl} in the following way: (i) we determine the contribution of mixing time components of the underlying Markov chains in the
finite-time regret upper bound,
(ii) we prove that the appropriate conditions for the online test are
satisfied infinitely often for both cases (i.i.d and purely Markovian). }
Since we do not consider only an one-parameter exponential family of Markov chains, it is unclear that whether our algorithm is the best possible algorithm for this setting. To show that an algorithm is asymptotically optimal, one needs to obtain matching upper and lower bounds which remains an open problem. However, we note that both our theory and simulations show that our results improve upon the performance of the state-of-the-art algorithms.
%However, unlike \cite{tekin2010online}, we consider identical state spaces for all the arms. \par  %Also, our algorithm requires the knowledge of the present states of the arms for decision making.\par 
%Our contributions can be summarized as follows.
%\begin{itemize}
 %   \item 
%\end{itemize}
%The rest of the paper is organized as follows.
%The system model and the problem formulation is described in Section \ref{sec:sysmod}. Section \ref{sec:lb} establishes the lower bound on the regret of a uniformly good policy. We describe the index based policy and derive the upper bound on its regret in Section \ref{sec:ub}. Numerical results are presented in Section \ref{sec:result}, followed by discussions in Section \ref{sec:discuss}. We conclude in Section \ref{sec:conclude}.
\section{ Problem Formulation \& Preliminaries}\label{sec:sysmod}
We assume that we have $K$ arms. The reward from each arm is modeled as a two-state irreducible Markov chain with a state space $\mathcal{S}=\{0,1\}$. Let the reward obtained when \textcolor{black}{an} arm %$i$ 
which is in state $s$, is played be denoted by \textcolor{black}{$r(s)=s$ (say).}
%$r(s,i)$. 
%We assume that $r(s)=s$}.
%$r(s,i)=s$. 
%Hence, the reward function is independent of arm $i$. 
Let the transition probability from state $s=0 (s=1)$ to state $s=1 (s=0)$ 
%and from state $s=1$ to state $s=0$ 
of arm $i$ be denoted by $p_{01}^i (p_{10}^i)$. 
%and $p_{10}^i$, respectively. 
Let the stationary distribution of arm $i$ be ${\pi}_i=(\pi_i(s),s \in \mathcal{S})$. Therefore, the mean reward of arm $i$ ($\mu_i$, say) is
%\begin{equation*}
   $ \mu_i=\sum_{s\in \mathcal{S}}s\pi_i(s)=\pi_i(1)$.
%\end{equation*}
Let $\mu^*(=\max_{1\le i \le K} \mu_i)$ denote the mean reward of the best arm. W.l.o.g, we assume that $\mu^*=\mu_1$. %In other words, arm $1$ has the highest average reward.
Let the suboptimality gap of arm $i$ be $\Delta_i=\mu_1-\mu_i$. Let $R_{\alpha}(n)$ denote the regret of policy $\alpha$  up to time $n$. %which is the difference between the expected total reward associated with the ideal policy (playing the best arm always) and policy $\alpha$. 
\textcolor{black}{Hence,
%\begin{equation*}
%    R_{\alpha}(n)=n\mu_1-\mathbb{E}_{\alpha}[\sum_{t=1}^n r(s(\alpha(t)),\alpha(t))],
%\end{equation*}
%\begin{equation*}
 $   R_{\alpha}(n)=n\mu_1-\mathbb{E}_{\alpha}[\sum_{t=1}^n r(s(\alpha(t)))]$,}
%\end{equation*}}
where $\alpha(t)$ denotes the arm (which is in state $s(\alpha(t))$, say) selected at time $t$ by policy $\alpha$. 
A policy $\alpha$ is said to be \textit{uniformly good} if $R_{\alpha}(n)=o(n^{\beta})$ for every $\beta>0$. 
%To measure the dissimilarity between two Markov chains, we introduce the KL distance between two Markov chains as follows. 
For two arms $i$ and $j$ with associated Markov chains $M_i$ and $M_j$ (say), the KL distance between them is 
%\begin{equation*}
    $I(M_i||M_j)= \pi_i(0) D(p_{01}^i||p_{01}^j)+\pi_i(1) D(p_{10}^i||p_{10}^j)$,
%\end{equation*}
where $D(A||B)=A \log \frac{A}{B}+(1-A) \log \frac{1-A}{1-B}$.
In \cite{anantharam1987asymptotically-2}, a lower bound on the regret of any uniformly good policy is derived. For a uniformly good policy $\alpha$, 
%\begin{equation*}
$\liminf \limits_{n \to \infty} \frac{R_{\alpha}(n)}{\log n}\ge \sum_{i=2}^K \frac{\Delta_i}{I(M_i||M_1)}$.
%\end{equation*}
In \cite{anantharam1987asymptotically-2}, the above lower bound is derived when the transition functions belong to a single-parameter family. It is straightforward to show that the lower bound holds more generally but
since the proof techniques are standard, we present the proof in Appendix \ref{app:a}. 
We aim to determine an upper bound on $R_{\alpha}(n)$ as a function of $n$ for a given policy $\alpha$.
\begin{remark}
\textcolor{black}{The model considered by us is a first step towards capturing the temporal correlation among decisions in a MAB
problem. More complicated models involving more than two states would require us
to estimate more parameters, leading to a
%Such a model is more complicated to analyze as
%many parameters need to be estimated. The complexity associated with such a model
higher complexity
which can be determined using statistical learning theory.
Because of the trade-off between model complexity and
usefulness, we take into account a simple Markovian model with two states.} The extension to multi-state Markovian model is described in Section \ref{sec:multistate}.
%}
\end{remark}
\begin{remark}
%\textcolor{black}{
Ideally, to capture the correlation among decisions, one requires a
Hidden Markov Model (HMM) where the state of the underlying
Markov chain is not observable, and only the reward obtained by
choosing an arm is observable. 
%Such a model is more complicated to analyze as
%many parameters need to be estimated. The complexity associated with such a model
%which can be determined using statistical learning theory, is also very high. 
%In the HMM of
%the MAB, 
The cardinality of the state space of the underlying HMM is typically unknown and hence, acts as
an additional hyperparameter to be estimated. 

\end{remark}
\section{TV-KL-UCB Algorithm \& Regret Upper Bound}\label{sec:ub}
%Before presenting the \textcolor{black}{TV}-KL-UCB algorithm, we briefly describe the motivation behind it.
When the rewards of an arm are i.i.d., the arm can be represented uniquely using the mean reward. However, in the truly Markovian reward setting, arm $i$ can be described uniquely by $p_{01}^i$ and $p_{10}^i$. Using a variation (we call it KL-UCB-MC, see Appendix \ref{app:klucbmc}) of standard KL-UCB for i.i.d. rewards \cite{garivier2011kl,cappe2013kullback}, one can obtain a logarithmic regret. The main idea is to obtain a confidence bound on the estimate of $p_{01}^i$ (estimate of $p_{10}^i$) and use the estimate of $p_{10}^i$ (estimate of $p_{01}^i$) in state $0$ (state $1$) of arm $i$ using KL-UCB. For purely Markovian arms, the resulting regret is smaller than the regret of the algorithm in \cite{tekin2010online}. However, KL-UCB-MC results in large constants in the regret for i.i.d. rewards. 
%We present the details of this approach in Supplementary material.
Hence, we introduce the \textcolor{black}{TV}-KL-UCB algorithm which improves over KL-UCB-MC and performs well in both truly Markovian and i.i.d. settings. 
%Details of KL-UCB-MC is presented in Appendix \ref{app:klucbmc}.
\subsection{Total Variation KL-UCB Algorithm}
\begin{algorithm}
\caption{\textcolor{black}{Total Variation} KL-UCB Algorithm (\textcolor{black}{TV}-KL-UCB)}\label{algo:1}
\label{NCalgorithm}
\begin{algorithmic}[1]
\small
\State Input $K$ (number of arms).
\State Choose each arm once.
\While{TRUE} 
%\If {
%$H^2({\hat{p}}_{01}^i(t-1)||1-{\hat{p}}_{10}^i(t-1))>\frac{1}{{(t-1)}^{1/4}}$  (\textbf{procedure} STP\_PHASE)}

\If {
%$\textcolor{black}{TV}({\hat{p}}_{01}^i(t-1)||1-{\hat{p}}_{10}^i(t-1))>\frac{1}{{(t-1)}^{1/4}}$  
$\textcolor{black}{(|{\hat{p}}_{01}^i(t-1)+{\hat{p}}_{10}^i(t-1)-1|)}>\frac{1}{{(t-1)}^{1/4}}$  (\textbf{procedure} STP\_PHASE)}
 \If { (state of arm $i=0$)}
\begin{equation}\label{eq:state0}
\begin{aligned}
U_i= \sup\{\frac{\tilde{p}}{\tilde{p}+{\hat{p}}_{10}^i(t-1)}:D({\hat{p}}_{01}^i(t-1)||\tilde{p})\le\frac{\log f(t)}{T_i(t-1)}\}.
  \end{aligned}  
\end{equation}
\Else
%\State \textbf{\quad Else} 
\begin{equation}\label{eq:state1}
\begin{aligned}
 U_i= \sup\{\frac{{\hat{p}}_{01}^i(t-1)}{{\hat{p}}_{01}^i(t-1)+\tilde{q}}:D({\hat{p}}_{10}^i(t-1)||\tilde{q})\le\frac{\log f(t)}{T_i(t-1)}\}.
  \end{aligned}  
\end{equation}
\EndIf
%\EndProcedure
%\State \textbf{Else} 
\Else { (\textbf{procedure} SM\_PHASE)}
%\State {\textbf{procedure} SM\_PHASE}
%\Procedure{SM\_PHASE}{}
%\State 
\begin{equation}\label{eq:stateless}
\begin{aligned}
U_i=\sup\{\tilde{\mu}\in [{\hat{\mu}}^i(t-1),1]:D({\hat{\mu}}^i(t-1)||\tilde{\mu})\le\frac{\log f(t)}{T_i(t-1)}\}.
\end{aligned}
\end{equation}
\EndIf
%\EndProcedure
%\EndIf
\State Choose $A_t=\arg \max \limits_i U_i$.
%\EndIf
%where $f(t)=1+t  \log^2 (t)$.
\EndWhile
\end{algorithmic}
\end{algorithm}
%We propose an algorithm 
The proposed \textcolor{black}{TV}-KL-UCB which is based on sample transition probabilities between different states and sample mean, is motivated from the KL-UCB algorithm \cite{garivier2011kl}. 
Let $T_{i,j}(t)$ denote the number of times arm $i$ is selected while it was in state $j$, till time $t$. We \textcolor{black}{define}
$T_i(t):=\sum_{j\in \mathcal{S}} T_{i,j}(t)$.
We further assume that 
%$\hat{{\mu}_i}(t)$,
${\hat{p}}_{01}^i(t)$, ${\hat{p}}_{10}^i(t)$ and ${\hat{\mu}}^i(t)$
denote the \textcolor{black}{empirical} estimate of $p_{01}^i$, \textcolor{black}{empirical} estimate of $p_{10}^i$ and sample mean of arm $i$ at time $t$, respectively (\textcolor{black}{see Appendix \ref{app:theo}}). 
We compute the 
%square of the Hellinger 
\textcolor{black}{total variation}
distance (\textcolor{black}{$TV(\cdot||\cdot)$, say}) between
${\hat{p}}_{01}^i(t-1)$ and $1-{\hat{p}}_{10}^i(t-1)$ of arm $i$ (\textcolor{black}{which is $|{\hat{p}}_{01}^i(t-1)+{\hat{p}}_{10}^i(t-1)-1|$}). If it is greater than $\frac{1}{{(t-1)}^{1/4}}$ (Line 4), then
based on the current state of the arm, we calculate the index of the arm (STP\_PHASE). If arm $i$ is in state $0$, then the index is calculated using the confidence bound on the estimate of $p_{01}^i$ at time $t$ (Line 5), else using the confidence bound on the estimate of $p_{10}^i$ (Line 6). \textcolor{black}{$U_i$ is an overestimate of $\mu_i$ with high probability because of the considered confidence bounds (Lines 5-6). Therefore, a suboptimal arm cannot be played too often as $U_1$ overestimates $\mu_1$ and a suboptimal arm $i$ can be played only if $U_i>U_1$. The index computation ensures that an arm is explored more often if it is promising (high $\frac{{\hat{p}}_{01}^i(t-1)}{{\hat{p}}_{01}^i(t-1)+{\hat{p}}_{10}^i(t-1)}$) or under-explored (small $T_i(t)$).}
We use the estimate of the transition probability from the other state while evaluating the index. However, if
\textcolor{black}{$|{\hat{p}}_{01}^i(t-1)+{\hat{p}}_{10}^i(t-1)-1|$} is less than $\frac{1}{{(t-1)}^{1/4}}$ , then the index of the arm is calculated (Line 8) using the confidence bound on the current value of ${\hat{\mu}}^i$ (SM\_PHASE).
Then, we play the arm with the highest index. 
We take $f(t)=1+t \log^2(t)$. 
%However, if the condition is not satisfied, then $U_i$ is calculated using ${\hat{\mu}}^i(t)$. 
The physical interpretation behind the condition on \textcolor{black}{total variation} distance is that the KL-UCB algorithm (which uses sample mean) \cite{garivier2011kl} is asymptotically optimal for i.i.d. Bernoulli arms. When ${{p}}_{10}^i=1-{{p}}_{01}^i$ (i.i.d arm), the condition in Line 8 is satisfied frequently often, and arm $i$ uses  (\ref{eq:stateless}) for index calculation, similar to \cite{garivier2011kl}.
%However, when ${{p}}_{10}^i+{{p}}_{01}^i\neq1$
Else, the condition in Line 4 is met frequently often. We formally establish these statements in Proposition \ref{lemma:Hellinger}. 
\begin{remark}
\textcolor{black}{
%Although the proposed \textcolor{black}{TV}-KL-UCB algorithm is inspired from KL-UCB algorithm,
The key differences between \textcolor{black}{TV}-KL-UCB and KL-UCB \cite{garivier2011kl} are:
(1) the idea of using a confidence bound on one of the transition probabilities while using the raw value of the other one to determine the index of an arm is novel, 
%and has not appeared in the literature of UCB type algorithm and Markovian bandit before,
(2) the design of an online test for detecting whether an arm is purely Markovian or not is
new, allowing us to simultaneously use the sample transition probability and the sample mean to uniquely
represent truly Markovian arms and i.i.d arms, respectively.}
\end{remark}
\begin{remark}
\textcolor{black}{In general, it may not be possible to write
closed-form expressions for (\ref{eq:state0})-(\ref{eq:stateless}) which determine zeroes of convex and
increasing scalar functions \cite{cappe2013kullback}. This can be performed by dichotomic search or Newton
iteration. However, for finitely supported distribution (as considered by us), it can
be reduced to maximization of a linear function on the probability simplex under KL distance
constraints to obtain an explicit computational solution \cite[Appendix~C1]{cappe2013supplement}.}
\end{remark}
\begin{remark}
Instead of KL distance which is a natural choice for representing the similarity between ${\hat{p}}_{10}^i(t)$ and $1-{\hat{p}}_{01}^i(t)$, 
%is   
%$D({\hat{p}}_{10}^i(t)||1-{\hat{p}}_{01}^i(t))$. However, 
we choose the 
\textcolor{black}{total variation} distance. 
%instead of KL distance.
\textcolor{black}{We can also use other distances such as Hellinger distance}
because it permits additive separability of the estimates: ${\hat{p}}_{10}^i(t)$ and ${\hat{p}}_{01}^i(t),$ which in turn enables the use of standard concentration inequalities for the proof of asymptotic upper bound on regret (See Appendix \ref{app:klucbmc}).
%\end{remark}
%\begin{remark}
\textcolor{black}{Note that any distance metric $L(\cdot||\cdot)$ which satisfies $L(X ||Y)\le TV (X ||Y)$ for two probability distributions
$X=(x_1,\cdots,x_k)$ and $Y=(y_1,\cdots,y_k)$ (say, and hence, Proposition \ref{lemma:Hellinger} holds) can be used for the online test. Since both Hellinger distance ($H(X||Y)$, say where $2H^2(X||Y)=\sum_{i=1}^k(\sqrt{x_i}-\sqrt{y_i})^2$) and Jensen-Shannon distance \cite{lin1991divergence} satisfy
the above inequality, similar online test can be designed using them. Morover, results
in \cite[Theorem~I.2]{agrawal2020finite} can be utilized to design similar online test using
KL distance.} 
%Although the finite time regret bound may be different than the one presented in the paper, the asymptotic regret bound will not be affected.}
\end{remark}

\subsection{Regret Upper Bound} 
%We proceed to derive an upper bound on the regret of \textcolor{black}{TV}-KL-UCB. 
%We assume that 
Assuming the regret of \textcolor{black}{TV}-KL-UCB till time $n$ is $R_n$,
the theorem 
provides an upper bound on the regret of \textcolor{black}{TV}-KL-UCB.

\begin{theorem}\label{theo:regret_2}
\textcolor{black}{Regret of Algorithm \ref{algo:1} is bounded by}\\
(a) truly Markovian arms:\\
%\begin{equation*}
%\begin{aligned}
 %&  
\textcolor{black} {$R_n
\le \sum \limits_{i\neq 1} \Delta_i
\big(\tau_{1,i} +\frac{20}{{\epsilon_1}^2}+\frac{2}{{\epsilon}_{p}^2}+\frac{2}{{\epsilon}_{q}^2}+
10\sum \limits_{t=1}^{\infty} (t+1)^3  \exp(-2(t-1){\epsilon_1}^2(p_{01}^i+p_{10}^i)^2)+
10\sum \limits_{t=1}^{\infty} (t+1)^3  \exp(-2(t-1){\epsilon_1}^2(p_{01}^1+p_{10}^1)^2)\big).$}\\
% \end{aligned}
% \end{equation*}
%\begin{equation*}
%\begin{aligned}
\textcolor{black}{$\limsup \limits_{n \to \infty}\frac{R_n}{\log n}\le \sum \limits_{i \neq 1}\Delta_i \big[ 
{\frac{2.\mathbbm{1}\{p_{01}^1 p_{10}^i<{p_{10}^1}\}}{D(p_{01}^i||\frac{p_{01}^1p_{10}^i}{p_{10}^1})}}+ %\frac{1}{D(p_{10}^i||\frac{1-\mu_1}{\mu_1})}\mathbbm{1}_{\{\frac{1-\mu_1}{\mu_1}<1\}}+
\frac{2}{D(p_{10}^i||\frac{p_{10}^1p_{01}^i}{p_{01}^1})}\big].$}\\
%\end{aligned}   
%\end{equation*}
(b) i.i.d. optimal and truly Markovian suboptimal arms:\\
\textcolor{black}{$R_n
\le \sum \limits_{i\neq 1}\Delta_i\big(
\tau_{2,i} +\frac{6}{{\epsilon_1}^2}+\frac{2}{{\epsilon}_{\mu}^2}+
6\sum\limits_{t=1}^{\infty} (t+1)^3  \exp(-2(t-1){\epsilon_1}^2(p_{01}^i+p_{10}^i)^2)+
4\sum\limits_{t=1}^{\infty}\exp(-\frac{2}{9}\sqrt{t})+\sum\limits_{t=1}^{\infty}(t+1)^3\exp(-2\sqrt{t-1})\big).$}\\
% \begin{equation*}
%\begin{aligned}
\textcolor{black}{$\limsup \limits_{n \to \infty}\frac{R_n}{\log n}\le \sum \limits_{i \neq 1}\Delta_i \big[ 
{\frac{\mathbbm{1}\{\mu_1p_{10}^i<{1-\mu_1}\}}{D(p_{01}^i||\frac{\mu_1p_{10}^i}{1-\mu_1})}}+ %\frac{1}{D(p_{10}^i||\frac{1-\mu_1}{\mu_1})}\mathbbm{1}_{\{\frac{1-\mu_1}{\mu_1}<1\}}+
\frac{1}{D(p_{10}^i||\frac{p_{01}^i(1-\mu_1)}{\mu_1})}\big].$}\\
%\end{aligned}   
%\end{equation*}
(c) truly Markovian optimal and i.i.d. suboptimal arms:
\textcolor{black}{$R_n
\le \sum \limits_{i\neq 1}
\Delta_i\big(\tau_{3,i} +\frac{7}{{\epsilon_1}^2}+\frac{2}{{\epsilon}_{p}^2}+\frac{2}{{\epsilon}_{q}^2}+
6\sum\limits_{t=1}^{\infty} (t+1)^3  \exp(-2(t-1){\epsilon_1}^2(p_{01}^1+p_{10}^1)^2)+
4\sum\limits_{t=1}^{\infty}\exp(-\frac{2}{9}\sqrt{t})+\sum\limits_{t=1}^{\infty}(t+1)^3\exp(-2\sqrt{t-1})\big).$}

%\begin{equation*}
%\begin{aligned}
\textcolor{black}{$\limsup \limits_{n \to \infty}\frac{R_n}{\log n}\le \sum \limits_{i \neq 1}
\frac{2\Delta_i}{D(\mu_i||\frac{p_{01}^1}{p_{01}^1+p_{10}^1})}.$}\\
%\end{aligned}   
%\end{equation*}
(d) i.i.d. arms:
\textcolor{black}{$R_n
\le \sum_{i\neq 1} \Delta_i
\big(\tau_{4,i} +\frac{1}{2{\epsilon_1}^2}+\frac{2}{{\epsilon}_{\mu}^2}+
8\sum\limits_{t=1}^{\infty}\exp(-\frac{2}{9}\sqrt{t})+2\sum \limits_{t=1}^{\infty}(t+1)^3\exp(-2\sqrt{t-1})\big).$}

%\begin{equation*}
%\begin{aligned}
\textcolor{black}{$\limsup \limits_{n \to \infty}\frac{R_n}{\log n}\le \sum \limits_{i \neq 1}
\frac{\Delta_i}{D(\mu_i||\mu_1)},$
where values of $\{\tau_{1,i},
\tau_{2,i},\tau_{3,i},\tau_{4,i}\}$ are given in Appendix \ref{app:theo}.}
%\end{aligned}   
%\end{equation*}
\end{theorem}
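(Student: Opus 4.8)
\emph{Plan.} All four parts share one template, differing only in which concentration tool is invoked in each phase. The starting point is the decomposition $R_n=\sum_{i\neq 1}\Delta_i\,\mathbb{E}[T_i(n)]$; because the rewards come from rested Markov chains rather than i.i.d.\ draws, this identity holds only up to a transient correction governed by the mixing behaviour of each chain, and collecting this correction together with the initialization cost of Line~2 yields the additive constants $\tau_{k,i}$ in each finite-time bound. It thus suffices to bound $\mathbb{E}[T_i(n)]$ for each suboptimal $i$. For a fixed such arm, a pull at time $t$ requires $U_i(t)\ge U_1(t)$, and I would split $\{A_t=i\}$ according to (i) the phase selected by the total-variation test of Line~4, (ii) whether the optimal arm is under-indexed, $U_1(t)<\mu_1$, and (iii) whether the suboptimal arm is over-indexed, $U_i(t)>\mu_1$. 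This produces three families of events: a \emph{wrong-phase} family, an \emph{optimal-arm underestimation} family, and a \emph{suboptimal-arm overestimation} family.

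\emph{Controlling the test.} By Proposition~\ref{lemma:Hellinger}, the condition $|\hat p_{01}^i+\hat p_{10}^i-1|>(t-1)^{-1/4}$ holds infinitely often exactly when arm $i$ is truly Markovian and fails infinitely often exactly when it is i.i.d.; quantitatively, choosing the wrong phase at time $t$ forces the empirical pair $(\hat p_{01}^i,\hat p_{10}^i)$ to deviate from $(p_{01}^i,p_{10}^i)$ by either $\tfrac12(t-1)^{-1/4}$ (Markovian arm) or by the fixed gap $\tfrac12|p_{01}^i+p_{10}^i-1|$ once $t$ is large (i.i.d.\ arm). A Hoeffding-type concentration inequality for the state-to-state transition counts then yields convergent series such as $\sum_t(t+1)^3\exp(-2(t-1)\epsilon_1^2(p_{01}^i+p_{10}^i)^2)$, $\sum_t\exp(-\tfrac29\sqrt t)$ and $\sum_t(t+1)^3\exp(-2\sqrt{t-1})$; each converges to a finite constant and so contributes only $o(\log n)$ to the regret.

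\emph{Single-parameter reduction and assembly.} The crux is the STP index, e.g.\ $U_i=\sup\{\tilde p/(\tilde p+\hat p_{10}^i):D(\hat p_{01}^i\|\tilde p)\le\log f(t)/T_i(t-1)\}$ in state $0$, which couples both empirical transition probabilities and is therefore not the image of a single scalar statistic under an invertible KL map. To restore the level-crossing argument of standard KL-UCB I would decouple the parameters: $U_i>\mu_1$ forces either the plugged-in estimate $\hat p_{10}^i$ to deviate from $p_{10}^i$ (a single-parameter event bounded by Markov concentration, giving the $\epsilon_p,\epsilon_q$ terms) or the KL-ball around $\hat p_{01}^i$ to cross the threshold $\tilde p^\star$ solving $\tilde p^\star/(\tilde p^\star+p_{10}^i)=\mu_1$, i.e.\ $\tilde p^\star=p_{01}^1p_{10}^i/p_{10}^1$ in part (a); the usual self-normalized KL-UCB concentration then caps the state-$0$ pulls at $\log f(t)/D(p_{01}^i\|\tilde p^\star)$ and, symmetrically, the state-$1$ pulls at $\log f(t)/D(p_{10}^i\|\tilde q^\star)$. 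The indicator $\mathbbm 1\{p_{01}^1p_{10}^i<p_{10}^1\}$ records that $\tilde p^\star<1$, i.e.\ that the state-$0$ threshold is a valid probability, and when it fails that term drops. Summing the two state contributions gives the sum-of-reciprocals structure; the factor $2$ on each KL term, which appears precisely when the optimal arm is Markovian (parts (a),(c)) and is absent when it is i.i.d.\ (parts (b),(d)), reflects the control of the optimal arm's two-state STP index $U_1$ together with the use of the total visit count $T_i(t-1)$ in the confidence width, and is part of the slack that makes the bound order-optimal but not matching. Collecting the finite exponential sums, the $1/\epsilon^2$ constants and $\tau_{k,i}$ gives each finite-time bound; letting $n\to\infty$, dividing by $\log n$, and sending the slack parameters $\epsilon_1,\epsilon_p,\epsilon_q,\epsilon_\mu\to0$ (with $f(t)=1+t\log^2 t$, so $\log f(t)\sim\log n$) recovers the stated asymptotic constants.

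\emph{Main obstacle.} I expect the single-parameter reduction to be the hard part. For i.i.d.\ KL-UCB the bound is immediate because the index inverts a single empirical mean, so ``$U_i>\mu_1$'' is a clean scalar deviation; here the index is a bivariate ratio, and one must argue that an over-estimate can arise only through one of two independently controllable single-parameter deviations, then re-couple them without double counting. Threading the \emph{Markov-chain} (rather than i.i.d.) concentration through this decoupling---so that mixing times enter only as the additive $\tau_{k,i}$ and the convergent exponential sums, and never as a multiplicative factor on the leading $\log n$ coefficient---is the delicate point.
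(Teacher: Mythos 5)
Your proposal follows the same architecture as the paper's proof: decompose $\{A_t=i\}$ into a wrong-phase event, an optimal-arm underestimation event, and a suboptimal-arm overestimation event; control the total-variation test via Proposition~\ref{lemma:Hellinger}; control the plugged-in transition-probability estimates via Proposition~\ref{lemma:concentratation}; bound the optimal arm's confidence bounds via Proposition~\ref{lemma_H:1_Markovian}; and reduce the bivariate index to single-parameter KL level crossings at the thresholds $p_{01}^1p_{10}^i/p_{10}^1$ and $p_{10}^1p_{01}^i/p_{01}^1$ via Proposition~\ref{lemma:2a}, with the factor of $2$ coming from the two possible states of a Markovian optimal arm. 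This is essentially the paper's argument, and your reading of the indicator condition is the paper's as well.

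Two slips should be corrected in a full write-up. First, you have transposed the two wrong-phase deviation scales. A truly Markovian arm wrongly placed in SM\_PHASE forces the empirical pair to deviate by a \emph{fixed} amount $\epsilon_1<|p_{01}^i+p_{10}^i-1|/2$ once $(t-1)^{-1/4}$ has fallen below the true gap; this is what yields the series $\sum_t(t+1)^3\exp(-2(t-1)\epsilon_1^2(p_{01}^i+p_{10}^i)^2)$. An i.i.d.\ arm wrongly placed in STP\_PHASE forces a deviation of order $\tfrac12(t-1)^{-1/4}$ (the ``fixed gap'' $|p_{01}^i+p_{10}^i-1|$ is zero for such an arm), and this is what yields $\sum_t\exp(-\tfrac29\sqrt t)$ and $\sum_t(t+1)^3\exp(-2\sqrt{t-1})$. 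As written, your quantitative claim for the i.i.d.\ arm is vacuous. Second, $\tau_{k,i}$ is not a mixing or initialization constant: in the paper it is chosen as $t_{k,i}$ plus the sum of the $\log f(n)/D(\cdot\|\cdot)$ pull caps, so it grows like $\log n$ and carries the entire leading term of the regret. Your opening paragraph attributes it to a transient correction, which contradicts your own (correct) description of the $\log f(t)/D$ caps later on and, taken literally, would assert an $O(1)$ finite-time bound. Neither issue changes the architecture; with these two attributions fixed, the plan coincides with the paper's proof.
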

%\textbf{Proof Sketch:}
\begin{proof}
The detailed proof is provided in Appendix \ref{app:theo}.
The proof idea is motivated by
the regret analysis for i.i.d. rewards in \cite{garivier2011kl}. However, in this paper,
unlike \cite{garivier2011kl} which deals with a single parameter (mean reward), we convert a multi-parameter estimation (transition probabilities between different states of a Markov chain) problem into a collection of single-parameter estimation problems and derive corresponding concentration bounds. 
Since the proof is technical, 
%we present details of some intermediate steps in Appendix \ref{app:finite}.
we first briefly outline the key steps for the truly Markovian arm setting (case (a)). Proof for other cases follow in a similar manner. First, we determine the finite time upper bound on the regret of \textcolor{black}{TV}-KL-UCB.
The key steps in the proof are:\\
%\begin{enumerate}
1) We first establish that estimates of transition probabilities of the arms are close to the true transition probabilities. Furthermore, we show that the \textcolor{black}{total variation} distance condition for arm $i$ ($\textcolor{black}{TV}({\hat{p}}_{01}^i(t)||1-{\hat{p}}_{10}^i(t))>\frac{1}{{t}^{1/4}}$) is satisfied infinitely often after a sufficiently large time $\tau_{1,i}$. Proof follows from Propositions \ref{lemma:concentratation} and \ref{lemma:Hellinger} (See Appendix \ref{app:theo}).\\
2) We then establish that the confidence bounds
for the estimates of transition probabilities of the optimal arm are never too far from respective true values. We consider two cases, corresponding to states 0 and 1 (Equations (\ref{eq:state0}) and (\ref{eq:state1})) of the optimal arm, respectively. Specifically, we prove that the expected number of times ${\tilde{p}}_{01}^{*1}(t)$ is less than $p_{01}^1-\epsilon_p$ and ${\tilde{p}}_{10}^{*1}(t)$ is more than $p_{10}^1+\epsilon_q$, are upper bounded by $\frac{2}{{\epsilon_p}^2}$ and $\frac{2}{{\epsilon_q}^2}$, respectively. We use Pinsker's inequality, Chernoff's bound and some algebraic manipulations to complete the proof.\\
%The details of the proof is presented in the supplementary material.\par 
%\end{enumerate}
3)  We prove that when the estimates of transition probabilities of the arms are close to the true values, the appropriate condition on the \textcolor{black}{total variation} distance is satisfied and confidence bounds for the estimates of transition probabilities of the optimal arm are close to the corresponding true values, then
the index associated with a sub-optimal arm is not often much greater than the index of the optimal arm. We consider four cases for different state-arm combinations of a given sub-optimal arm and the optimal arm. We illustrate the proof sketch when the both arms are in state $0$. %Proofs for the other cases follow in a similar manner. 
We prove that $\sum \limits_{t=\tau_{1,i}}^n\mathbb{P}\{\frac{{\tilde{p}}_{01}^{*1}(t)}{{\tilde{p}}_{01}^{*1}(t)+{\hat{p}}_{10}^1(t)}<\frac{{{\tilde{p}}_{01}}^{*i}(t)}{{{\tilde{p}}_{01}}^{*i}(t)+{{\hat{p}}_{10}^i}(t)}\}$ is finite. When the conditions stated above are true, this is equivalent to proving 
that $\sum \limits_{t=\tau_{1,i}}^n\mathbb{P}\{D(p_{01}^i+\epsilon_1||\frac{(p_{01}^1-\epsilon_p) (p_{10}^i-\epsilon_1)}{(p_{10}^1+\epsilon_1)})\le \frac{\log f(n)}{t}\}$ is finite. 
The rest of the
proof uses the monotonicity property of $D(x||y)$ for $x<y$. We complete the proof by choosing an appropriate $\tau_{1,i}$ as a function of $n$.
Then, we derive an asymptotic upper bound on the regret of \textcolor{black}{TV}-KL-UCB by selecting  $\epsilon_1=\epsilon_p=\epsilon_q=\log^{-1/4} (n)$. 
\qed  
\end{proof}
%\begin{remark}
Multiplicative factors  $\mathbbm{1}\{p_{01}^1 p_{10}^i<{p_{10}^1}\}$ and $\mathbbm{1}\{\mu_1p_{10}^i<{1-\mu_1}\}$ are required because the corresponding $D(.||.)$ do not exist when $\frac{p_{01}^1 p_{10}^i}{p_{10}^1}$ and
$\frac{\mu_1p_{10}^i}{1-\mu_1}$
become more than or equal to $1$.
%\end{remark}
%\begin{remark}
Note that the upper bound on the regret of \textcolor{black}{TV}-KL-UCB matches the lower bound \cite{garivier2011kl} when all arm rewards are i.i.d. Therefore, our algorithm is optimal for i.i.d. arm rewards.
%\end{remark}
The next theorem analytically verifies that our bounds are better than the state-of-the-art for a large class of problem parameters. The proof is given in Appendix \ref{app:234}. The cases not covered in the theorem are explored in simulations later.
\begin{theorem}\label{theo:eigen}
Let the eigenvalue gap of $i^{\rm{th}}$ arm be $\sigma_i$. The
asymptotic upper bound on the regret of \textcolor{black}{TV}-KL-UCB is smaller than that of \cite{tekin2010online} (we call it UCB-SM)
%\begin{enumerate}
%    \item 
%The
%asymptotic upper bound on the regret of \textcolor{black}{TV}-KL-UCB is smaller than that of \cite{tekin2010online} (we call it UCB-SM) 
always (when $\min \limits_{i}\sigma_i\ge \frac{1}{1440}$) for i.i.d (truly Markovian suboptimal) arms. %respectively.
%2) i.i.d. suboptimal arms:
%The
%asymptotic upper bound on the regret of \textcolor{black}{TV}-KL-UCB is always smaller than that of UCB-SM.
\end{theorem}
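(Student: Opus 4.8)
The plan is to compare, term by term over the suboptimal arms $i \neq 1$, the asymptotic coefficient $\limsup_{n} R_n/\log n$ of \textcolor{black}{TV}-KL-UCB given in Theorem \ref{theo:regret_2} against the corresponding coefficient of UCB-SM from \cite{tekin2010online}. Since both regret bounds are additive over the suboptimal arms, it suffices to establish the inequality one arm at a time, and the hypothesis $\min_i \sigma_i \ge \frac{1}{1440}$ then guarantees it simultaneously for every suboptimal arm. The only analytic tool required on the \textcolor{black}{TV}-KL-UCB side is Pinsker's inequality $D(x||y) \ge 2(x-y)^2$, which I would use to convert the (possibly small) KL divergences in the denominators of Theorem \ref{theo:regret_2} into explicit lower bounds in terms of $\Delta_i$ and $\sigma_i$.

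For the i.i.d. case (Theorem \ref{theo:regret_2}(d)) the coefficient is $\sum_{i\neq1}\Delta_i/D(\mu_i||\mu_1)$, and Pinsker gives $D(\mu_i||\mu_1)\ge 2\Delta_i^2$, so each summand is at most $\frac{1}{2\Delta_i}$. An i.i.d. arm satisfies $p_{01}^i+p_{10}^i=1$, hence its eigenvalue gap is $\sigma_i=1$, the largest possible value; substituting $\sigma_i=1$ into the UCB-SM coefficient collapses it to a universal constant divided by $\Delta_i$, which dominates $\frac{1}{2\Delta_i}$. This is why the i.i.d. statement needs no side condition and holds \emph{always}.

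For truly Markovian suboptimal arms with an i.i.d. optimal arm (Theorem \ref{theo:regret_2}(b)), I would first record the elementary identities, obtained by writing $p_{01}^i=\mu_i\sigma_i$, $p_{10}^i=(1-\mu_i)\sigma_i$ and $\Delta_i=\mu_1-\mu_i$, that $p_{01}^i-\frac{\mu_1 p_{10}^i}{1-\mu_1}=-\frac{\sigma_i\Delta_i}{1-\mu_1}$ and $p_{10}^i-\frac{p_{01}^i(1-\mu_1)}{\mu_1}=\frac{\sigma_i\Delta_i}{\mu_1}$. Pinsker then lower-bounds the two KL denominators by $\frac{2\sigma_i^2\Delta_i^2}{(1-\mu_1)^2}$ and $\frac{2\sigma_i^2\Delta_i^2}{\mu_1^2}$, so after multiplying by $\Delta_i$ and using $\mu_1^2+(1-\mu_1)^2\le1$ the entire case-(b) coefficient is at most $\frac{1}{2\sigma_i^2\Delta_i}$; note that the indicator $\mathbbm{1}\{\mu_1 p_{10}^i<1-\mu_1\}$ can only delete the first term, which makes this bound slacker, so it holds unconditionally. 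Comparing $\frac{1}{2\sigma_i^2\Delta_i}$ with the UCB-SM coefficient, which scales like a constant times $\frac{1}{\sigma_i\Delta_i}$, reduces the desired inequality to a single threshold condition on $\sigma_i$, namely $\sigma_i\ge\frac{1}{1440}$.

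The main obstacle is not the \textcolor{black}{TV}-KL-UCB side—the two algebraic identities and the Pinsker bounds are routine—but pinning down the exact asymptotic coefficient of UCB-SM from \cite{tekin2010online} together with its precise dependence on the eigenvalue gap $\sigma_i$ through the exploration constant used there. One must track how the mixing-time/spectral quantities enter the UCB-SM confidence bonus, simplify the resulting $\log n$-coefficient to the form $(\mathrm{const})/(\sigma_i\Delta_i)$, and check that the numerical constant produces exactly the stated threshold. The complementary regime of very slowly mixing chains, $\sigma_i<\frac{1}{1440}$, is precisely where this term-by-term comparison can break down, and it is the case the theorem deliberately excludes and defers to the simulations.
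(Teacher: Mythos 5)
Your proposal is correct and follows essentially the same route as the paper's proof: Pinsker's inequality applied to the KL denominators in Theorem \ref{theo:regret_2}, the algebraic reduction of the resulting gaps to $\sigma_i\Delta_i$ divided by stationary probabilities (the paper uses $\Delta_i=\frac{p_{01}^ip_{10}^1-p_{01}^1p_{10}^i}{(p_{01}^i+p_{10}^i)(p_{01}^1+p_{10}^1)}$, which specializes to your two identities when the optimal arm is i.i.d.), and a term-by-term comparison against the UCB-SM coefficient $\frac{4L}{\Delta_i}$ with $L=\frac{360}{\min_i\sigma_i}$, yielding the same $\frac{1}{1440}$ threshold. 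The only substantive difference is the all-i.i.d. case, where the paper simply notes that the TV-KL-UCB bound matches the universal lower bound and so cannot exceed any valid upper bound for UCB-SM, whereas you bound $\Delta_i/D(\mu_i||\mu_1)\le\frac{1}{2\Delta_i}$ directly via Pinsker and compare with $\frac{1440}{\Delta_i}$ at $\sigma_i=1$; both arguments are valid, yours being slightly more self-contained.
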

%\begin{proof}
%Proof is given in Appendix \ref{app:eigen}.\qed
%\end{proof}
\subsection{Extension to Multi-state Markovian Model }\label{sec:multistate}
In this section, we sketch the modifications required to take into account finite-state space ($\mathcal{X}$, say with state space $\{0,1,\cdots,x\}$ and $|\mathcal{X}|>2$) Markov chain to represent the arm rewards. Let the transition probability from state $a \in \mathcal{X}$ to state $b \in \mathcal{X}$ for arm $i$ be $p^i_{ab}$. Recall that the mean reward and stationary distribution associated with arm $i$ are $\mu_i$ and $\pi_i$, respectively. %Also, the reward obtained by playing an arm in state $s$ is $r(s)=s$. 
Now, for a Markov chain, a close form solution for $\pi_i$ can be obtained by solving the global balance equation corresponding to arm $i$. Let 
$$ \boldsymbol{p}^i_s=[p^i_{s0},p^i_{s1},
\cdots,p^i_{sx}],$$ $$ \boldsymbol{\hat{p}}^i_s(t)=[\hat{p}^i_{s0}(t),\hat{p}^i_{s1}(t),
\cdots,\hat{p}^i_{sx}(t)],$$ and
\begin{equation*}
\begin{split}
&\pi_i(s)=f_s(\boldsymbol{p}^i_0,\boldsymbol{p}^i_1,\cdots, \boldsymbol{p}^i_x). 
 \end{split}
\end{equation*}
We have, $\mu_i=\sum_{s \in \mathcal{X}} r(s) \pi_i(s)$.
The complete description of the resulting algorithm for the finite state space Markov chain with multiple states (we call it m-TV-KL-UCB) is given in Algorithm \ref{algo:multi}.
Similar to Algorithm \ref{algo:1}, the algorithm is divided into two phases, viz., STP\_PHASE and SM\_PHASE, depending on whether the condition on the total variation distance is satisfied or not. The condition on the total variation distance (for the online test) for arm $i$ is $$TV(\boldsymbol{\hat{p}}^i_s(t)||\boldsymbol{\hat{p}}^i_{s'}(t))<\frac{1}{t^{1/4}}, \forall s,s' \in \{0,1,\cdots,x\},s\neq s'.$$ When this condition is met ($flag=0$ in Algorithm \ref{algo:multi}) at time $t$, then we conclude that the arm $i$ is i.i.d and hence, choose the SM\_PHASE.  Else, STP\_PHASE is chosen. 
The motivation behind this condition is as follows. Arm $i$ is i.i.d. (a special 
case of Markovian arm) if
$\boldsymbol{p}^i_s\,{\buildrel d \over =}\,\boldsymbol{p}^i_{s'}, \forall s,s'\in \mathcal{X}, s\neq s'.$
If the condition (Line 12) is satisfied, then the algorithm enters the SM\_PHASE, and the set of operations remains identical to that of Algorithm 1. 
However, if this condition is not satisfied, then the algorithm enters the STP\_PHASE as the arm is likely to be a truly Markovian arm. Now, if the current state of the Markov chain is $s$, then we compute $U_i$ in the following way. 
 $$U_i=\sup \{\tilde{\mu}_{i,s}(t):D(\boldsymbol{\hat{p}}^i_{s}(t-1)||\tilde{\boldsymbol{p}}^i_{s})\le \frac{\log f(t)}{T_i(t-1)}\},$$
 where $\tilde{\mu}_{i,s}(t)=\sum_{s \in \mathcal{X}} r(s) f_s(\boldsymbol{\hat{p}}^i_0(t-1),\cdots, \tilde{\boldsymbol{p}}^i_s,\cdots, \boldsymbol{\hat{p}}^i_x(t-1))$.
The rest of the algorithm is identical to Algorithm \ref{algo:1}. 
\begin{algorithm}
\caption{m-TV-KL-UCB }\label{algo:multi}
\label{NCalgorithm}
\begin{algorithmic}[1]
\small
%\State Input $K$ (number of arms).
\State Choose each arm once.
\While{TRUE} 
\State Set $flag\leftarrow 0$.
%\If {
%$H^2({\hat{p}}_{01}^i(t-1)||1-{\hat{p}}_{10}^i(t-1))>\frac{1}{{(t-1)}^{1/4}}$  (\textbf{procedure} STP\_PHASE)}
\For {$s,s'=0$ to $(x-1)$, $s\neq s'$}
\If {$TV(\boldsymbol{\hat{p}}^i_s(t-1)||\boldsymbol{\hat{p}}^i_{s'}(t-1))>\frac{1}{(t-1)^{1/4}}$} 
\State $flag\leftarrow 1$.
\EndIf
\EndFor
\If {
$(flag==1)$  (\textbf{procedure} STP\_PHASE)}
 \If { (state of arm $i=s$)}
\begin{equation}\label{eq:state_s}
\begin{aligned}
U_i= \sup \{\tilde{\mu}_{i,s}(t):D(\boldsymbol{\hat{p}}^i_{s}(t-1)||\tilde{\boldsymbol{p}}^i_{s})\le \frac{\log f(t)}{T_i(t-1)}\}.
  \end{aligned}  
\end{equation}
%\Else
%\State \textbf{\quad Else} 
%\begin{equation}\label{eq:state1}
%\begin{aligned}
% U_i= \sup\{\frac{{\hat{p}}_{01}^i(t-1)}{{\hat{p}}_{01}^i(t-1)+\tilde{q}}:D({\hat{p}}_{10}^i(t-1)||\tilde{q})\le\frac{\log f(t)}{T_i(t-1)}\}.
%  \end{aligned}  
%\end{equation}
\EndIf
%\EndProcedure
%\State \textbf{Else} 
\Else { (\textbf{procedure} SM\_PHASE)}
%\State {\textbf{procedure} SM\_PHASE}
%\Procedure{SM\_PHASE}{}
%\State 
\begin{equation}\label{eq:stateless_x}
\begin{aligned}
U_i=\sup\{\tilde{\mu}\in [{\hat{\mu}}^i(t-1),1]:D({\hat{\mu}}^i(t-1)||\tilde{\mu})\le\frac{\log f(t)}{T_i(t-1)}\}.
\end{aligned}
\end{equation}
\EndIf
%\EndProcedure
%\EndIf
\State Choose $A_t=\arg \max \limits_i U_i$.
%\EndIf
%where $f(t)=1+t  \log^2 (t)$.
\EndWhile
\end{algorithmic}
\end{algorithm}
\section{Experimental Evaluation }\label{sec:result}
\begin{figure*}[t!]
    \centering
    \begin{subfigure}[t]{0.3\textwidth}
        %\centering
       % \includegraphics[width=\textwidth]{pic/bfvoice_1.eps}
       \includegraphics[width=\textwidth]{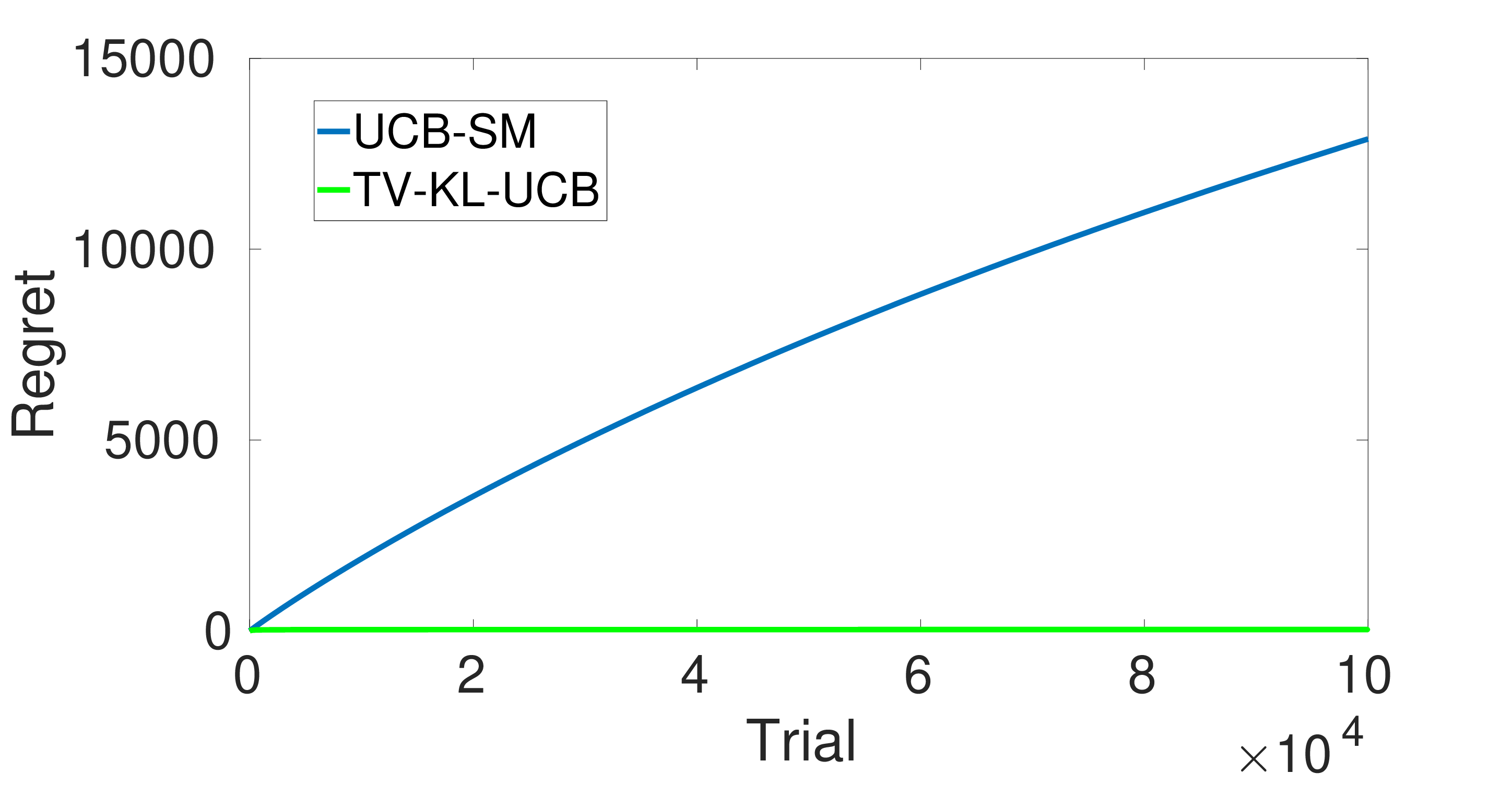}
        \caption{}
        \label{fig:scenario_1_1}
        \end{subfigure}%
    ~
    \begin{subfigure}[t]{0.3\textwidth}
        %\centering
       % \includegraphics[width=\textwidth]{pic/bfvoice_1.eps}
       \includegraphics[width=\textwidth]{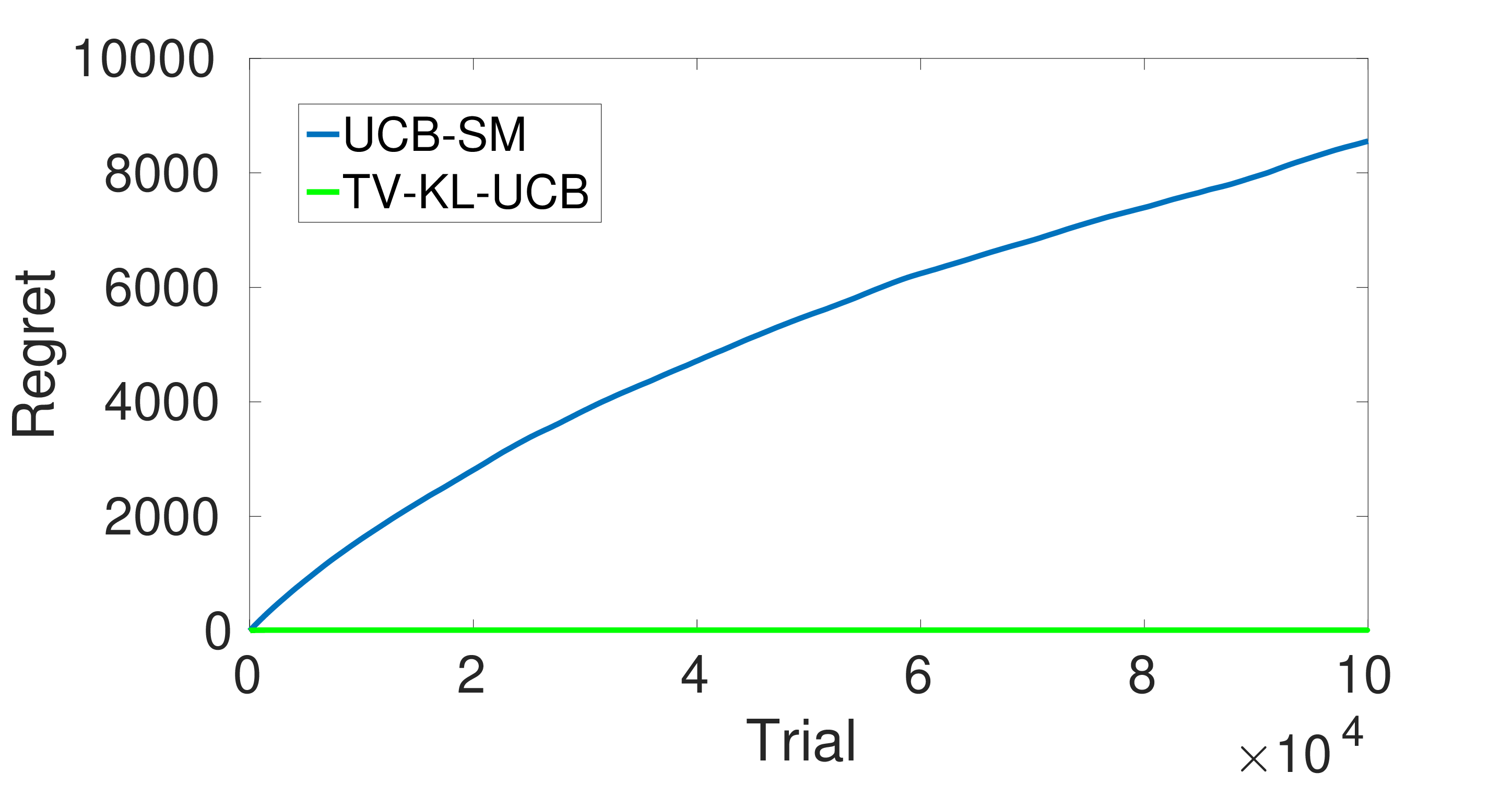}
        \caption{}
        \label{fig:scenario_2_1}
        \end{subfigure}%
    ~
    \begin{subfigure}[t]{0.3\textwidth}
        %\centering
       % \includegraphics[width=\textwidth]{pic/ipv4voice_1.eps}
       \includegraphics[width=\textwidth]{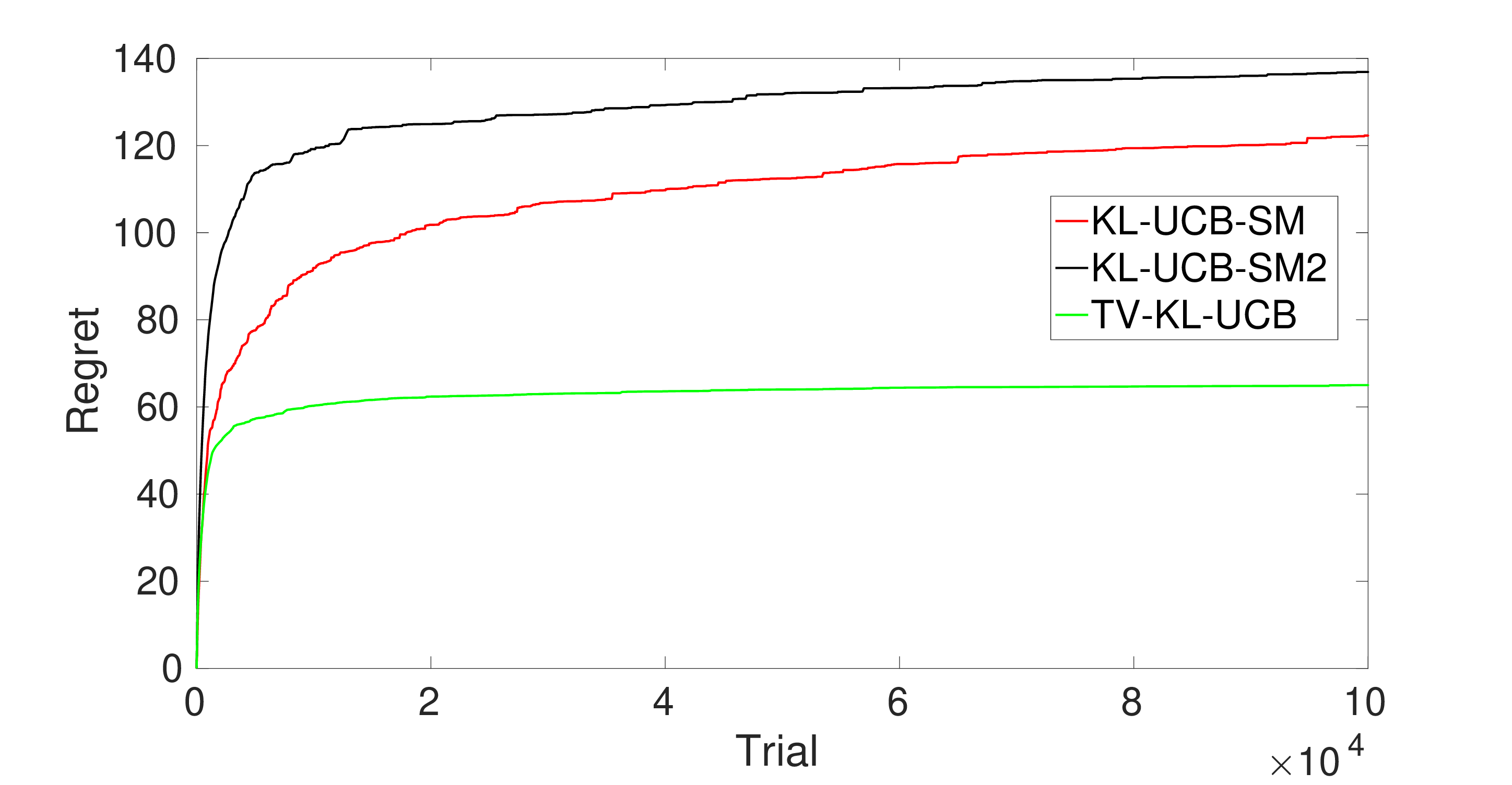}
        \caption{}
        \label{fig:scenario_1_2}
        \end{subfigure}
%\caption{{Regret of different algorithms as a function of time (Scenario 1).}}
%\end{figure*}
%\begin{figure*}[t!]
%    \centering
\\
    \begin{subfigure}[t]{0.3\textwidth}
        %\centering
       % \includegraphics[width=\textwidth]{pic/ipv4voice_1.eps}
      \includegraphics[width=\textwidth]{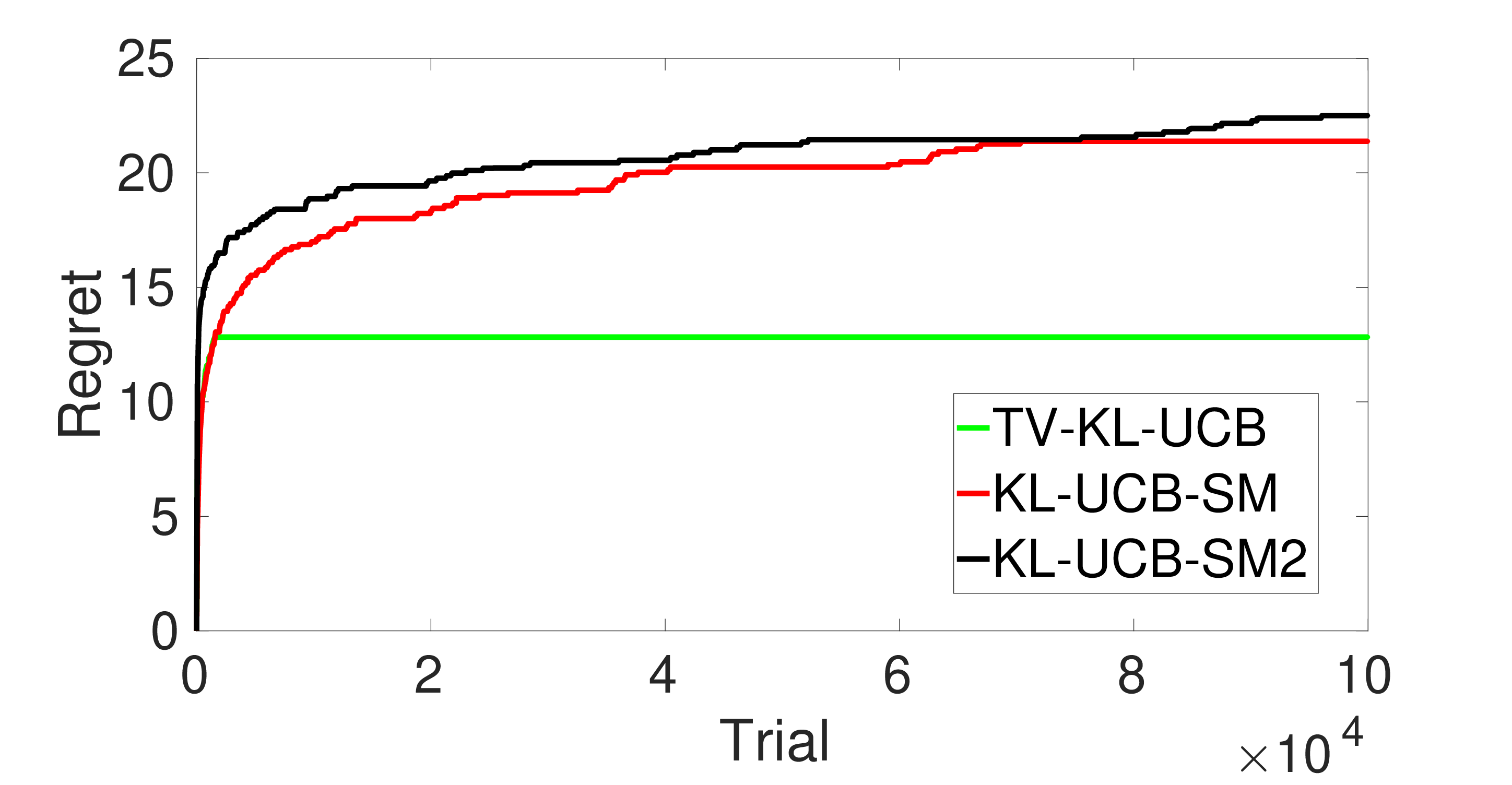}
        \caption{}
        \label{fig:scenario_2_2}
        \end{subfigure}
        ~  \begin{subfigure}[t]{0.3\textwidth}
       \includegraphics[width=\textwidth]{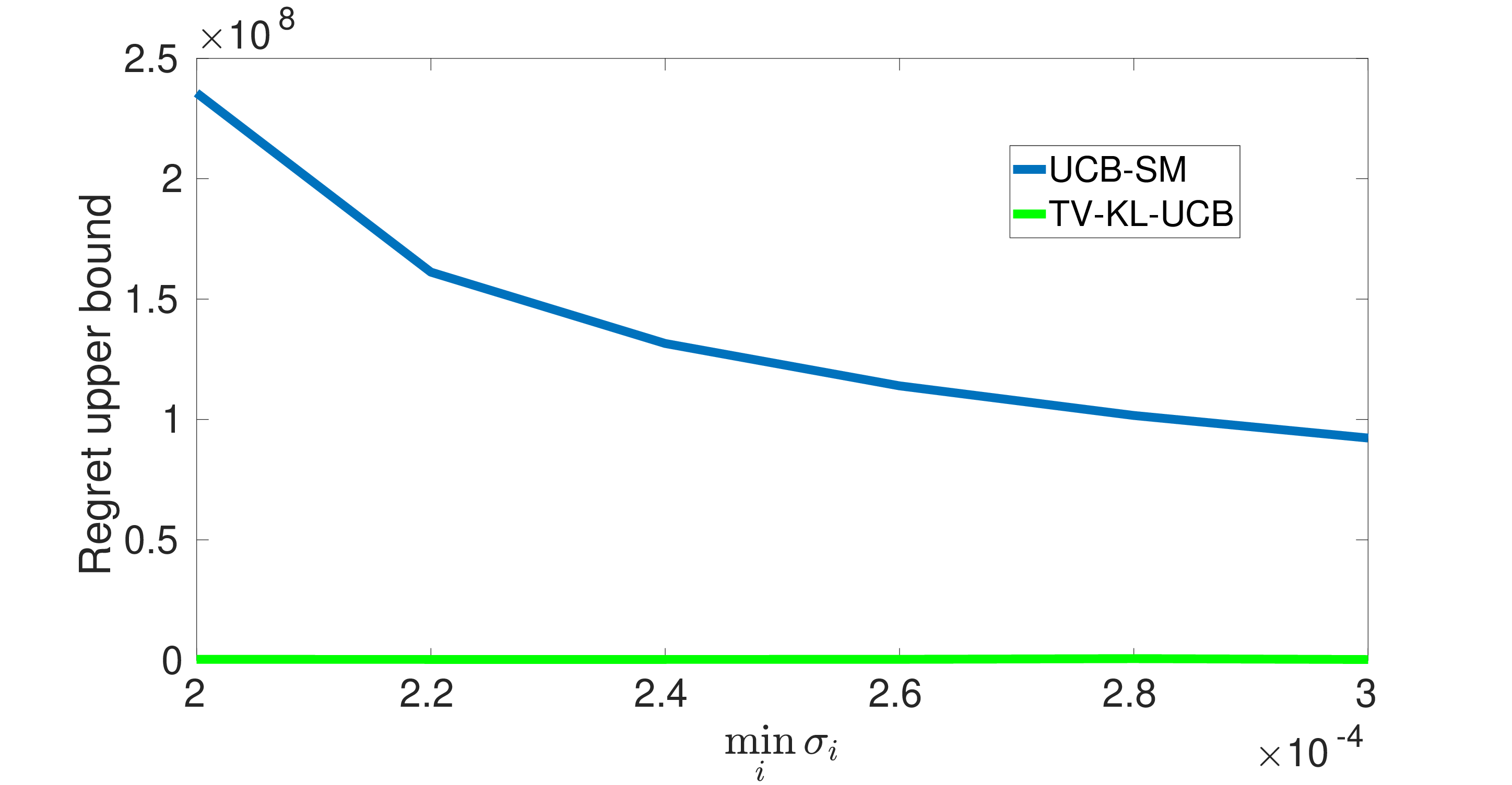}
\caption{}
\label{fig:asymptotic_1_1}
\end{subfigure}
\caption{Comparison of regrets of different algorithms: UCB-SM and \textcolor{black}{TV}-KL-UCB: ((a) scenario 1, (b) scenario 2);
KL-UCB-SM, KL-UCB-SM2 and  \textcolor{black}{TV}-KL-UCB: ((c) scenario 1, (d) scenario 2); (e) regret upper bounds on \textcolor{black}{TV}-KL-UCB and UCB-SM.}
\end{figure*}

\begin{figure*}[t!]
    \centering
    \begin{subfigure}[t]{0.3\textwidth}
        %\centering
       % \includegraphics[width=\textwidth]{pic/bfvoice_1.eps}
       \includegraphics[width=\textwidth]{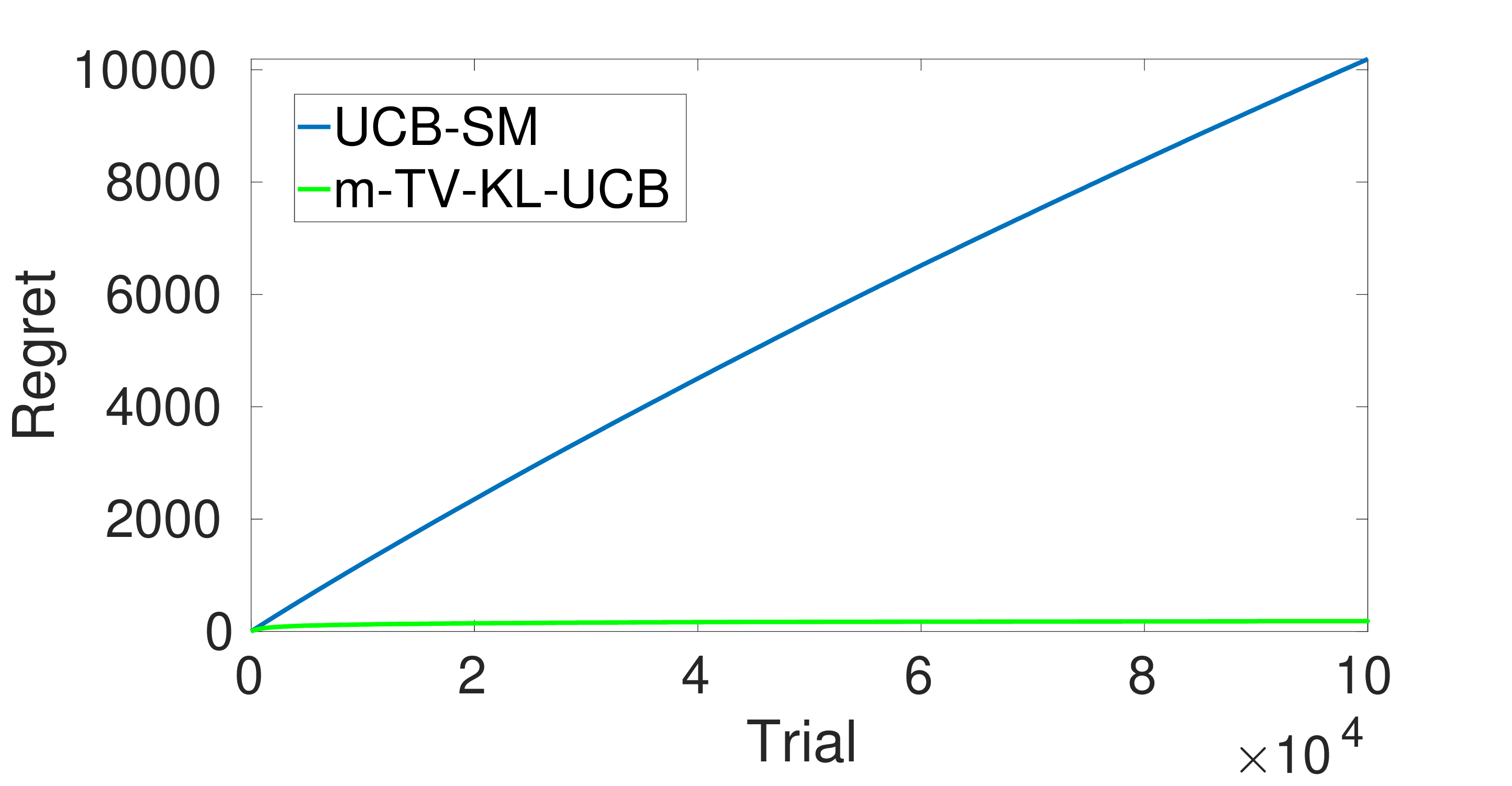}
        \caption{}
        \label{fig:scenario_3_1}
        \end{subfigure}%
    ~
    \begin{subfigure}[t]{0.3\textwidth}
        %\centering
       % \includegraphics[width=\textwidth]{pic/bfvoice_1.eps}
       \includegraphics[width=\textwidth]{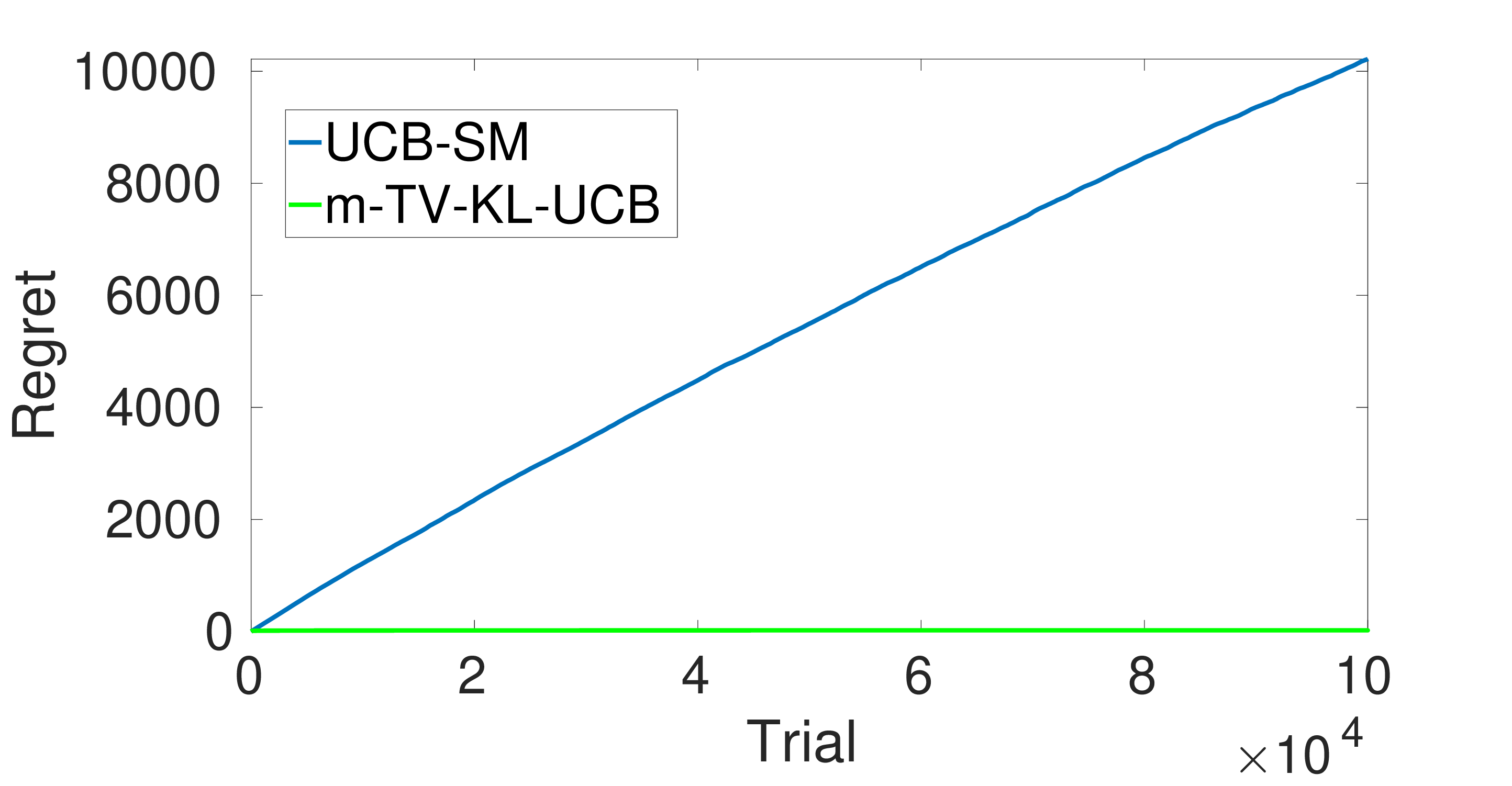}
        \caption{}
        \label{fig:scenario_4_1}
        \end{subfigure}%
    \\
    \begin{subfigure}[t]{0.3\textwidth}
        %\centering
       % \includegraphics[width=\textwidth]{pic/ipv4voice_1.eps}
       \includegraphics[width=\textwidth]{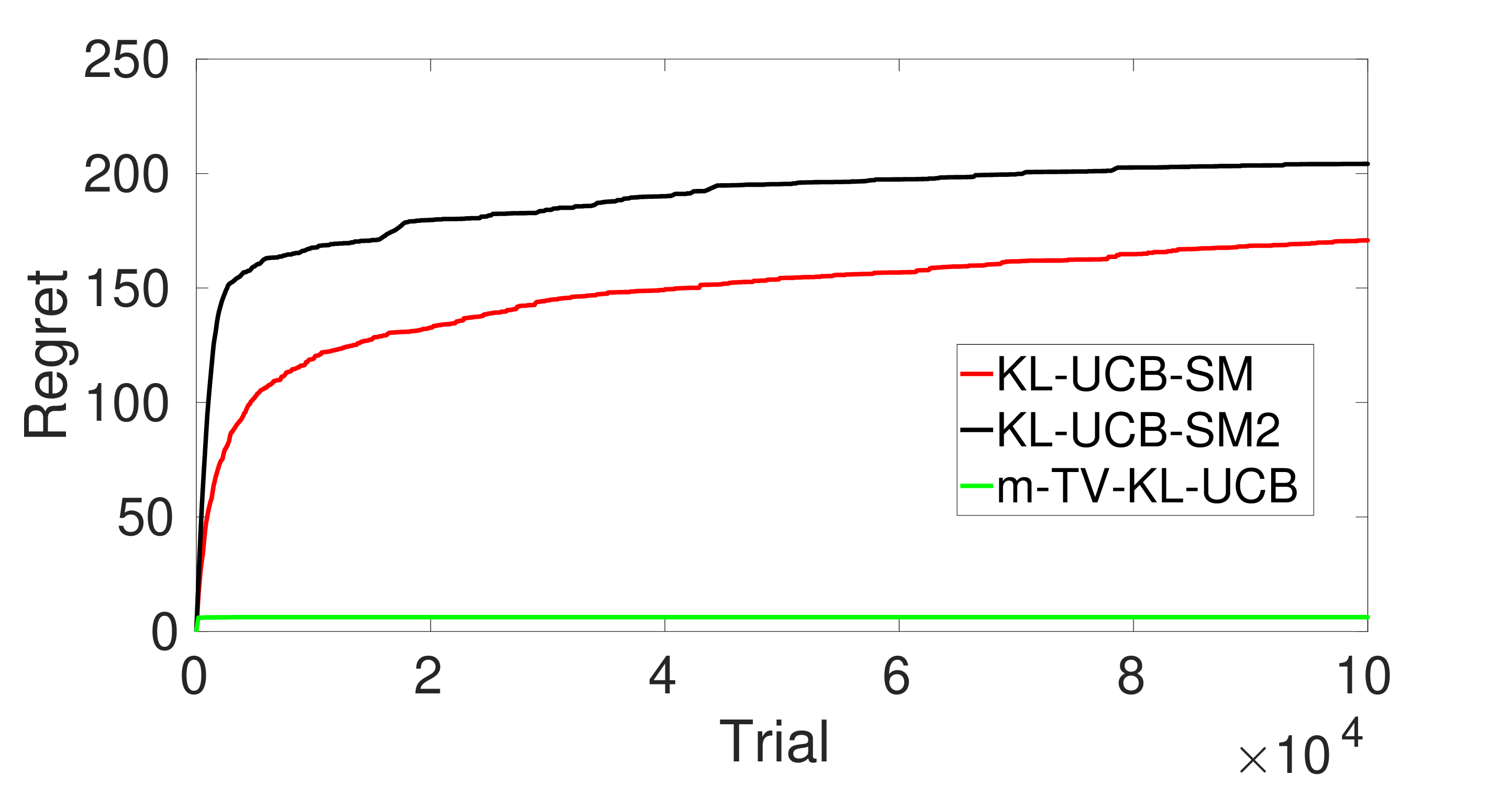}
        \caption{}
        \label{fig:scenario_3_2}
        \end{subfigure}
~
    \begin{subfigure}[t]{0.3\textwidth}
        %\centering
       % \includegraphics[width=\textwidth]{pic/ipv4voice_1.eps}
      \includegraphics[width=\textwidth]{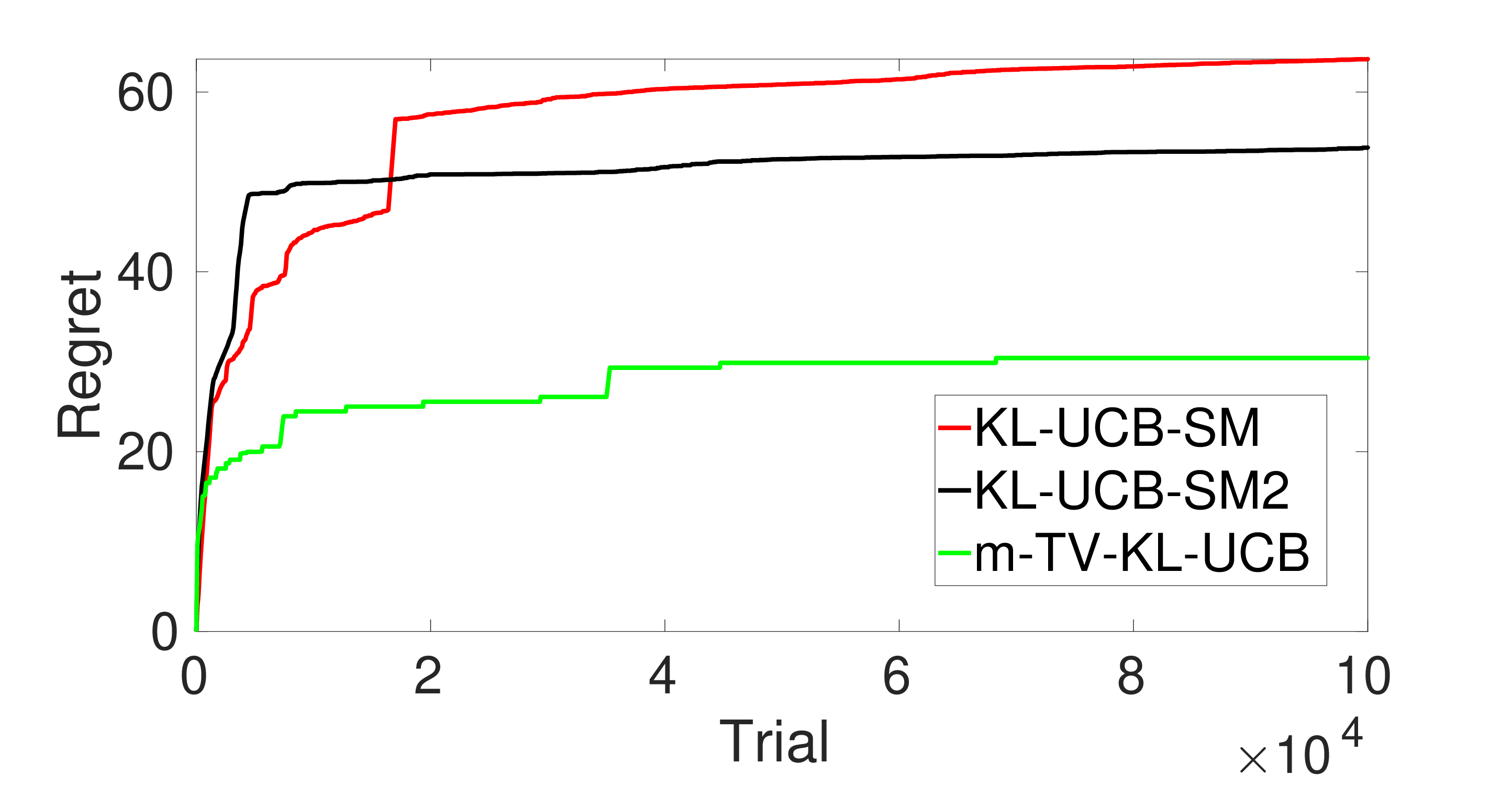}
        \caption{}
        \label{fig:scenario_4_2}
        \end{subfigure}
%\end{subfigure}
\caption{Comparison of regrets of different algorithms for scenarios 3 and 4.}
\end{figure*}
In this section, we compare the performances of \textcolor{black}{TV}-KL-UCB and \textcolor{black}{m-TV-KL-UCB} with UCB-SM \cite{tekin2010online}. We also consider an improvement over \cite{tekin2010online}
(referred to as KL-UCB-SM \cite{garivier2011kl}) by replacing the UCB on sample mean by KL-UCB. Arm $A_t$ is chosen at time $t$ using the following scheme:\\
%\begin{equation*}
%\begin{aligned}
$    A_t=\arg \max \limits_i \max\{\tilde{\mu}:D({\hat{\mu}}^i(t-1)||\tilde{\mu})\le\frac{\log f(t)}{T_i(t-1)}\}.$
%  \end{aligned}  
%\end{equation*}
%where $\hat{\mu}_i(t)$ denote the sample mean of arm $i$ at time $t$. 
KL-UCB-SM2 is a modification of the algorithm in \cite{moulos2020finite} when only one arm can be played at a time.
We consider two scenarios 
%and summarize the parameters for these two scenarios in Table \ref{table:1}. We consider $5$ arms 
and average the result over 100 runs. The code is available at \url{https://github.com/arghyadip89/TV-KL-UCB}. \\ %\textcolor{black}{Details of the scenarios are provided in \cite{roy2020hellinger}.} \par
%\begin{enumerate}
%\begin{comment}
    \textbf{Scenario 1:} $p_{01}^1=0.5,p_{01}^2=0.4,p_{01}^3=0.3,p_{01}^4=0.2,p_{01}^5=0.1,
p_{10}^1=0.4,p_{10}^2=0.55,p_{10}^3=0.65,p_{10}^4=0.65,p_{10}^5=0.7$.\par
\textbf{Scenario 2:} $p_{01}^1=0.5,p_{01}^2=0.0004,p_{01}^3=0.0003,p_{01}^4=0.0002,p_{01}^5=0.0001,p_{10}^1=0.4,p_{10}^2=0.00055,p_{10}^3=0.00065,p_{10}^4=0.00065,p_{10}^5=0.0007$.\par
%\end{comment}
%\end{enumerate}
%\par \subsection{Performance Comparison}
In the first scenario (Fig. \ref{fig:scenario_1_1} and \ref{fig:scenario_1_2}), \textcolor{black}{TV}-KL-UCB significantly outperforms other algorithms. As KL-UCB provides a tighter confidence bound than UCB, the amount of exploration reduces. %As a result, the regrets of H-KL-UCB, KL-UCB-SM and KL-UCB-SM2 are lower than the regret of UCB-SM.
%Moreover, H-KL-UCB provides smaller regret than those of KL-UCB-SM and KL-UCB-SM2. 
Since \textcolor{black}{TV}-KL-UCB 
identifies the best arm using estimates of $p_{10}^i$, $p_{01}^i$ in a truly Markovian setting, it spends lesser time in exploration than KL-UCB-SM and KL-UCB-SM2 which work based on the sample mean. %Performances of KL-UCB-SM and KL-UCB-SM2 become close to each other, as time progresses.\par
In the second scenario, $p_{01}^i$ and $p_{10}^i$ for $i\neq 1$ are close to zero. As a result, the suboptimal arms spend a considerable amount of time in their initial states. If the initial estimates of $p_{10}^i$ and $p_{01}^i$
are taken to be zero, then the index of a suboptimal arm remain $1$
if it starts at state $0$ (see Equation (\ref{eq:state0})) until it makes a transition to state $1$. Therefore, 
%if all the arms start at state $0$, then 
our scheme may lead to a large regret in the beginning until all arms observe a transition to state $1$. 
To address this issue, 
we initialize $\hat{p}_{10}^i$
and $\hat{p}_{01}^i$ to $1$. 
We observe in Fig. \ref{fig:scenario_2_1} and \ref{fig:scenario_2_2} that \textcolor{black}{TV}-KL-UCB performs significantly better than UCB-SM, KL-UCB-SM and KL-UCB-SM2. 
Note that initial values of 
$\hat{p}_{10}^i$
and $\hat{p}_{01}^i$ do not play much role in the first scenario since each arm makes transitions from one state to another comparably often. 
 The asymptotic upper bound on the regret of \textcolor{black}{TV}-KL-UCB is smaller than that of UCB-SM when $\min_i \sigma_i\ge \frac{1}{1440}$ (See Theorem \ref{theo:eigen}). We choose parameters to satisfy $\min_i \sigma_i< \frac{1}{1440}$ and observe in Fig. \ref{fig:asymptotic_1_1} that still the 
asymptotic upper bound on the regret of \textcolor{black}{TV}-KL-UCB is better.
%\begin{figure}[t!]
%   \centering
%       \includegraphics[width=0.35\textwidth]{fig/asymptotic_4_TV.eps}
%\caption{Upper bounds on regrets of  \textcolor{black}{TV}-KL-UCB and UCB-SM.}
%\label{fig:asymptotic_1_1}
%\end{figure}
%\textbf{Special case- i.i.d. arms:}
For the i.i.d. case, 
%KL-UCB-SM is same as the KL-UCB algorithm in \cite{garivier2011kl} for Bernoulli bandits.
%It is known that for KL-UCB algorithm
%\cite{garivier2011kl}, 
the asymptotic upper bounds on the regrets of both KL-UCB-SM and \textcolor{black}{TV}-KL-UCB match the corresponding lower bound \cite{garivier2011kl}. \par 
We also compare the performance of m-TV-KL-UCB with other algorithms when each arm is a three-state Markov chain with $r(0)=0,r(1)=1/2$ and $r(2)=1$. Similar to Scenarios 1 and 2, we consider two scenarios.\\
    \textbf{Scenario 3:} $p_{01}^1=0.5,p_{01}^2=0.4,p_{01}^3=0.3,p_{01}^4=0.2,p_{01}^5=0.1,
    p_{02}^1=0.1,p_{02}^2=0.2,p_{02}^3=0.3,p_{02}^4=0.2,p_{02}^5=0.7,
p_{12}^1=0.4,p_{12}^2=0.55,p_{12}^3=0.5,p_{12}^4=0.55,p_{12}^5=0.7,
p_{10}^1=0.3,p_{10}^2=0.2,p_{10}^3=0.25,p_{10}^4=0.25,p_{10}^5=0.15,
p_{20}^1=0.6,p_{20}^2=0.5,p_{20}^3=0.4,p_{20}^4=0.3,p_{20}^5=0.2,
p_{21}^1=0.1,p_{21}^2=0.2,p_{21}^3=0.4,p_{21}^4=0.5,p_{21}^5=0.6.$\par
\textbf{Scenario 4:} $p_{01}^1=0.005,p_{01}^2=0.004,p_{01}^3=0.003,p_{01}^4=0.002,p_{01}^5=0.1,
    p_{02}^1=0.001,p_{02}^2=0.002,p_{02}^3=0.003,p_{02}^4=0.002,p_{02}^5=0.7,
p_{12}^1=0.004,p_{12}^2=0.0055,p_{12}^3=0.005,p_{12}^4=0.0055,p_{12}^5=0.7,
p_{10}^1=0.003,p_{10}^2=0.002,p_{10}^3=0.0025,p_{10}^4=0.0025,p_{10}^5=0.15,
p_{20}^1=0.006,p_{20}^2=0.005,p_{20}^3=0.004,p_{20}^4=0.003,p_{20}^5=0.2,
p_{21}^1=0.001,p_{21}^2=0.002,p_{21}^3=0.004,p_{21}^4=0.005,p_{21}^5=0.6.$\par
In both the scenarios, m-TV-KL-UCB performs better than other algorithms as KL-UCB provides a tighter confidence bound than UCB (See Figs \ref{fig:scenario_3_1}, \ref{fig:scenario_4_1},\ref{fig:scenario_3_2}, \ref{fig:scenario_4_2}). Moreover, in truly Markovian settings, as considered by us, m-TV-KL-UCB picks the correct variant of KL-UCB frequently contrary to other algorithms such as KL-UCB-SM and KL-UCB-SM2 which work based on the sample mean. The performance improvement achieved by m-TV-KL-UCB is more than that of TV-KL-UCB as the incorrect variant of KL-UCB leads to more regret when the number of states is more.
\section{Discussions \& Conclusions}\label{sec:discuss}
\textcolor{black}{TV}-KL-UCB uses the \textcolor{black}{total variation} distance between \textcolor{black}{empirical} estimates of $p_{01}^i$ and $(1-p_{10}^i)$ to switch from STP\_PHASE to SM\_PHASE. %The motivation behind this strategy is as follows. 
Arm $i$ can be represented uniquely using $p_{01}^i$ and $p_{10}^i$. However, for i.i.d. rewards, it can be represented uniquely using the mean reward. Therefore, usage of STP\_PHASE for i.i.d. arms may lead to a large regret since the additional information (which can be exploited by the sample mean based KL-UCB) is not exploited. We design the algorithm so that it works well for both scenarios. When the arms are i.i.d. (truly Markovian), the SM\_PHASE (STP\_PHASE) is chosen infinitely often. %by the optimal arm. On the contrary, for the truly Markovian case, we establish that the optimal arm chooses the STP\_PHASE infinitely often. \par
%The proposed 
%\textcolor{black}{TV}-KL-UCB policy is provably order-optimal. 
%The upper bound on the regret of our policy holds uniformly over time.  
Theoretical (under a mild assumption) and experimental results show that the asymptotic upper bound on the regret of TV-KL-UCB is lower than that of UCB-SM \cite{tekin2010online}.
%under some mild assumptions. %When the assumption does not hold true, it is difficult to analytically compare the regrets of UCB-SM and H-KL-UCB. However, 
%We also demonstrate using experiments that when the assumption does not hold true, then also the asymptotic upper bound on the regret is lower than that of UCB-SM \cite{tekin2010online}.
%We also demonstrate that for the special case when the arms are i.i.d., the asymptotic upper bound on the regret of H-KL-UCB is always lower than that of UCB-SM. 
In the i.i.d. case, the upper bounds on the regrets of KL-UCB-SM and \textcolor{black}{TV}-KL-UCB
match the lower bound. 
For \textcolor{black}{TV}-KL-UCB,
to overcome the issue of high regret associated with zero initialization,
we initialize $\hat{p}_{01}$ and $\hat{p}_{10}$ to $1$. Experiments establish that the extension of TV-KL-UCB to a multi-state Markovian model outperforms other algorithms.\par
%This is done to overcome the issue of high regret associated with zero initialization arising due to high sojourn time in a state when $p_{01}$ and $p_{10}$ of suboptimal arms are close to $0$. \par
%\section{Conclusions}\label{sec:conclude}
%In this paper, we have derived an asymptotic lower bound on the regret of any uniformly good algorithm for a MAB with Markovian reward. The lower bound generalizes the well-known result on the lower bound for one-parameter family of transition matrices. We have also 
To conclude, in this paper, we have designed algorithms which achieve a significant improvement over state-of-the-art bandit algorithms. The idea is to detect whether the arm reward is truly Markovian or i.i.d. using a \textcolor{black}{total variation} distance based test. It
switches \textcolor{black}{from} the standard KL-UCB to a sample transition probability based KL-UCB
when it detects that the arm reward is truly Markovian.
%scheme which switches form the sample transition probability based KL-UCB to the sample mean based KL-UCB depending on the Hellinger distance between the sample estimates of the parameters of the arm. 
Logarithmic upper bounds on the regret of the algorithm have been derived for i.i.d. and Markovian settings. The upper bound on the regret of \textcolor{black}{TV}-KL-UCB matches the lower bound for i.i.d arm rewards. 
%Analytical and experimental results demonstrate that the proposed algorithm performs well in comparison
%to other policies in the literature. 
%However, how tight is the bound, remains an open question. 
\appendices

\section{Lower Bound on Regret of a Uniformly Good Policy}\label{app:a}
We derive a lower bound on the regret associated with any uniformly good policy. It is easy to verify that the lower bound matches with the lower bound in \cite{anantharam1987asymptotically-2}.
Note that unlike \cite{anantharam1987asymptotically-2}, we do not assume that the transition functions belong to a single-parameter family. Therefore, the lower bound derived in this paper is applicable to a larger class of transition functions (class of irreducible Markov chains) than \cite{anantharam1987asymptotically-2}.\par 
\begin{theorem}\label{theo:1}
For a uniformly good policy $\alpha$, 
\begin{equation*}
\liminf \limits_{n \to \infty} \frac{R_{\alpha}(n)}{\log n}\ge \sum\limits_{i=2}^K \frac{\Delta_i}{I(M_i||M_1)}.
\end{equation*}
\end{theorem}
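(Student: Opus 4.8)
The plan is to follow the classical change-of-measure argument of Lai--Robbins, adapted to the Markovian setting as in \cite{anantharam1987asymptotically-2}, but \emph{without} invoking the single-parameter assumption. Throughout, the policy $\alpha$ is fixed and I write $\mathbb{E}_\theta$, $\mathbb{P}_\theta$ for expectation and probability under an instance $\theta$. The first reduction is the regret decomposition $R_\alpha(n) = \sum_{i \neq 1} \Delta_i \, \mathbb{E}_\alpha[T_i(n)] + O(1)$: since arms are rested, the rewards collected from arm $i$ form a single contiguous trajectory of its chain $M_i$ of length $T_i(n)$, so the reward harvested is $T_{i,1}(n)$, and $\big|\mathbb{E}_\alpha[T_{i,1}(n)] - \mu_i\,\mathbb{E}_\alpha[T_i(n)]\big|$ is bounded by a constant determined by the mixing time of the irreducible chain. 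Dividing by $\log n$ kills the $O(1)$ term, so it suffices to prove, for each suboptimal arm $i$, that $\liminf_{n\to\infty} \mathbb{E}_\alpha[T_i(n)]/\log n \ge 1/I(M_i \| M_1)$, and then sum over $i$.

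Fix a suboptimal arm $i$ and $\epsilon > 0$. I would construct an alternative instance $\theta'$ that agrees with the original on every arm except $i$, replacing $M_i$ by an irreducible chain $M_i'$ whose stationary mean strictly exceeds $\mu_1$, so that arm $i$ becomes the unique optimal arm under $\theta'$. By continuity of the stationary mean and of the divergence rate $I(M_i \| \cdot)$ in the transition parameters, $M_i'$ can be chosen so that $I(M_i \| M_i') \le I(M_i \| M_1) + \epsilon$ while preserving irreducibility. Uniform goodness of $\alpha$ under $\theta'$ forces every arm $j \neq i$ to be played $o(n^\beta)$ times, hence $\mathbb{E}_{\theta'}[n - T_i(n)] = o(n^\beta)$.

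The core step relates the two instances through the log-likelihood ratio of the observations drawn from arm $i$. Because only arm $i$'s law differs and the chain is rested, the log-likelihood ratio of the full history reduces to a sum over the transitions observed on arm $i$, namely $L_m = \sum_{k=1}^{m}\log\frac{p(X_{k-1},X_k)}{p'(X_{k-1},X_k)}$ evaluated at $m = T_i(n)$. By the ergodic theorem for additive functionals of the irreducible chain $M_i$, $L_m/m \to I(M_i \| M_i')$ almost surely under $\theta$. Combining this with the Lai--Robbins change-of-measure inequality, which bounds $\mathbb{P}_\theta$ of an event by the $\mathbb{P}_{\theta'}$-probability reweighted by $e^{-L}$, and the hypothesis that $T_i(n)$ is small, one shows $T_i(n) \ge \frac{(1-\epsilon)\log n}{I(M_i \| M_i')}$ with probability tending to one under $\theta$; otherwise $\alpha$ could not meet the $o(n^\beta)$ regret required under $\theta'$. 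This gives $\liminf_n \mathbb{E}_\alpha[T_i(n)]/\log n \ge 1/(I(M_i\|M_1)+\epsilon)$, and letting $\epsilon \downarrow 0$ yields the per-arm bound.

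I expect the main obstacle to be the Markovian change-of-measure step: unlike the i.i.d. case, the likelihood ratio is an additive functional of a Markov chain, so I must establish a sufficiently strong law (with an accompanying maximal inequality in the Lai--Robbins style) for $L_m$ that remains valid when $m = T_i(n)$ is itself a random, policy-dependent count correlated with the trajectory. Controlling this coupling, together with verifying that the perturbed chain $M_i'$ stays irreducible and that $I(M_i\|\cdot)$ varies continuously so the $\epsilon$-slack can be taken, is the technical heart; the remaining algebra and the summation over suboptimal arms are routine.
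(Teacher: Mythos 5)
Your overall strategy is sound and would, if carried through, prove the theorem; but you have chosen the classical Lai--Robbins route, whereas the paper takes the Bretagnolle--Huber route, and the difference matters precisely at the step you flag as your ``main obstacle.'' The paper fixes the event $A=\{T_{1N}\le T_{2N}\}$, applies the inequality $\mathbb{P}_0(A)+\mathbb{P}_1(A^{\mathrm{C}})\ge \tfrac12\exp(-D(\mathbb{P}_0\|\mathbb{P}_1))$, and computes $D(\mathbb{P}_0\|\mathbb{P}_1)=\sum_{(i,j)}\mathbb{E}_0[t_{ij}^{(N)}]\log\frac{p_{ij}(0)}{p_{ij}(1)}$ exactly via the likelihood-ratio factorization over observed transitions, followed by the Wald-type identity $\mathbb{E}_0[t_{ij}^{(N)}]=p_{ij}(0)\mathbb{E}_0[\tau_i^{(N)}]$. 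Because this argument only ever needs the \emph{expected} log-likelihood ratio, it completely sidesteps the need for an almost-sure law of large numbers, a maximal inequality, and control of the coupling between the trajectory and the random, policy-dependent count $T_i(n)$ --- exactly the three things you correctly identify as the technical heart of your version and then leave unresolved. Both routes share the same change-of-measure construction (perturb only the suboptimal arm's chain so its stationary mean exceeds $\mu_1$, use uniform goodness under both instances via Markov's inequality, and pass to the stationary-weighted KL rate $I(M_i\|M_1)$ with an $\epsilon$-slack and continuity), and your treatment of those shared pieces is fine; indeed your justification of the regret decomposition $R_\alpha(n)=\sum_{i\ne1}\Delta_i\,\mathbb{E}[T_i(n)]+O(1)$ via mixing is more explicit than the paper's, which essentially asserts it.

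The concrete gap, then, is that your proposal defers rather than supplies the key lemma: a strong law with an accompanying Lai--Robbins-style maximal inequality for the additive functional $L_m$ evaluated at the random time $m=T_i(n)$. This is provable for irreducible finite chains (it is what \cite{anantharam1987asymptotically-2} does), but it is genuinely delicate and is the reason the paper does not attempt it. If you want a complete proof with minimal machinery, replace your third paragraph with the divergence-decomposition argument: show $\log\frac{\mathbb{P}_0(F_N)}{\mathbb{P}_1(F_N)}=\sum_{(i,j)}t_{ij}^{(N)}\log\frac{p_{ij}(0)}{p_{ij}(1)}$ (only arm $i$'s transitions contribute since the other arms and any policy randomization cancel), take expectations, and invoke the two-point inequality with the event $\{T_{1N}\le T_{2N}\}$; the rest of your write-up then goes through unchanged.
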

\begin{proof}
We assume that the arm played
and the reward obtained  at time $t$ is denoted by 
$i_t$ and $r_{i_t}$, respectively. 
Let $F_t=\{ i_1,r_{i_1},\ldots, i_{t-1},r_{i_{t-1}}\} $ denote the history of arm selection and reward obtained till time $t$. A policy $\alpha$ is a mapping from the history till time $t$ to the arm played at time $t$ i.e., $\alpha:F_t\to i_t$. 
%We consider the following hypothesis testing problem. 
We consider a pair of arms. Let $\mathcal{M}$ be a set of $0-1$ irreducible Markov chains. Let $\gamma:\mathcal{M}\to \mathcal{R}$ be a function which maps $M \in \mathcal{M}$ to the mean (which is the steady state stationary probability of state $1$ of $M$) of $M$.
Let $\mathcal{E}=\mathcal{M}_1\times \mathcal{M}_2$
where $\mathcal{M}_1$ and $\mathcal{M}_2$ are sets of $0-1$ Markov chains corresponding to arm 1 and arm 2, respectively. Let $E_0=(M_1,M_2) \in \mathcal{E}$ such that $\gamma(M_1)=\mu_1$ and $\gamma(M_2)=\mu_2$, where $\mu_1>\mu_2$. Let $E_1=({M_1}',{M_2}')$ be such that ${M_1}'$ is identical to $M_1$ and $\gamma({M_2}')=\mu_1+\epsilon$,
where $\epsilon>0$.
Note that we consider the two-arm case for the sake of simplicity. The result obtained can be extended easily to $K$ arms by constructing a $K$ dimensional space of Markov chains.\par 
Let $T_{iN}$ be the number of times arm $i$ is played till time $N$. The result will follow if we can show that 
\begin{equation*}
    \liminf_{n \to \infty} \frac{\mathbb{E}_{0}[T_{2N}]}{\log n}\ge \frac{1}{\inf \{I(M_2||{M_2}' ):\gamma({M_2}')>\mu_1\}},
\end{equation*}
where $\mathbb{E}_{0}$ denotes the expectation operator under the probability distribution associated with $E_0$.
Let $\mathbb{P}_{0}$ and $\mathbb{P}_{1}$ denote the probability distributions under $E_0$ and $E_1$, respectively. 
Hence, $D(\mathbb{P}_{0}||\mathbb{P}_{1})=\mathbb{E}_{0}(\log \frac{\mathbb{P}_0(F_N)}{\mathbb{P}_1(F_N)})$, where $F_N$ is the history of $N$ arm pulls. Let $F_{2,N}$  be the part of the history till time $N$ when arm $2$ is played. 
Then, $\frac{\mathbb{P}_0(F_N)}{\mathbb{P}_1(F_N)}=\frac{\mathbb{P}_0(F_{2,N})}{\mathbb{P}_1(F_{2,N})}$
(since arm $1$ is identical in both the cases, and probabilities associated with $\alpha$ (if randomized) also cancel in the numerator and the denominator).
Let $t_{ij}^{(N)}$ be the number of times arm $2$ transitions from state $i\in \mathcal{S}$ to state $j\in \mathcal{S}$ in $F_{2,N}$. Let $p_{ij}(k)$ be the probability of transition from state $i$ to state $j$ of arm $2$ under $E_k$. Then,
\begin{equation*}
\frac{\mathbb{P}_0(F_N)}{\mathbb{P}_1(F_N)}=\prod_{(i,j)\in \mathcal{S}}{\big(\frac{p_{ij}(0)}{p_{ij}(1)}\big)}^{t_{ij}^{(N)}}.    
\end{equation*}
Hence, 
\begin{equation*}
    \log \frac{\mathbb{P}_0(F_N)}{\mathbb{P}_1(F_N)}=\sum_{(i,j)\in \mathcal{S}} t_{ij}^{(N)} \log \frac{p_{ij}(0)}{p_{ij}(1)},
\end{equation*}
and 
\begin{equation*}
    D(\mathbb{P}_0||\mathbb{P}_1)=\sum_{(i,j)\in \mathcal{S}} \mathbb{E}_0(t_{ij}^{(N)}) \log \frac{p_{ij}(0)}{p_{ij}(1)}.
\end{equation*}

We know that for any event $A$ \cite[Chapter~14]{lattimore2018bandit},
\begin{equation*}
    \mathbb{P}_0(A)+\mathbb{P}_1(A^{\mathrm{C}})\ge \frac{1}{2} \exp(-D(\mathbb{P}_0||\mathbb{P}_1)).
\end{equation*}

Therefore, for any event $A$,
\begin{equation*}
    \mathbb{P}_0(A)+\mathbb{P}_1(A^{\mathrm{C}})\ge \frac{1}{2}\exp\big(-\sum_{(i,j)\in \mathcal{S}} \mathbb{E}_0(t_{ij}^{(N)}) \log \frac{p_{ij}(0)}{p_{ij}(1)}\big).
\end{equation*}
Choose $A=\{T_{1N}\le T_{2N}\}$.
We have,
\begin{equation}\label{eq:markov_1}
\begin{split}
\mathbb{P}_0(A)&=\mathbb{P}_0(T_{1N}\le T_{2N})=\mathbb{P}_0(T_{2N}\ge \frac{N}{2})\\&  
\le \frac{2}{N}\mathbb{E}_0(T_{2N}) =\frac{2}{N}o(N^\beta),
\end{split} 
\end{equation}
where the second and last equalities follow from $T_{1N}+T_{2N}=N$ and the property of uniformly good policy, respectively. The inequality
is a direct application of Markov's inequality.
Similarly, 
\begin{equation}\label{eq:markov_2}
\begin{split}
\mathbb{P}_1(A^{\mathrm{C}})&=\mathbb{P}_1(T_{1N}\ge T_{2N})=\mathbb{P}_1(T_{1N}\ge \frac{N}{2})   \\&
\le \frac{2}{N}\mathbb{E}_1(T_{1N})=\frac{2}{N}o(N^\beta).
\end{split} 
\end{equation}

Thus, if the policy is uniformly good, then (using Equations (\ref{eq:markov_1}) and (\ref{eq:markov_2})),
\begin{equation*}
\frac{2}{N}o(N^\beta)\ge \frac{1}{2}\exp\big(-\sum_{(i,j)\in \mathcal{S}} \mathbb{E}_0(t_{ij}^{(N)}) \log \frac{p_{ij}(0)}{p_{ij}(1)}\big).
\end{equation*}
Hence, 
\begin{equation*}
\sum_{(i,j)\in \mathcal{S}} \mathbb{E}_0(t_{ij}^{(N)}) \log \frac{p_{ij}(0)}{p_{ij}(1)}\ge \log N-\log o(N^{\beta})-\log 4.    
\end{equation*}
Taking appropriate limits we obtain,
\begin{equation*}
    \liminf_{N\to\infty}\frac{1}{\log N}\sum_{(i,j)\in \mathcal{S}} \mathbb{E}_0(t_{ij}^{(N)}) \log \frac{p_{ij}(0)}{p_{ij}(1)}\ge 1.
\end{equation*}
Let $\tau_i^{(N)}$ be the number of times arm $2$ is chosen while it was in state $i \in \mathcal{S}$ in the first $(N-1)$ plays in the history. Then,
\begin{equation*}
    \mathbb{E}_0(t_{ij}^{(N)})=p_{ij}(0)\mathbb{E}_0(\tau_i^{(N)}).
\end{equation*}
Now, 
\begin{equation*}
\sum_{(i,j)\in \mathcal{S}} \mathbb{E}_0(t_{ij}^{(N)}) \log \frac{p_{ij}(0)}{p_{ij}(1)}=
\sum_{i\in \mathcal{S}} \mathbb{E}_0({\tau}_{i}^{(N)}) D(p_{i0}(0)||p_{i0}(1)).
\end{equation*}
Hence,
\begin{equation*}
    \liminf_{N\to\infty}\frac{1}{\log N}\sum_{i\in \mathcal{S}} \mathbb{E}_0({\tau}_{i}^{(N)}) D(p_{i0}(0)||p_{i0}(1))\ge 1.
\end{equation*}
 We define the regret as 
$R_{\alpha}(n)=(\mu_1-\mu_2)(\mathbb{E}_0({\tau}_{0}^{(N)})+\mathbb{E}_0({\tau}_{1}^{(N)}))$
since the regret depends on the number of times arm $2$ is chosen.\par 
An asymptotic lower bound on the regret can be obtained by solving the following optimization problem.
\begin{equation*}
\begin{split}
& \min_{\epsilon>0} \min_{p_{.0}(1),p_{.1}(1)}(\mu_1-\mu_2)(x+y)\\&   
\text{such that}\\&
xD(p_{00}(0)||p_{00}(1))+yD(p_{10}(0)||p_{10}(1))\ge 1,\\&
p_{01}(1)(\mu_1+\epsilon)=p_{10}(1)(1-\mu_1-\epsilon),\\&
p_{00}(1)+p_{01}(1)=1,\\&
p_{10}(1)+p_{11}(1)=1,\\&
\text{and } p_{ij}(1)\ge 0 \ \forall i,j.
 \end{split}
\end{equation*}
Dividing the first constraint by $(x+y)$ on both sides we get,
\begin{equation*}
    (x+y)\ge \frac{1}{\frac{x}{x+y}D(p_{00}(0)||p_{00}(1))+\frac{y}{x+y}D(p_{10}(0)||p_{10}(1))}.
\end{equation*}
Since $\frac{x}{x+y}=\pi_2(0)$ and
$\frac{y}{x+y}=\pi_2(1)$, we have,
\begin{equation*}
    (x+y)\ge \frac{1}{\pi_2(0)D(p_{00}(0)||p_{00}(1))+\pi_2(1)D(p_{10}(0)||p_{10}(1))}.
\end{equation*}
Instead of perturbing the mean of arm $1$ (with transition probability functions $p_{01}^1$ and $p_{10}^1$) by $\epsilon$, we can consider a new Markov chain with transition probabilities ${p'}_{01}^1$ and ${p'}_{10}^1$.
For fixed values of these parameters, we get 
\begin{equation*}
    (x+y)\ge \frac{1}{\pi_2(0)D(p_{01}^2||{p'}_{01}^1)+\pi_2(1)D(p_{10}^2||{p'}_{10}^1)}.
\end{equation*}
Therefore, the optimization problem reduces to the following problem 
\begin{equation*}
\inf_{{p'}_{10}^1,{p'}_{01}^1}\frac{1}{\pi_2(0)D(p_{01}^2||{p'}_{01}^1)+\pi_2(1)D(p_{10}^2||{p'}_{10}^1)}    
\end{equation*}
such that the stationary probability of state $1$ under the transition law $({p'}_{01}^1,{p'}_{10}^1)$ is greater than that under $({p}_{01}^1,{p}_{10}^1)$.\par 
Assume that the constraint becomes an equality under the optimal solution.
Then, we have,
\begin{equation*}
\begin{split}
&R_{\alpha}(n)\ge \frac{(\mu_1-\mu_2)}{\pi_2(0)D(p_{01}^2||p_{01}^1)+\pi_2(1)D(p_{10}^2||p_{10}^1)}\\&=\frac{(\mu_1-\mu_2)}{I(M_2||M_1)}.    
\end{split}
\end{equation*}
This completes the proof of the theorem.
\end{proof}

\section{Proof of Theorem \ref{theo:regret_2}}\label{app:theo}
We describe a set of useful results before deriving the asymptotic upper bound on the regret. 
%Proof of
%Proposition \ref{lemma:pinsker} is in \cite{roy2020hellinger}.
\begin{proposition}\label{lemma:pinsker}
Let $p,q,\epsilon \in[0,1]$. The following relations hold:\\
(a) $D(p||q)\ge 2(p-q)^2$ (Pinsker's inequality),\\
(b) If $p\le q-\epsilon \le q$, then 
$D(p||q-\epsilon)\le D(p||q)-2\epsilon^2$,\\
(c) If $p\le p+\epsilon\le q$, then 
$D(q||p+\epsilon) \le D(q||p)-2\epsilon^2$.
%(d) $D(p||q)\le \frac{{(p-q)}^2}{q(1-p)}$.
\end{proposition}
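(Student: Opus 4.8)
The plan is to handle all three parts through a single device: the fundamental theorem of calculus applied to $D$ as a function of its second argument, combined with the elementary bound $x(1-x)\le\tfrac14$ on $[0,1]$. First I would record the derivative. Writing $\phi_a(x):=D(a\|x)=a\log\tfrac{a}{x}+(1-a)\log\tfrac{1-a}{1-x}$, a direct differentiation gives
\[
\phi_a'(x)=-\frac{a}{x}+\frac{1-a}{1-x}=\frac{x-a}{x(1-x)}.
\]
Since $\phi_a(a)=0$, this yields the representation $D(a\|y)=\int_a^y\frac{x-a}{x(1-x)}\,dx$ for any $y$. The single inequality that drives everything is $x(1-x)\le\tfrac14$ (AM–GM), i.e.\ $\frac{1}{x(1-x)}\ge 4$ on $(0,1)$.

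For part (a), taking $a=p$ and $y=q$ and noting that $\frac{x-p}{x(1-x)}$ and $x-p$ share the same sign on the interval between $p$ and $q$ (because $x(1-x)>0$ there), I bound $\big|\frac{x-p}{x(1-x)}\big|\ge 4|x-p|$ pointwise. Hence $D(p\|q)=\int_p^q\frac{x-p}{x(1-x)}\,dx\ge\int_p^q 4(x-p)\,dx=2(q-p)^2$, the same computation handling $q<p$ by reversing the interval. This is Pinsker's inequality.

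For parts (b) and (c) I would write the left-hand difference as an integral over the short interval of length $\epsilon$ and again lower-bound the integrand by $4|x-a|$. For (b), $D(p\|q)-D(p\|q-\epsilon)=\int_{q-\epsilon}^{q}\frac{x-p}{x(1-x)}\,dx\ge\int_{q-\epsilon}^{q}4(x-p)\,dx=2\big[(q-p)^2-(q-\epsilon-p)^2\big]=2\epsilon\big[2(q-p)-\epsilon\big]$; the hypothesis $p\le q-\epsilon$ gives $q-p\ge\epsilon$, whence $2(q-p)-\epsilon\ge\epsilon$ and the bound is $\ge 2\epsilon^2$. Part (c) is the mirror image with $a=q$: $D(q\|p)-D(q\|p+\epsilon)=\int_p^{p+\epsilon}\frac{q-x}{x(1-x)}\,dx\ge\int_p^{p+\epsilon}4(q-x)\,dx=4\epsilon(q-p)-2\epsilon^2\ge 2\epsilon^2$, once more using $q-p\ge\epsilon$.

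The calculation is essentially routine; the only genuine care needed is twofold. First is the sign bookkeeping that legitimizes $\big|\frac{x-a}{x(1-x)}\big|\ge 4|x-a|$ on the relevant interval, which holds because both factors keep a fixed sign there. Second is the final algebraic step in (b) and (c), where the gap hypothesis $q-p\ge\epsilon$ is exactly what converts $2\epsilon[2(q-p)-\epsilon]$ (respectively $4\epsilon(q-p)-2\epsilon^2$) into the clean $2\epsilon^2$; this is where I expect the main, though modest, obstacle to lie. Edge behavior at $x\in\{0,1\}$, where $x(1-x)$ vanishes and $D$ may be infinite, is harmless: the integrand bound $\ge 4$ only improves there, and the stated hypotheses keep the arguments inside $(0,1)$ wherever finiteness is needed.
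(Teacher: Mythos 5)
Your proof is correct, and it takes a genuinely different route from the paper's. The paper delegates (a) and (b) to \cite[Lemma~10.2]{lattimore2018bandit} and proves only (c) directly, and it does so by differentiating in the \emph{first} argument: setting $h(q)=D(q\|p+\epsilon)-D(q\|p)$, it observes that $h'(q)=\log\frac{p(1-p-\epsilon)}{(p+\epsilon)(1-p)}$ is a negative constant, so $h$ is linear and decreasing in $q$, whence $h(q)\le h(p+\epsilon)=-D(p+\epsilon\|p)\le-2\epsilon^2$ by an appeal back to Pinsker. You instead integrate in the \emph{second} argument, using the representation $D(a\|y)=\int_a^y\frac{x-a}{x(1-x)}\,dx$ together with $x(1-x)\le\frac14$, which handles all three parts with one mechanism: (a) is the integral over the full interval $[p,q]$, while (b) and (c) are the same integral restricted to the length-$\epsilon$ subinterval adjacent to the perturbed endpoint, with the gap hypothesis $q-p\ge\epsilon$ closing the final algebra ($2\epsilon[2(q-p)-\epsilon]\ge2\epsilon^2$ and $4\epsilon(q-p)-2\epsilon^2\ge2\epsilon^2$ respectively). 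Your sign bookkeeping is sound — on each subinterval $x-a$ keeps a fixed sign, so the pointwise bound $\bigl|\tfrac{x-a}{x(1-x)}\bigr|\ge4|x-a|$ is legitimate — and the endpoint degeneracies at $x\in\{0,1\}$ only strengthen the bound. What your approach buys is a self-contained, uniform proof of all three claims with no external citation; what the paper's buys is brevity for (c) once (a) is taken as known, plus the mildly illuminating structural fact that $q\mapsto D(q\|r)-D(q\|r')$ is affine.
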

%\begin{comment}
\begin{proof}
Proofs of (a) and (b) are given in \cite[Lemma~10.2]{lattimore2018bandit}. 
%\text{Proof of (c):}
Let $h(q)=D(q||p+\epsilon)-D(q||p)$. %Hence,
%\begin{equation*}
%$    h(q)=q\log \frac{p}{p+\epsilon}+(1-q) \log \frac{1-p}{1-p-\epsilon}.$
%\end{equation*}
We get, $h'(q)=\log \frac{p(1-p-\epsilon)}{(p+\epsilon)(1-p)}<0$ (since $p\le p+\epsilon\le q$). Therefore, $h(q)$ is linear and decreasing in $q$. Thus, 
%using Lemma \ref{lemma:pinsker}(a),
%\begin{equation*}
   $ h(q)\le h(p+\epsilon)=-D(p+\epsilon||p)\le -2\epsilon^2$.
%\end{equation*}
%This completes the proof of the lemma.
\end{proof}
%\end{comment}
%\begin{lemma}\label{lemma:chernoff}
%Assume that $W_1,\ldots, W_n$ is a sequence of i.i.d. Bernoulli random variables with mean $\mu$.
%Let $\hat{\mu}=\frac{1}{n}\sum \limits_{t=1}^n W_t$. Then, 
% for $\epsilon \in [0,1-\mu]$,
%\begin{equation*}
%    $\mathbb{P}(\hat{\mu}\ge \mu+\epsilon)\le \exp(-nD(\mu+\epsilon||\mu))$,\\
%\end{equation*}
%and for
%$\epsilon \in [0,\mu]$,
%\begin{equation*}
%    $\mathbb{P}(\hat{\mu}\le \mu-\epsilon)\le \exp(-nD(\mu-\epsilon||\mu))$.
%\end{equation*}
%\end{lemma}
%\begin{proof}
% Proof is given in \cite[Lemma~10.3]{lattimore2018bandit}.
%\end{proof}
\begin{corollary}\cite[Corollary~10.4]{lattimore2018bandit}\label{corollary:1}
For any $a\ge0$,
%\begin{equation*}
$\mathbb{P}(D(\hat{\mu}||\mu)\ge a, \hat{\mu}\le \mu)\le \exp(-na)$
%\end{equation*}
and 
%\begin{equation*}
$\mathbb{P}(D(\hat{\mu}||\mu)\ge a, \hat{\mu}\ge \mu)\le \exp(-na).$
%\end{equation*}
\end{corollary}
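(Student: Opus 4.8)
The plan is to recognize that $\hat{\mu}$ is the empirical mean of $n$ i.i.d. Bernoulli($\mu$) rewards and to establish the two tail bounds by the Chernoff method, whose Cram\'er rate function for a Bernoulli variable coincides exactly with the binary KL divergence $D(\cdot||\cdot)$. I would treat the lower-tail statement in detail; the upper-tail one then follows by the symmetric substitution $X_i \mapsto 1-X_i$, $\mu \mapsto 1-\mu$, which exchanges the two inequalities verbatim.

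First I would reduce the two-sided event to a one-sided threshold event. Since $x \mapsto D(x||\mu)$ is continuous and strictly decreasing on $[0,\mu]$ with $D(\mu||\mu)=0$, for any $a$ with $0 \le a \le D(0||\mu)$ there is a unique $x_a \in [0,\mu]$ satisfying $D(x_a||\mu)=a$ (and if $a > D(0||\mu)$ the event is empty, so the bound holds trivially). Monotonicity then gives the identity $\{D(\hat{\mu}||\mu)\ge a,\ \hat{\mu}\le\mu\} = \{\hat{\mu} \le x_a\}$, so it suffices to bound $\mathbb{P}(\hat{\mu} \le x_a)$.

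Next I would apply the exponential Markov inequality: for every $\lambda \ge 0$ one has $\mathbb{P}(\hat{\mu} \le x_a) \le \mathbb{E}\bigl[e^{-\lambda(X_1 - x_a)}\bigr]^{n} = \exp\bigl(n\,[\lambda x_a + \log((1-\mu)+\mu e^{-\lambda})]\bigr)$, and then optimize over $\lambda$. The minimizer is $\lambda^\star = \log\frac{\mu(1-x_a)}{(1-\mu)x_a}$, which is nonnegative precisely because $x_a \le \mu$, and substituting it collapses the exponent to $-n\,D(x_a||\mu) = -na$, giving $\mathbb{P}(\hat{\mu} \le x_a) \le e^{-na}$.

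The hard part will be verifying the Cram\'er identity $\inf_{\lambda\ge0}\{\lambda x + \log((1-\mu)+\mu e^{-\lambda})\} = -D(x||\mu)$ for $x \le \mu$ cleanly: although it is only a one-variable differentiation, one must confirm that the interior critical point is admissible (i.e. $\lambda^\star \ge 0$, which is exactly where the hypothesis $\hat{\mu}\le\mu$ enters) and that the optimal value equals the negative binary KL divergence rather than a looser surrogate. Everything else is bookkeeping, and the upper-tail bound is obtained identically after the complementation $X_i \mapsto 1-X_i$.
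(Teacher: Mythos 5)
Your proof is correct, and it is essentially the canonical argument: the paper does not prove this corollary at all but simply imports it from \cite[Corollary~10.4]{lattimore2018bandit}, and your Cram\'er--Chernoff derivation (reduce $\{D(\hat{\mu}||\mu)\ge a,\ \hat{\mu}\le\mu\}$ to $\{\hat{\mu}\le x_a\}$ by monotonicity of $D(\cdot||\mu)$ on $[0,\mu]$, then optimize the exponential Markov bound, checking $\lambda^\star\ge 0$ exactly when $x_a\le\mu$ and that the optimal exponent is $-nD(x_a||\mu)=-na$) is precisely how that cited result is established. The only remark worth adding is that in this paper the rewards are genuinely $\{0,1\}$-valued, so your Bernoulli assumption is exactly the setting needed; for general $[0,1]$-valued rewards one would add the standard observation that the moment generating function of a $[0,1]$-valued variable with mean $\mu$ is dominated by that of a Bernoulli$(\mu)$ variable.
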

%\begin{proof}
%Proof is given in \cite[Corollary~10.4]{lattimore2018bandit}.
%\end{proof}
Let the transition probabilities from state 0 to state 1 and state 1 to state 0 of a two-state Markov chain
$\{M_i\}_{i\ge 0}$ be $P_{01}$ and $P_{10}$, respectively. 
Let $P_{00}=1-P_{01}$ and 
$P_{11}=1-P_{10}$.
Let the stationary probabilities of states 0 and 1 be $\pi_0$ and $\pi_1$, respectively. 
Let $N_0(t)=\sum_{i=1}^{t}\mathbbm{1}\{M_i=0\}$,
$N_1(t)=\sum_{i=1}^{t}\mathbbm{1}\{M_i=1\}$ and $N_{ij}(t)$ denote the number of times the Markov chain transitions from state $i$ to state $j$
till time $t$. Let 
$\hat{P}_{ij}(t)=\frac{N_{ij}(t)}{N_i(t)}$. 
The proposition presented next establishes that the fraction of visits to any state of a Markov chain is never too away from the stationary probability of the state.
\textcolor{black}{This describes the contribution of mixing time to the regret upper bound. Clearly the upper bound in Proposition \ref{lemma:mixing} is finite and does not contribute to the asymptotic regret bound.} The proposition described next depicts that the estimates of the transition probabilities associated with a Markov chain are never too far from the true transition probabilities.
Proposition \ref{lemma:Hellinger}, 
%considers two cases, viz., i.i.d.( $P_{01}+P_{10}=1$) and truly Markovian arm reward ($P_{01}+P_{10}\neq 1$), respectively. It, 
alongside Borel-Cantelli Lemma, establishes that  appropriate conditions on the \textcolor{black}{total variation} distance are satisfied infinitely often for i.i.d. ($P_{01}+P_{10}=1$) and truly Markovian arm reward ($P_{01}+P_{10}\neq 1$), respectively.

\begin{proposition}\label{lemma:mixing}
Let $C_t:=\{|\frac{N_0(t)}{t-1}-\pi_0|>\epsilon_1\}$. 
Then, 
\begin{equation*}
\sum_{t=1}^{n}\mathbb{P}(C_t)\le \sum_{t=1}^{\infty} (t+1)^3  \exp(-2(t-1){\epsilon_1}^2(P_{01}+P_{10})^2).     
\end{equation*}
\end{proposition}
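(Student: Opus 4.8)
The plan is to show that the fraction of time the chain spends in state $0$ concentrates around $\pi_0=\frac{P_{10}}{P_{01}+P_{10}}$, exploiting the fact that for a two-state chain the numbers of up- and down-crossings are tied together deterministically. Throughout I would work with $N_0(t-1)$ (visits among the first $t-1$ steps, from which transitions are observed); since $|N_0(t)-N_0(t-1)|\le 1$, replacing $N_0(t)$ by $N_0(t-1)$ only perturbs the threshold defining $C_t$ by $\frac{1}{t-1}$, which I absorb into an additive constant at the end.

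First I would record the deterministic crossing identity $|N_{01}(t)-N_{10}(t)|\le 1$: along any trajectory the chain alternates between runs in state $0$ and runs in state $1$, so the counts of $0\to1$ and $1\to0$ transitions differ by at most one. Next I would introduce the single martingale $W_t:=\sum_{k=1}^{t-1} D_k$ with respect to $\mathcal{F}_k=\sigma(M_1,\dots,M_k)$, where $D_k=\big(\mathbbm{1}\{M_k=0,M_{k+1}=1\}-\mathbbm{1}\{M_k=1,M_{k+1}=0\}\big)-\big(P_{01}\mathbbm{1}\{M_k=0\}-P_{10}\mathbbm{1}\{M_k=1\}\big)$. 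By the Markov property each $D_k$ has conditional mean zero, so $W_t$ is a martingale, and a case check on $M_k\in\{0,1\}$ shows each $D_k$ ranges over an interval of length at most $1$. Consequently $W_t=(N_{01}(t)-N_{10}(t))-(P_{01}N_0(t-1)-P_{10}N_1(t-1))$, and using $N_0(t-1)+N_1(t-1)=t-1$ together with the definition of $\pi_0$ gives the key identity $P_{01}N_0(t-1)-P_{10}N_1(t-1)=(P_{01}+P_{10})\big(N_0(t-1)-\pi_0(t-1)\big)$.

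Combining the crossing identity with this gives $(P_{01}+P_{10})\,\big|N_0(t-1)-\pi_0(t-1)\big|\le |W_t|+1$, so the event $C_t$ forces $|W_t|$ to exceed $(t-1)(P_{01}+P_{10})\epsilon_1$ minus an $O(1)$ slack. I would then apply the range form of Azuma--Hoeffding to $W_t$ (a sum of $t-1$ increments each of range at most $1$), which yields $\mathbb{P}(C_t)\le 2\exp\!\big(-2(t-1)\epsilon_1^2(P_{01}+P_{10})^2+O((P_{01}+P_{10})\epsilon_1)\big)$; this already reproduces the exponent in the statement, and summing over $t$ gives the claimed series.

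The delicate point---and the reason the statement carries a polynomial prefactor $(t+1)^3$ rather than the constant $2$ from Azuma---is the bookkeeping of the several $O(1)$ correction terms ($N_0(t)$ versus $N_0(t-1)$, the $\pm1$ crossing slack, and the cross term in the exponent) together with the behaviour for small $t$, where the threshold $(t-1)(P_{01}+P_{10})\epsilon_1$ may be comparable to the slack. A clean alternative that produces the stated uniform $(t+1)^3$ factor is to replace the Azuma step by a method-of-types bound: the number of transition-count vectors $(N_{00},N_{01},N_{10},N_{11})$ summing to $t-1$ is $O((t+1)^3)$, each trajectory type has probability at most $\exp(-(t-1)D(\hat P\|P))$, and Pinsker's inequality (Proposition~\ref{lemma:pinsker}(a)) lower-bounds the relevant conditional divergence by $2\epsilon_1^2(P_{01}+P_{10})^2$ on $C_t$ after the same stationarity algebra. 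I expect this translation between the empirical transition type and the deviation of $N_0/(t-1)$ from $\pi_0$ to be the main obstacle, since it is precisely where the factor $(P_{01}+P_{10})^2$ (the two-state eigenvalue gap) must be extracted from the divergence; the martingale route above is the cleanest way to confirm that this factor is the correct one.
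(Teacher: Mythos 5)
Your proposal is correct in substance, but your primary route is genuinely different from the paper's. The paper proves this proposition by the method of types: it writes the probability of a trajectory as $\exp\big((t-1)\sum_i \hat{\pi}(i,t)\sum_j \hat{P}_{ij}(t)\log P_{ij}\big)$, bounds the number of types by $(t+1)^3$, reduces the exponent on $C_t$ to a constrained minimization of $\hat{\pi}(0,t)D(\hat{P}_{01}(t)||P_{01})+\hat{\pi}(1,t)D(\hat{P}_{10}(t)||P_{10})$, and lower-bounds the minimum by $2\epsilon_1^2(P_{01}+P_{10})^2$ via Pinsker's inequality --- exactly the ``clean alternative'' you sketch in your last paragraph. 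Your main route instead builds the bounded-increment martingale $W_t=(N_{01}(t)-N_{10}(t))-(P_{01}+P_{10})(N_0(t-1)-\pi_0(t-1))$ and applies Azuma--Hoeffding; this is more elementary, makes completely transparent where the eigenvalue-gap factor $(P_{01}+P_{10})^2$ in the exponent comes from (it is the coefficient tying the occupation-measure deviation to a martingale with unit-range increments), and for large $t$ actually yields a sharper constant prefactor $2e^{O(\epsilon_1(P_{01}+P_{10}))}$ in place of $(t+1)^3$. The one caveat is that this prefactor does not sit termwise below $(t+1)^3$ for the smallest values of $t$, so the martingale route proves a bound of the same form but not the literal displayed inequality; you correctly flag this bookkeeping issue and point to the types argument (the paper's argument) as the way to recover the stated $(t+1)^3$. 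Since the proposition is only ever used through the finiteness and asymptotic negligibility of the right-hand side, either route serves the paper's purposes equally well.
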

%\begin{proof}
%Proof is provided in Appendix \ref{app:mixing}.
%\end{proof}
\begin{proposition}\label{lemma:concentratation}
Let $D_{0,t}:=\{|\hat{P}_{01}(t)-P_{01}|>\epsilon_1\}$
and 
 $D_{1,t}:=\{|\hat{P}_{10}(t)-P_{10}|>\epsilon_1\}$.
%The events in the sequence $\{{D_s}\}_{s=1}^{\infty}$ occur finitely often. 
Then, 
\begin{equation*}
\sum_{t=1}^{n}\mathbb{P}(D_{0,t})\le \frac{1}{{\epsilon_1}^2}+ \sum_{t=1}^{\infty} (t+1)^3  \exp(-2(t-1){\epsilon_1}^2(P_{01}+P_{10})^2),    
\end{equation*}
%and 
\begin{equation*}
\sum_{t=1}^{n}\mathbb{P}(D_{1,t})\le \frac{1}{{\epsilon_1}^2}+ \sum_{t=1}^{\infty} (t+1)^3  \exp(-2(t-1){\epsilon_1}^2(P_{01}+P_{10})^2).
\end{equation*}
\end{proposition}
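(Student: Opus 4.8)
The plan is to reduce each empirical transition probability to an i.i.d.\ Bernoulli average via the strong Markov property, and then treat the ratio estimator $\hat P_{01}(t)=N_{01}(t)/N_0(t)$ by separating control of its denominator (the occupation count $N_0(t)$, already supplied by Proposition \ref{lemma:mixing}) from control of its numerator given the count.

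First I would set up the embedding. Writing $Y_1,Y_2,\dots$ for the indicators that successive departures from state $0$ land in state $1$, the strong Markov property makes $\{Y_k\}$ i.i.d.\ Bernoulli$(P_{01})$, and $\hat P_{01}(t)=\frac{1}{N_0(t)}\sum_{k=1}^{N_0(t)}Y_k=\bar Y_{N_0(t)}$, up to the harmless boundary case in which the trajectory ends in state $0$. Thus $D_{0,t}=\{|\bar Y_{N_0(t)}-P_{01}|>\epsilon_1\}$, and the only non-i.i.d.\ feature is that the number of summands $N_0(t)$ is itself random and coupled to the $Y_k$.

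Next I would decompose against the occupation event $C_t=\{|N_0(t)/(t-1)-\pi_0|>\epsilon_1\}$ of Proposition \ref{lemma:mixing}:
$$\mathbb{P}(D_{0,t})\le \mathbb{P}(D_{0,t}\cap C_t^{\mathrm{C}})+\mathbb{P}(C_t).$$
Summing over $t$, the $\mathbb{P}(C_t)$ terms are exactly bounded by $\sum_{t\ge1}(t+1)^3\exp(-2(t-1)\epsilon_1^2(P_{01}+P_{10})^2)$ via Proposition \ref{lemma:mixing}, producing the second term of the claim. On $C_t^{\mathrm{C}}$ the count is large, $N_0(t)\ge(\pi_0-\epsilon_1)(t-1)$, so the average is taken over many samples. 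For a fixed count $m$, Corollary \ref{corollary:1} together with Pinsker's inequality (Proposition \ref{lemma:pinsker}(a)) gives the Bernoulli tail $\mathbb{P}(|\bar Y_m-P_{01}|>\epsilon_1)\le 2\exp(-2m\epsilon_1^2)$; the union-bound embedding $\{D_{0,t},\,N_0(t)\ge m_0\}\subseteq\bigcup_{m\ge m_0}\{|\bar Y_m-P_{01}|>\epsilon_1\}$ lets me apply this tail without requiring $N_0(t)$ to be independent of $\{Y_k\}$.

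Finally I would assemble the numerator contribution into a single geometric series. Using $e^{u}-1\ge u$ one has $\sum_{m\ge1}2\exp(-2m\epsilon_1^2)\le 1/\epsilon_1^2$, which is precisely the first term of the claim; the role of $C_t^{\mathrm{C}}$ is to guarantee that each deviation is charged at a growing count so that, across the horizon, every attainable value of $N_0(t)$ contributes its tail essentially once rather than once per time step. The bound for $D_{1,t}$ is identical after interchanging the two states (departures from state $1$, count $N_1(t)$, target $P_{10}$), and $\pi_0,\pi_1$ enter only through the spectral gap $P_{01}+P_{10}$ already present in Proposition \ref{lemma:mixing}. The main obstacle is exactly this last collapse: because $\hat P_{01}(t)$ is a ratio whose sample size $N_0(t)$ is random and statistically coupled to the very increments $Y_k$ being averaged, naive conditioning on $\{N_0(t)=m\}$ does \emph{not} yield a clean Binomial$(m,P_{01})$, so the argument must route the numerator deviation through the count-free union bound and then argue that restricting to $C_t^{\mathrm{C}}$ prevents the per-time probabilities from over-counting the same $\bar Y_m$ over a sojourn in state $1$.
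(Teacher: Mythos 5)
Your proposal follows essentially the same route as the paper's proof: decompose $\mathbb{P}(D_{0,t})$ against the occupation event $C_t$ of Proposition \ref{lemma:mixing}, bound $\sum_t\mathbb{P}(C_t)$ by that proposition, convert the on-$C_t^{\mathrm{C}}$ contribution into a single sum over sample counts, apply the Bernoulli/Hoeffding tail $2\exp(-2s_0\epsilon_1^2)$, and sum the geometric series to get $1/\epsilon_1^2$. Your explicit discussion of the random, coupled sample size $N_0(t)$ and of why each count is charged essentially once is in fact more careful than the paper's terse chain of inequalities, but the argument is the same.
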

%\begin{proof}
%Proof is given in Appendix  \ref{app:mixing}.
%\end{proof}
%truly Markovian and i.i.d. arms.
\begin{proposition}\label{lemma:Hellinger}
Let $B_t:=\{\textcolor{black}{(|{\hat{p}}_{01}^i(t)+{\hat{p}}_{10}^i(t)-1|<\frac{1}{t^{1/4}}\}}$. 
If $P_{01}+P_{10}=1$, then 
\begin{equation*}
\sum_{t=1}^n \mathbb{P}(B_t^c) \le \sum_{t=1}^{\infty} 4 \exp(-\frac{2}{9}\sqrt{t})+\sum_{t=1}^{\infty} (t+1)^3  \exp(-2\sqrt{t-1}).
\end{equation*}
If $P_{01}+P_{10}< 1$, then for $\tau\ge \frac{1}{\textcolor{black}{|P_{01}+{P}_{10}+2\epsilon_1-1|^4}}$ and if $P_{01}+P_{10}> 1$, then for $\tau\ge \frac{1}{\textcolor{black}{|P_{01}+{P}_{10}-2\epsilon_1-1|^4}}$,\\ 
\begin{equation*}
\sum_{t=\tau}^n \mathbb{P}(B_t) \le \frac{2}{{\epsilon_1}^2}+ \sum_{t=1}^{\infty} 2(t+1)^3  \exp(-2(t-1){\epsilon_1}^2(P_{01}+P_{10})^2),
\end{equation*}
%and
%if $P_{01}+P_{10}> 1$, then for $\tau\ge \frac{1}{\textcolor{black}{|P_{01}+{P}_{10}-2\epsilon_1-1|^4}}$ ,
%\begin{equation*}
%\sum_{t=\tau}^n \mathbb{P}(B_t) \le \frac{2}{{\epsilon_1}^2}+ \sum_{t=1}^{\infty} 2(t+1)^3  \exp(-2(t-1){\epsilon_1}^2(P_{01}+P_{10})^2),
%\end{equation*}
where $\epsilon_1<\frac{|P_{01}+P_{10}-1|}{2}$.
\end{proposition}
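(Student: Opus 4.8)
The plan is to handle the two regimes separately, exploiting that the thresholded quantity $|\hat{p}_{01}^i(t-1)+\hat{p}_{10}^i(t-1)-1|$ concentrates around the true gap $|P_{01}+P_{10}-1|$, which is exactly $0$ in the i.i.d.\ regime and bounded away from $0$ in the truly Markovian regime. In both cases the engine is a concentration bound for the empirical transition probabilities around their true values, combined with the mixing-time control of Proposition~\ref{lemma:mixing}; the only genuinely new wrinkle relative to Proposition~\ref{lemma:concentratation} is that the deviation tolerance here shrinks with $t$ like $t^{-1/4}$.

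For the i.i.d.\ case $P_{01}+P_{10}=1$ I would bound $\sum_t \mathbb{P}(B_t^c)$ via a triangle inequality. Since $P_{01}+P_{10}-1=0$, if both $|\hat{p}_{01}^i(t-1)-P_{01}|<\tfrac{1}{3}t^{-1/4}$ and $|\hat{p}_{10}^i(t-1)-P_{10}|<\tfrac{1}{3}t^{-1/4}$, then $|\hat{p}_{01}^i(t-1)+\hat{p}_{10}^i(t-1)-1|<\tfrac{2}{3}t^{-1/4}<t^{-1/4}$, so $B_t$ holds. Taking contrapositives, $B_t^c\subseteq\{|\hat{p}_{01}^i(t-1)-P_{01}|\ge\tfrac{1}{3}t^{-1/4}\}\cup\{|\hat{p}_{10}^i(t-1)-P_{10}|\ge\tfrac{1}{3}t^{-1/4}\}$. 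I would bound each event by conditioning on the number of visits to the relevant state: given $N_0(t-1)$ visits to state $0$, the successive transitions out of state $0$ are i.i.d.\ $\mathrm{Bernoulli}(P_{01})$, so Hoeffding's inequality controls the deviation with an exponent of order $N_0(t-1)\,t^{-1/2}$. On the complement of the mixing event of Proposition~\ref{lemma:mixing} (taken with $\epsilon_1=(t-1)^{-1/4}$, where $(P_{01}+P_{10})^2=1$), both $N_0(t-1)$ and $N_1(t-1)$ are a fixed fraction of $t-1$, so this exponent is of order $\sqrt{t}$, yielding the $4\exp(-\tfrac{2}{9}\sqrt{t})$ contribution (the factor $4$ being two states times two-sided Hoeffding); note that a single mixing event suffices for both states because $N_0(t-1)+N_1(t-1)=t-1$ forces $|N_1/(t-1)-\pi_1|=|N_0/(t-1)-\pi_0|$. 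The residual low-visit event contributes the $(t+1)^3\exp(-2\sqrt{t-1})$ term, and both resulting series converge (stretched exponentials against a polynomial).

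For the truly Markovian case $P_{01}+P_{10}\neq1$ the roles reverse, and I would bound $\sum_{t=\tau}^n\mathbb{P}(B_t)$. If both $|\hat{p}_{01}^i(t-1)-P_{01}|\le\epsilon_1$ and $|\hat{p}_{10}^i(t-1)-P_{10}|\le\epsilon_1$, then $|\hat{p}_{01}^i(t-1)+\hat{p}_{10}^i(t-1)-1|\ge|P_{01}+P_{10}-1|-2\epsilon_1$; the hypothesis $\epsilon_1<\tfrac{1}{2}|P_{01}+P_{10}-1|$ makes this gap positive, and the stated lower bounds on $\tau$ are precisely the requirement that $|P_{01}+P_{10}-1|-2\epsilon_1\ge t^{-1/4}$ for all $t\ge\tau$ (checking the two signs of $P_{01}+P_{10}-1$ reproduces the two displayed thresholds). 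Hence for $t\ge\tau$ the event $B_t$ cannot occur unless a transition estimate is far from the truth, i.e.\ $B_t\subseteq D_{0,t-1}\cup D_{1,t-1}$ in the notation of Proposition~\ref{lemma:concentratation}. A union bound followed by that proposition applied in each transition direction gives $\sum_{t=\tau}^n\mathbb{P}(B_t)\le 2\big(\tfrac{1}{\epsilon_1^2}+\sum_{t}(t+1)^3\exp(-2(t-1)\epsilon_1^2(P_{01}+P_{10})^2)\big)$, which is the claimed bound.

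The main obstacle is the i.i.d.\ half. The difficulty is that the tolerance $t^{-1/4}$ decays with $t$ while the effective sample sizes $N_0(t-1)$ and $N_1(t-1)$ entering the empirical transition probabilities are themselves random and a priori could be small. One must therefore jointly control the random visit counts (through Proposition~\ref{lemma:mixing}) and the conditional Hoeffding deviation, and split the $t^{-1/4}$ budget between them so that both series are summable; the self-scaling $(t-1)\cdot((t-1)^{-1/4})^2=\sqrt{t-1}$ is exactly what drives convergence. Pinning down the explicit constants in the statement (the $\tfrac{2}{9}$ in the exponent and the $(t+1)^3$ polynomial factor) amounts to carefully tracking this split together with the mixing estimate, whereas the truly Markovian half is a routine consequence of Proposition~\ref{lemma:concentratation} once the threshold comparison fixing $\tau$ is in place.
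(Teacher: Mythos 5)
Your proposal follows the paper's proof essentially step for step: the i.i.d.\ case uses the same three-way triangle-inequality split of the $t^{-1/4}$ budget (source of the $\tfrac{2}{9}$ exponent), conditional Hoeffding given the visit counts, and the mixing bound of Proposition~\ref{lemma:mixing} with $\epsilon_1=(t-1)^{-1/4}$; the truly Markovian case reduces, exactly as in the paper, to Proposition~\ref{lemma:concentratation} after checking that the stated thresholds on $\tau$ are equivalent to $|P_{01}+P_{10}-1|-2\epsilon_1\ge t^{-1/4}$ for $t\ge\tau$. The only cosmetic divergence is in the i.i.d.\ half: the paper re-indexes the sum by the visit count $s_0$ and measures the deviation against $s_0^{-1/4}$ to get exactly $\exp(-\tfrac{2}{9}\sqrt{t})$, whereas your route via $N_0(t-1)\approx\pi_0(t-1)$ against the threshold $t^{-1/4}$ would leave a harmless extra factor of $\pi_0$ in the exponent, a bookkeeping point you already flag.
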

%\begin{proof}
%Proof is given in Appendix \ref{app:hellinger}.
%\end{proof}
Proofs of Propositions \ref{lemma:mixing}, \ref{lemma:concentratation} and \ref{lemma:Hellinger} are provided in Appendix \ref{app:234}.
The proposition presented next is used to prove that the confidence bounds on transition probabilities associated with the optimal arm are never too far from the respective true transition probabilities. The  proposition thereafter is used to establish that the index associated with a sub-optimal arm is not often much greater than the index of the optimal arm.
Let $p_1,p_2,q_1,q_2\in[0,1]$.
Let $Z_1(s)$ be a non-negative random variable with $Z_1(s)\in[0,1], s=1,2,\ldots,n$. Let $W_{1},W_{2},\ldots, W_{n}$ be i.i.d. Bernoulli random variables with mean $\mu \in [0,1]$, and $Z_2(s)$ is measurable with respect to $\sigma\{W_{1},\ldots,W_{s}\}$.
Proofs of Propositions \ref{lemma_H:1_Markovian} and \ref{lemma:2a} are given in
Appendix \ref{app:234}.

  \begin{proposition}\label{lemma_H:1_Markovian}
(a) Let $X_1,X_2,\ldots, X_n$ be i.i.d. Bernoulli random variables with mean $p\in[0,1]$, and $\epsilon_{p}>0$. 
Let $\hat{p}_s=\frac{1}{s}\sum \limits_{i=1}^{s}X_s$, where $s$ is the number of samples till time $t$.
Let $\tilde{p}_s^*= \max\{\tilde{p}\in [\hat{p}_s,1]:D(\hat{p}_s||\tilde{p})\le \frac{\log f(t)}{s}\}.$
Then,
%\begin{equation*}
$\sum_{s=1}^{n}\mathbb{P}(\tilde{p}_s^*<p-\epsilon_{p})
\le \frac{2}{{\epsilon_{p}}^2}$.
%\end{equation*}
%\end{lemma}
%\begin{proof}
%Proof is given in Appendix \ref{app:p_index}.
%\end{proof}
%  \begin{lemma}\label{lemma_H:2_Markovian}
\\(b) Let $Y_1,Y_2,\ldots, Y_n$ be i.i.d. Bernoulli random variables with mean $q\in[0,1]$, $\epsilon_{q}>0$. 
Let $\hat{q}_s=\frac{1}{s}\sum \limits_{i=1}^{s}Y_s$  
and $\tilde{q}_s^*= \min\{\tilde{q}\in [0,\hat{q}_s]:D(\hat{q}_s||\tilde{q})\le \frac{\log f(t)}{s}\},$
%where $T(t)$ is the number of samples till time $t$.
Then,
%\begin{equation*}
$\sum_{s=1}^{n}\mathbb{P}(\tilde{q}_s^*>q+\epsilon_{q})
\le \frac{2}{{\epsilon_{q}}^2}$.
%\end{equation*}
\end{proposition}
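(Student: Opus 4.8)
The plan is to exploit the fact that the confidence bound $\tilde{p}_s^*$ is \emph{one-sided}: by construction $\tilde{p}_s^* \ge \hat{p}_s$, since $\hat{p}_s$ itself lies in the constraint set (as $D(\hat{p}_s\|\hat{p}_s)=0 \le \frac{\log f(t)}{s}$) and $\tilde{p}_s^*$ is the supremum over that set. This monotone containment is the crux of the argument: it lets us discard the threshold $\frac{\log f(t)}{s}$ altogether, which is precisely why the claimed bound $\frac{2}{\epsilon_p^2}$ depends on neither $n$ nor $f$.

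First I would note that the event $\{\tilde{p}_s^* < p-\epsilon_p\}$ forces $\hat{p}_s \le \tilde{p}_s^* < p - \epsilon_p$, yielding the inclusion $\{\tilde{p}_s^* < p-\epsilon_p\} \subseteq \{\hat{p}_s < p-\epsilon_p\}$. (If $p-\epsilon_p \le 0$ this set is empty and the bound is trivial, so I may assume $p-\epsilon_p>0$.) It therefore suffices to control the downward deviation of the empirical mean $\hat{p}_s$. On the event $\{\hat{p}_s \le p-\epsilon_p\}$ we have $\hat{p}_s \le p$, and since $x \mapsto D(x\|p)$ is decreasing on $[0,p]$, combined with Pinsker's inequality (Proposition \ref{lemma:pinsker}(a)) we get $D(\hat{p}_s\|p) \ge D(p-\epsilon_p\|p) \ge 2\epsilon_p^2$. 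Applying the Chernoff-type tail bound of Corollary \ref{corollary:1} (the $\hat{\mu}\le\mu$ case) then gives $\mathbb{P}(\hat{p}_s \le p-\epsilon_p) \le \exp(-2s\epsilon_p^2)$.

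The final step is to sum the resulting geometric series, $\sum_{s=1}^{n}\exp(-2s\epsilon_p^2) \le \sum_{s=1}^{\infty}\exp(-2s\epsilon_p^2) = \frac{e^{-2\epsilon_p^2}}{1-e^{-2\epsilon_p^2}} \le \frac{1}{2\epsilon_p^2} \le \frac{2}{\epsilon_p^2}$, where the penultimate inequality follows from $e^{c}\ge 1+c$ (equivalently $(c+1)e^{-c}\le 1$). Part (b) is the mirror image: here $\tilde{q}_s^* \le \hat{q}_s$ by construction, so $\{\tilde{q}_s^* > q+\epsilon_q\} \subseteq \{\hat{q}_s > q+\epsilon_q\}$, and invoking the second inequality of Corollary \ref{corollary:1} (the $\hat{\mu}\ge\mu$ case) together with $D(q+\epsilon_q\|q)\ge 2\epsilon_q^2$ gives the symmetric estimate $\frac{2}{\epsilon_q^2}$.

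The argument is essentially routine once the containment is spotted, and I anticipate no genuine obstacle. The only place warranting care is confirming that the threshold $\frac{\log f(t)}{s}$ truly drops out: the inclusion $\{\tilde{p}_s^* < p-\epsilon_p\}\subseteq\{\hat{p}_s<p-\epsilon_p\}$ must be checked to hold regardless of whether the supremum defining $\tilde{p}_s^*$ is attained at an interior point of $[\hat{p}_s,1]$ or at the boundary $\tilde{p}=1$, which follows immediately from $\tilde{p}_s^*\ge\hat{p}_s$ in either case. This is pure bookkeeping rather than a real difficulty.
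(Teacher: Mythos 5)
Your containment $\tilde{p}_s^* \ge \hat{p}_s$, and hence the inclusion $\{\tilde{p}_s^* < p-\epsilon_p\} \subseteq \{\hat{p}_s < p-\epsilon_p\}$, is correct, and if the sum in the proposition is read literally as running over distinct sample counts $s=1,\ldots,n$, your geometric series $\sum_{s\ge 1} e^{-2s\epsilon_p^2} \le \frac{1}{2\epsilon_p^2}$ does close the argument. But this is not the route the paper takes, and the difference is not cosmetic. The paper keeps the event $D(\hat{p}_s\|p-\epsilon_p) > \frac{\log f(t)}{s}$, shifts it via Proposition~\ref{lemma:pinsker}(b) to $D(\hat{p}_s\|p) > \frac{\log f(t)}{s} + 2\epsilon_p^2$, and applies Corollary~\ref{corollary:1} to get $\exp\big(-s\big(\tfrac{\log f(t)}{s}+2\epsilon_p^2\big)\big) = \frac{1}{f(t)}e^{-2s\epsilon_p^2}$; union-bounding over all possible values of $s$ gives $\frac{1}{2\epsilon_p^2 f(t)}$ for each fixed $t$, and the final $\frac{2}{\epsilon_p^2}$ comes from $\sum_t \frac{1}{f(t)} \le 4$. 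Note that the paper's last display silently switches the summation index from $s$ to $t$: the quantity actually being controlled is a sum over \emph{time}, with the sample count at time $t$ random and able to stall at the same value for many consecutive rounds, since the optimal arm's index is evaluated at every round including those where that arm is not pulled.

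Under that reading --- which is the one used in the regret decomposition, where the proposition is invoked as $\sum_{t=1}^n \mathbb{P}(\tilde{p}_{01}^{*1}(t) < p_{01}^1 - \epsilon_p) \le \frac{2}{\epsilon_p^2}$ --- your argument breaks. Discarding the threshold leaves a per-round bound of $\mathbb{E}[\exp(-2\,s(t)\,\epsilon_p^2)]$ with $s(t)$ random; union-bounding over the possible values of $s$ gives only $\frac{1}{2\epsilon_p^2}$ per round, which does not decay in $t$, so the sum over $t$ is linear in $n$ rather than $O(1/\epsilon_p^2)$. The entire reason the confidence level is $\frac{\log f(t)}{s}$ rather than a constant is to buy the $\frac{1}{f(t)}$ factor that makes the double sum over $(t,s)$ converge; your observation that ``the threshold drops out'' is exactly the step that throws this away. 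So: correct for the statement as typeset, but missing the idea the proposition is actually there to deliver.
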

%\begin{proof}
%Proof is given in Appendix \ref{app:q_index}.
%\end{proof}

\begin{proposition}\label{lemma:2a}
(a)        $\kappa_1:=\sum\limits_{s=c}^n \mathbb{P}(D(p_2+\epsilon_1||\frac{(p_1-\epsilon_p)(q_2-\epsilon_1)}{q_1+\epsilon_1}+Z_1(s))\le \frac{a}{s})$.
Then 
$    \kappa_1\le \inf\limits_{\epsilon_1}(\frac{a}{D(p_2+\epsilon_1||\frac{(p_1-\epsilon_p)(q_2-\epsilon_1)}{q_1+\epsilon_1})}-(c-1))^+$,\\
(b) 
$   \kappa_2:=\sum\limits_{s=c}^n \mathbb{P}(D(q_2-\epsilon_1||\frac{(q_1+\epsilon_1)(p_2+\epsilon_1)}{p_1-\epsilon_p}-Z_1(s))\le \frac{a}{s})$.
Then 
    $\kappa_2\le \inf\limits_{\epsilon_1}(\frac{a}{D(q_2-\epsilon_1||\frac{(q_1+\epsilon_1)(p_2+\epsilon_1)}{p_1-\epsilon_p})}-(c-1))^+$,
%    \end{aligned}
%\end{equation*}
\\
%\end{lemma}
%\begin{proof}
%Proof is given in Appendix \ref{app:subopt_1}.
%\end{proof}
%\begin{lemma}\label{lemma:mu_H_2a_case3}
(c) 
    $\kappa_3:=\sum\limits_{s=c}^{n} \mathbb{P}(D(\hat{\mu}_s||\frac{p-\epsilon_p}{p-\epsilon_p+q+\epsilon_1}+Z_2(s))\le \frac{a}{s}).$
%\end{equation*}
Then 
%\begin{equation*}
%\begin{aligned}
    $\kappa_3\le \inf\limits_{\epsilon_1}[\frac{1}{2{\epsilon_1}^2}+(\frac{a}{D(\mu+\epsilon_1||\frac{p-\epsilon_{p}}{p-\epsilon_{p}+q+\epsilon_1})}-(c-1))^+]$,
%    \end{aligned}
%\end{equation*}
where $\mu+\epsilon_1<\frac{p-\epsilon_p}{p-\epsilon_p+q+\epsilon_1}$ and  
$p_2+\epsilon_1<\frac{(p_1-\epsilon_p)(q_2-\epsilon_1)}{q_1+\epsilon_1}$.
\end{proposition}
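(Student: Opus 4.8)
The plan is to reduce each probability to a deterministic threshold on the sample index $s$ by exploiting the monotonicity of the binary KL divergence $D(x\|y)$ in its second argument (increasing in $y$ for $y>x$, decreasing for $y<x$) and, where needed, in its first argument. Once the perturbed divergence is bounded below by a \emph{constant} $d$, the event $\{D(\cdot\|\cdot)\le a/s\}$ is contained in $\{d\le a/s\}=\{s\le a/d\}$, which is non-random, so summing the indicators over $c\le s\le n$ leaves at most $\lfloor a/d\rfloor-(c-1)$ terms, i.e.\ a contribution of $(a/d-(c-1))^+$. Since every estimate below holds for each admissible $\epsilon_1$, one may finally take the infimum over $\epsilon_1$ as stated.

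For part (a), the side condition $p_2+\epsilon_1<\frac{(p_1-\epsilon_p)(q_2-\epsilon_1)}{q_1+\epsilon_1}=:b$ places the fixed first argument below the base $b$. Because $Z_1(s)\ge 0$, the second argument $b+Z_1(s)\ge b>p_2+\epsilon_1$ lies on the increasing branch, so $D(p_2+\epsilon_1\|b+Z_1(s))\ge D(p_2+\epsilon_1\|b)=:d$ pointwise in the sample path. The event therefore collapses to $\{s\le a/d\}$, and the counting step above gives $\kappa_1\le(a/d-(c-1))^+$. Part (b) is identical in spirit, except that the perturbation now subtracts $Z_1(s)$ and the relevant admissibility condition is $\frac{(q_1+\epsilon_1)(p_2+\epsilon_1)}{p_1-\epsilon_p}=:b'<q_2-\epsilon_1$, placing the base below the first argument. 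On this decreasing branch, $b'-Z_1(s)\le b'<q_2-\epsilon_1$ only enlarges the divergence, $D(q_2-\epsilon_1\|b'-Z_1(s))\ge D(q_2-\epsilon_1\|b')$, and the same deterministic-threshold counting yields the claimed bound.

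Part (c) is the substantive case, and I expect it to be the main obstacle, since the first argument $\hat{\mu}_s$ is now random rather than a fixed constant, so the pointwise reduction of (a)--(b) does not apply directly. I would split the probability according to whether $\hat{\mu}_s$ exceeds $\mu+\epsilon_1$. On the deviation event $\{\hat{\mu}_s>\mu+\epsilon_1\}$, Pinsker's inequality (Proposition~\ref{lemma:pinsker}(a)) combined with the Chernoff bound of Corollary~\ref{corollary:1} gives $\mathbb{P}(\hat{\mu}_s\ge\mu+\epsilon_1)\le e^{-2s\epsilon_1^2}$, and summing the geometric series over $s\ge 1$ contributes at most $\frac{1}{e^{2\epsilon_1^2}-1}\le\frac{1}{2\epsilon_1^2}$. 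On the complement $\{\hat{\mu}_s\le\mu+\epsilon_1\}$ I apply monotonicity twice: with $m:=\frac{p-\epsilon_p}{p-\epsilon_p+q+\epsilon_1}>\mu+\epsilon_1\ge\hat{\mu}_s$ and $Z_2(s)\ge 0$, increasingness in the second argument gives $D(\hat{\mu}_s\|m+Z_2(s))\ge D(\hat{\mu}_s\|m)$; then, since $\hat{\mu}_s\le\mu+\epsilon_1<m$ sits on the decreasing branch of $x\mapsto D(x\|m)$, we get $D(\hat{\mu}_s\|m)\ge D(\mu+\epsilon_1\|m)=:d'$. This again reduces the event to $\{s\le a/d'\}$, contributing $(a/d'-(c-1))^+$.

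Summing the two pieces produces $\kappa_3\le\frac{1}{2\epsilon_1^2}+\big(a/D(\mu+\epsilon_1\|m)-(c-1)\big)^+$, and taking the infimum over admissible $\epsilon_1$ yields the stated form. The technical care needed is precisely in checking the \emph{directions} of monotonicity in each case (which branch of $D$ the perturbation lands on, governed by the respective admissibility constraints) and in using $Z_2(s)\ge 0$ together with the measurability of $Z_2(s)$ with respect to $\sigma\{W_1,\dots,W_s\}$ only insofar as the inequalities hold pathwise, so that no independence between $\hat{\mu}_s$ and $Z_2(s)$ is required.
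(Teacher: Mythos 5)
Your proposal is correct and follows essentially the same route as the paper's proof: parts (a) and (b) are reduced to a deterministic threshold on $s$ via the monotonicity of $D(\cdot\|\cdot)$ in its second argument together with $Z_1(s)\ge 0$, and part (c) is handled by splitting on $\{\hat{\mu}_s\ge\mu+\epsilon_1\}$, bounding the deviation term by $\frac{1}{2\epsilon_1^2}$ via Pinsker/Chernoff, and applying monotonicity in the first argument on the complement. The directions of monotonicity and the counting step $(a/d-(c-1))^+$ are all exactly as in the paper.
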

%\begin{proof}
%Proof is given in Appendix \ref{app:subopt_2}.
%\end{proof}
\subsubsection{Truly Markovian Arms}
%We consider H-KL-UCB algorithm proposed by us.
An arm can either be in STP\_PHASE or in SM\_PHASE at any time $t$ depending on whether the condition on the \textcolor{black}{total variation} distance is satisfied or not. When all arms are truly Markovian, we establish that the expected number of times the arms are in SM\_PHASE, is finite. In other words, for both optimal and suboptimal arms, the appropriate conditions on the \textcolor{black}{total variation} distances 
are satisfied infinitely often.
% As defined in Lemma \ref{lemma:Hellinger},
Let $E_{1,i,t}:=\{\textcolor{black}{|{\hat{p}}_{01}^1(t)-1+{\hat{p}}_{10}^1(t)|>\frac{1}{{t}^{1/4}}}$,$\textcolor{black}{{|\hat{p}}_{01}^i(t)-1+{\hat{p}}_{10}^i(t)|>\frac{1}{{t}^{1/4}}}\}$. 
 For $t_{1,i}=
%\begin{equation*}
%\begin{aligned}
%
\frac{\mathbbm{1}\{p_{01}^i+p_{10}^i<1\}}{\textcolor{black}{TV^4}(p_{01}^i+\epsilon_1||1-{p}_{10}^i-\epsilon_1)}+
\frac{\mathbbm{1}\{p_{01}^i+p_{10}^i>1\}}{\textcolor{black}{TV^4}(p_{01}^i-\epsilon_1||1-{p}_{10}^i+\epsilon_1)}+
\frac{\mathbbm{1}\{p_{01}^1+p_{10}^1<1\}}{\textcolor{black}{TV^4}(p_{01}^1+\epsilon_1||1-{p}_{10}^1-\epsilon_1)}+
\frac{\mathbbm{1}\{p_{01}^1+p_{10}^1>1\}}{\textcolor{black}{TV^4}(p_{01}^1-\epsilon_1||1-{p}_{10}^1+\epsilon_1)}$,    
%\end{aligned}
%\end{equation*}
we obtain (using Proposition \ref{lemma:Hellinger})
\begin{equation}\label{eq:e_1_tc}
\begin{aligned}
&    \sum_{t=t_{1,i}}^n\mathbb{P}(E_{1,i,t}^c)
\le \sum_{t=t_{1,i}-1}^{n-1}\mathbb{P}(\textcolor{black}{TV}({\hat{p}}_{01}^1(t)||1-{\hat{p}}_{10}^1(t))\le\frac{1}{t^{1/4}})\\&+\mathbb{P}(\textcolor{black}{TV}({\hat{p}}_{01}^i(t)||1-{\hat{p}}_{10}^i(t))\le\frac{1}{t^{1/4}})\\&
\le \frac{4}{{\epsilon_1}^2}+2 \sum_{t=1}^{\infty} (t+1)^3  \exp(-2(t-1){\epsilon_1}^2(p_{01}^i+p_{10}^i)^2)\\&+
2 \sum_{t=1}^{\infty} (t+1)^3  \exp(-2(t-1){\epsilon_1}^2(p_{01}^1+p_{10}^1)^2).
    \end{aligned}
\end{equation}
%where the last inequality follows from Proposition \ref{lemma:Hellinger}.
%where the second inequality follows from Lemma \ref{lemma:Hellinger}.
Next, we show that the estimates of transition probabilities of optimal and suboptimal arms are never too far from respective true values.
%We assume that
%\begin{equation*}
%\begin{aligned}
$F_{1,i,t}:=\{|{{\hat{p}}_{01}^1}(t)-p_{01}^1|\le \epsilon_1,|{{\hat{p}}_{10}^1}(t)-p_{10}^1|\le \epsilon_1,|{{\hat{p}}_{01}^i}(t)-p_{01}^i|\le\epsilon_1,|{{\hat{p}}_{10}^i}(t)-p_{10}^i|\le\epsilon_1\}.$  
Similar to (\ref{eq:e_1_tc}) and using Proposition \ref{lemma:concentratation},
\begin{equation}\label{eq:f_1_t_c}
\begin{aligned}
&    \sum_{t=1}^n\mathbb{P}(F_{1,i,t}^c)
\\& \le \frac{4}{{\epsilon_1}^2}+
2\sum_{t=1}^{\infty} (t+1)^3  \exp(-2(t-1){\epsilon_1}^2(p_{01}^i+p_{10}^i)^2)\\&
+2\sum_{t=1}^{\infty} (t+1)^3  \exp(-2(t-1){\epsilon_1}^2(p_{01}^1+p_{10}^1)^2).
     \end{aligned}
\end{equation}
%where the last inequality follows from Proposition \ref{lemma:concentratation}.
 Let the number of times sub-optimal arm $i$ is pulled till time $n$ and the state of arm $i$ at time $t$ be denoted by $T_i(n)$ and
 %Let the state of arm $i$ at time $t$ be 
 $S_i(t)$, respectively. \textcolor{black}{Let $E^{a,b}_{1,i,t}:=\{E_{1,i,t},S_i(t)=a,S_1(t)=b\}.$} 

 \begin{equation}\label{eq:regret_H_Markovian}
 \begin{aligned}
    & \mathbb{E}[T_i(n)]
    %=\mathbb{E}[\sum_{t=1}^n\mathbbm{1}\{A_t=i\}]=\sum_{t=1}^n\mathbb{P}(A_t=i)\\&
    =\sum_{t=1}^{\tau_{1,i}}\mathbb{P}(A_t=i)+\sum_{t=\tau_{1,i}+1}^n\mathbb{P}(A_t=i)\\&
    \le \tau_{1,i} + \sum_{t=\tau_{1,i}+1}^n\mathbb{P}(E_{1,i,t}^c)+\sum_{t=\tau_{1,i}+1}^n\mathbb{P}(A_t=i,E_{1,i,t}).
 \end{aligned}
 \end{equation}
 Now, we consider the last term in (\ref{eq:regret_H_Markovian}). 
 %After each arm is chosen once, a sub-optimal arm is chosen if the index of the sub-optimal arm is higher than the index of the optimal arm.
When $E_{1,i,t}$ occurs, 
%i.e., both the optimal and the suboptimal arms are in STP\_PHASE, 
sub-optimal arm $i$ is chosen if at least one of the following conditions is true.
 \begin{enumerate}
     \item ${\tilde{p}}_{01}^{*1}(t)<p_{01}^1-\epsilon_p$ and $S_1(t)=0$,
    \item ${\tilde{p}}_{10}^{*1}(t)>p_{10}^1+\epsilon_q$ and  $S_1(t)=1$,
     %\item $\tilde{\mu}^{*1}(t)<\frac{{\tilde{p}}_{01}^{*i}(t)}{{{\tilde{p}}_{01}}^{*i}(t)+{{\hat{p}}_{10}^i}(t)}$ ($E_2$, say),
     \item $\frac{p_{01}^1-\epsilon_p}{p_{01}^1-\epsilon_p+{{\hat{p}}_{10}^1}(t)}<\frac{{\tilde{p}}_{01}^{*i}(t)}{{{\tilde{p}}_{01}}^{*i}(t)+{{\hat{p}}_{10}^i}(t)}$, $S_i(t)=0$ and $S_1(t)=0$,
     \item $\frac{p_{01}^1-\epsilon_p}{p_{01}^1-\epsilon_p+{{\hat{p}}_{10}^1}(t)}<\frac{{\hat{p}}_{01}^{i}(t)}{{\hat{p}}_{01}^{i}(t)+{{\tilde{p}}_{10}}^{*i}(t)}$, $S_i(t)=1$ and $S_1(t)=0$,
     \item $\frac{{{\hat{p}}_{01}^1}(t)}{{{\hat{p}}_{01}^1}(t)+p_{10}^1+\epsilon_q}<\frac{{\tilde{p}}_{01}^{*i}(t)}{{{\tilde{p}}_{01}}^{*i}(t)+{{\hat{p}}_{10}^i}(t)}$, $S_i(t)=0$ and $S_1(t)=1$,
     \item $\frac{{{\hat{p}}_{01}^1}(t)}{{{\hat{p}}_{01}^1}(t)+p_{10}^1+\epsilon_q}<\frac{{\hat{p}}_{01}^{i}(t)}{{\hat{p}}_{01}^{i}(t)+{{\tilde{p}}_{10}}^{*i}(t)}$, $S_i(t)=1$ and $S_1(t)=1$.
 \end{enumerate}
% \end{proof}
% Therefore,
 \begin{equation}\label{eq:regret_H_2_Markovian}
 \begin{aligned}
&  \sum_{t=\tau_{1,i}+1}^n   \mathbb{P}(A_t=i,E_{1,i,t})\\&\le \sum_{t=\tau_{1,i}+1}^n \mathbb{P}(A_t=i,{\tilde{p}}_{01}^{*1}(t)<p_{01}^1-\epsilon_p,S_1(t)=0,E_{1,i,t})\\&
+ \mathbb{P}(A_t=i,{\tilde{p}}_{10}^{*1}(t)>p_{10}^1+\epsilon_q
,S_1(t)=1,E_{1,i,t})\\&
+     \mathbb{P}(A_t=i,{{\tilde{p}}_{01}}^{*i}(t)>\frac{(p_{01}^1-\epsilon_{p}) {{\hat{p}}_{10}^i}(t)}{{{\hat{p}}_{10}^1}(t)},\textcolor{black}{E^{0,0}_{1,i,t}})\\&
+     \mathbb{P}(A_t=i,{{\tilde{p}}_{10}}^{*i}(t)<\frac{{{\hat{p}}_{01}^i}(t){{\hat{p}}_{10}^1}(t)}{(p_{01}^1-\epsilon_{p})},\textcolor{black}{E^{1,0}_{1,i,t}})\\&
+    \mathbb{P}(A_t=i,{{\tilde{p}}_{01}}^{*i}(t)>\frac{{{\hat{p}}_{10}^i}(t){{\hat{p}}_{01}^1}(t)}{(p_{10}^1+\epsilon_{q})},\textcolor{black}{E^{0,1}_{1,i,t}})\\& 
+     \mathbb{P}(A_t=i,{{\tilde{p}}_{10}}^{*i}(t)<\frac{{{\hat{p}}_{01}^i}(t)(p_{10}^1+\epsilon_{q})}{{{\hat{p}}_{01}^1}(t)},\textcolor{black}{E^{1,1}_{1,i,t}}).
\end{aligned}
\end{equation}
Now, we proceed to derive upper bounds on individual terms of Equation (\ref{eq:regret_H_2_Markovian}). Using Proposition \ref{lemma_H:1_Markovian},
\begin{equation*}%\label{eq:markov_p}
\begin{aligned}
&   \sum_{t=\tau_{1,i}+1}^n \mathbb{P}(A_t=i,{\tilde{p}}_{01}^{*1}(t)<p_{01}^1-\epsilon_p,S_1(t)=0,E_{1,i,t})\\& 
\le \sum_{t=1}^n \mathbb{P}({\tilde{p}}_{01}^{*1}(t)<p_{01}^1-\epsilon_p)
\le \frac{2}{{\epsilon_{p}}^2},
\end{aligned}
\end{equation*}
%where the last inequality follows from Lemma \ref{lemma_H:1_Markovian}. 
%Similarly, using Lemma \ref{lemma_H:2_Markovian}, 
\begin{equation*}%\label{eq:markov_q}
\begin{aligned}
&   \sum_{t=\tau_{1,i}+1}^n \mathbb{P}(A_t=i,{\tilde{p}}_{10}^{*1}(t)>p_{10}^1+\epsilon_q,S_1(t)=1,E_{1,i,t})
\le \frac{2}{{\epsilon_{q}}^2}.  
\end{aligned}
\end{equation*}
%Subsequently, we have,
\begin{equation*}%\label{eq:H_S_00_M}
\begin{aligned}
&\sum_{t=\tau_{1,i}+1}^n\mathbb{P}(A_t=i,{{\tilde{p}}_{01}}^{*i}(t)>\frac{(p_{01}^1-\epsilon_{p}){{\hat{p}}_{10}^i}(t)}{{{\hat{p}}_{10}^1}(t)},\textcolor{black}{E^{0,0}_{1,i,t}})\\&
=\sum_{t=\tau_{1,i}+1}^n\mathbb{P}(A_t=i,{{\tilde{p}}_{01}}^{*i}(t)>\frac{(p_{01}^1-\epsilon_{p}){{\hat{p}}_{10}^i}(t)}{{{\hat{p}}_{10}^1}(t)},\\&D({{\hat{p}}_{01}^i}(t)||{{\tilde{p}}_{01}}^{*i}(t))
 \le \frac{\log f(t)}{T_i(t-1)},\textcolor{black}{E^{0,0}_{1,i,t}})\\&
%\\&\le \sum_{t=\tau+1}^n \mathbb{P}(A_t=i,{{\tilde{p}}_{01}}^{*i}(t)>\frac{(p_{01}^1-\epsilon_{p}){{\hat{p}}_{10}^i}(t)}{{{\hat{p}}_{10}^1}(t)},\\&D({{\hat{p}}_{01}^i}(t)||{{\tilde{p}}_{01}}^{*i}(t))\le \frac{\log f(t)}{T_i(t-1)})\\&
\le \sum_{t=\tau_{1,i}+1}^n \mathbb{P}(A_t=i,{{\tilde{p}}_{01}}^{*i}(t)>\frac{(p_{01}^1-\epsilon_{p}){{\hat{p}}_{10}^i}(t)}{{{\hat{p}}_{10}^1}(t)},\\&D({{\hat{p}}_{01}^i}(t)||{{\tilde{p}}_{01}}^{*i}(t))\le \frac{\log f(t)}{T_i(t-1)},F_{1,i,t})+\sum_{t=\tau_{1,i}+1}^n \mathbb{P}(F_{1,i,t}^c)\\&
\le \sum_{t=\tau_{1,i}+1}^n \mathbb{P}(A_t=i,{{\tilde{p}}_{01}}^{*i}(t)>\frac{(p_{01}^1-\epsilon_{p})({p}_{10}^i-\epsilon_1)}{p_{10}^1+\epsilon_1},\\&D(p_{01}^i+\epsilon_1||{{\tilde{p}}_{01}}^{*i}(t))\le \frac{\log f(n)}{T_i(t-1)})+\sum_{t=\tau_{1,i}+1}^n \mathbb{P}(F_{1,i,t}^c)\\&
\le (\frac{\log f(n)}{D(p_{01}^i+\epsilon_1||\frac{(p_{01}^1-\epsilon_p)(p_{10}^i-\epsilon_1)}{p_{10}^1+\epsilon_1})}-\tau_{1,i})^++\sum_{t=\tau_{1,i}+1}^n\mathbb{P}(F_{1,i,t}^c),
\end{aligned}
\end{equation*}
where the first equality follows from the definition of ${{\tilde{p}}_{01}}^{*i}(t)$. The second inequality follows from the fact that
 $D(x||{{\tilde{p}}_{01}}^{*i}(t))$ is decreasing in $x \in [p_{01}^i-\epsilon_1,p_{01}^i+\epsilon_1]$ and
 $p_{01}^i+\epsilon_1<\frac{(p_{01}^1-\epsilon_p)(p_{10}^i-\epsilon_1)}{p_{10}^1+\epsilon_1}$. %D(p_{01}^i+\epsilon_1||{{\tilde{p}}_{01}}^{*i}(t))$. 
 The last inequality follows from Proposition \ref{lemma:2a}. Similarly,
%In a similar manner, using Lemma \ref{lemma:2a}, we obtain 
\begin{equation*}%\label{eq:H_S_10_M}
\begin{aligned}
&\sum_{t=\tau_{1,i}+1}^n     \mathbb{P}(A_t=i,{{\tilde{p}}_{10}}^{*i}(t)<\frac{{{\hat{p}}_{01}^i}(t){{\hat{p}}_{10}^1}(t)}{(p_{01}^1-\epsilon_{p})},\textcolor{black}{E^{1,0}_{1,i,t}})\\&
\le 
(\frac{\log f(n)}{D(p_{10}^i-\epsilon_1||\frac{(p_{10}^1+\epsilon_{1})(p_{01}^i+\epsilon_1)}{p_{01}^1-\epsilon_p})}-\tau_{1,i})^+
+\sum_{t=\tau_{1,i}+1}^n\mathbb{P}(F_{1,i,t}^c),
\end{aligned}    
\end{equation*}
%Using Lemma \ref{lemma:4a}, we obtain
\begin{equation*}%\label{eq:H_S_01_M}
\begin{aligned}
&\sum_{t=\tau_{1,i}+1}^n     \mathbb{P}(A_t=i,{{\tilde{p}}_{01}}^{*i}(t)>\frac{{{\hat{p}}_{10}^i}(t){{\hat{p}}_{01}^1}(t)}{(p_{10}^1+\epsilon_{q})},\textcolor{black}{E^{0,1}_{1,i,t}})
\\&
\le (\frac{\log f(n)}{D(p_{01}^i+\epsilon_1||\frac{(p_{01}^1-\epsilon_{1})(p_{10}^i-\epsilon_1)}{p_{10}^1+\epsilon_q})}-\tau_{1,i})^+
+\sum_{t=\tau_{1,i}+1}^n\mathbb{P}(F_{1,i,t}^c),
\end{aligned}    
\end{equation*}
%and 
\begin{equation*}%\label{eq:H_S_11_M}
\begin{aligned}
&\sum_{t=\tau_{1,i}+1}^n     \mathbb{P}(A_t=i,{{\tilde{p}}_{10}}^{*i}(t)<\frac{{{\hat{p}}_{01}^i}(t)(p_{10}^1+\epsilon_{q})}{{{\hat{p}}_{01}^1}(t)},\textcolor{black}{E^{1,1}_{1,i,t}})\\&\le 
(\frac{\log f(n)}{D(p_{10}^i-\epsilon_1||\frac{(p_{10}^1+\epsilon_{q})(p_{01}^i+\epsilon_1)}{p_{01}^1-\epsilon_1})}-\tau_{1,i})^++\sum_{t=\tau_{1,i}+1}^n\mathbb{P}(F_{1,i,t}^c).
\end{aligned}
\end{equation*}
Choose
$\tau_{1,i}=t_{1,i}
+\frac{\log f(n)}{D(p_{01}^i+\epsilon_1||\frac{(p_{01}^1-\epsilon_{p})(p_{10}^i-\epsilon_1)}{p_{10}^1+\epsilon_1})}+\\\frac{\log f(n)}{D(p_{10}^i-\epsilon_1||\frac{(p_{10}^1+\epsilon_{1})(p_{01}^i+\epsilon_1)}{p_{01}^1-\epsilon_p})}+
 \frac{\log f(n)}{D(p_{01}^i+\epsilon_1||\frac{(p_{01}^1-\epsilon_{1})(p_{10}^i-\epsilon_1)}{p_{10}^1+\epsilon_q})}+\\
 \frac{\log f(n)}{D(p_{10}^i-\epsilon_1||\frac{(p_{10}^1+\epsilon_{q})(p_{01}^i+\epsilon_1)}{p_{01}^1-\epsilon_1})}$.
Therefore, we obtain (using Equations (\ref{eq:e_1_tc}), (\ref{eq:f_1_t_c}) and (\ref{eq:regret_H_2_Markovian}))
\begin{equation*}
\begin{aligned}
 &   \mathbb{E}[T_i(n)]
\le
\tau_{1,i} +\frac{20}{{\epsilon_1}^2}+\frac{2}{{\epsilon}_{p}^2}+\frac{2}{{\epsilon}_{q}^2}\\&+
10\sum_{t=1}^{\infty} (t+1)^3  \exp(-2(t-1){\epsilon_1}^2(p_{01}^i+p_{10}^i)^2)\\&+
10\sum_{t=1}^{\infty} (t+1)^3  \exp(-2(t-1){\epsilon_1}^2(p_{01}^1+p_{10}^1)^2).
 \end{aligned}
 \end{equation*}
We choose $\epsilon_1=\epsilon_{p}=\epsilon_{q}=\log^{-1/4} (n)$ to complete the proof.
%Proof for the other cases follow in a similar manner and is given in \cite{roy2020hellinger}.\qed
%\begin{comment}
\subsubsection{i.i.d. Optimal and Truly Markovian Suboptimal Arms}
Similar to the first case, we prove that the expected number of times the optimal arm is in STP\_PHASE and the suboptimal arm is in SM\_PHASE, is finite.
Let $E_{2,i,t}:=\{\textcolor{black}{TV}({\hat{p}}_{01}^1(t-1)||1-{\hat{p}}_{10}^1(t-1))\le \frac{1}{{(t-1)}^{1/4}},\textcolor{black}{TV}({\hat{p}}_{01}^i(t-1)||1-{\hat{p}}_{10}^i(t-1))> \frac{1}{{(t-1)}^{1/4}}\}$.\\
Hence, for\\ $t_{2,i}=
\frac{\mathbbm{1}\{p_{01}^i+p_{10}^i<1\}}{\textcolor{black}{TV^4}(p_{01}^i+\epsilon_1||1-{p}_{10}^i-\epsilon_1)}+
\frac{\mathbbm{1}\{p_{01}^i+p_{10}^i>1\}}{\textcolor{black}{TV^4}(p_{01}^i-\epsilon_1||1-{p}_{10}^i+\epsilon_1)}$,   
we obtain (using Proposition \ref{lemma:Hellinger})
\begin{equation}\label{eq:e_2_tc}
\begin{aligned}
&  \sum_{t=t_{2,i}}^n  \mathbb{P}(E_{2,i,t}^c)\\&
\le \frac{2}{{\epsilon_1}^2}+2 \sum_{t=1}^{\infty} (t+1)^3  \exp(-2(t-1){\epsilon_1}^2(p_{01}^i+p_{10}^i)^2)\\&
+\sum_{t=1}^{\infty} 4 \exp(-\frac{2}{9}\sqrt{t})+\sum_{t=1}^{\infty} (t+1)^3  \exp(-2\sqrt{t-1}),
    \end{aligned}
\end{equation}
%where the last inequality follows from Lemma \ref{lemma:Hellinger}.\par 
%Next, we show that the estimates of transition probabilities of
%suboptimal arms are never too far from their
%true values.
%We assume that
%\begin{equation*}
Assuming
$F_{2,i,t}:=\{|{{\hat{p}}_{01}^i}(t)-p_{01}^i|\le\epsilon_1,|{{\hat{p}}_{10}^i}(t)-p_{10}^i|\le\epsilon_1\}$ and using Proposition \ref{lemma:concentratation}, we get    
\begin{equation}\label{eq:f_2_tc}
\begin{aligned}
&   \sum_{t=1}^n \mathbb{P}(F_{2,i,t}^c)
\\&
\le     \frac{2}{{\epsilon_1}^2}+
2\sum_{t=1}^{\infty} (t+1)^3  \exp(-2(t-1){\epsilon_1}^2(p_{01}^i+p_{10}^i)^2).
    \end{aligned}
\end{equation}
%where the last inequality follows from Lemma \ref{lemma:concentratation}.\par
Similar to Equation (\ref{eq:regret_H_Markovian}),
 \begin{equation}\label{eq:regret_H}
 \begin{aligned}
    & \mathbb{E}[T_i(n)]
    \le \tau_{2,i} +\sum_{t=\tau_{2,i}+1}^n\mathbb{P}(E_{2,i,t}^c)+\sum_{t=\tau_{2,i}+1}^n\mathbb{P}(A_t=i,E_{2,i,t}).
 \end{aligned}
 \end{equation}
 %Now, we consider the last term in Equation (\ref{eq:regret_H}). 
 After each arm is chosen once, 
sub-optimal arm $i$ is chosen if at least one of these conditions is true.
 \begin{enumerate}
     \item $\tilde{\mu}^{*1}(t)<\mu_1-\epsilon_{\mu}$ ,
     %\item $\tilde{\mu}^{*1}(t)<\frac{{\tilde{p}}_{01}^{*i}(t)}{{{\tilde{p}}_{01}}^{*i}(t)+{{\hat{p}}_{10}^i}(t)}$ ($E_2$, say),
     \item $\mu_1-\epsilon_{\mu}<\frac{{\tilde{p}}_{01}^{*i}(t)}{{{\tilde{p}}_{01}}^{*i}(t)+{{\hat{p}}_{10}^i}(t)}$ and $S_i(t)=0$ ,
     %\item $\tilde{\mu}^{*1}(t)<\frac{{{\hat{p}}_{01}^i}(t)}{{{\hat{p}}_{01}^i}(t)+{{\tilde{p}}_{10}}^{*i}(t)}$ ($E_3$, say).
     \item $\mu_1-\epsilon_{\mu}<\frac{{{\hat{p}}_{01}^i}(t)}{{{\hat{p}}_{01}^i}(t)+{{\tilde{p}}_{10}}^{*i}(t)}$ and $S_i(t)=1$.
 \end{enumerate}
Therefore,
 \begin{equation}\label{eq:regret_H_2}
 \begin{aligned}
&  \sum_{t=\tau_{2,i}+1}^n   \mathbb{P}(A_t=i,E_{2,i,t})\\&\le \sum_{t=\tau_{2,i}+1}^n \mathbb{P}(A_t=i,\tilde{\mu}^{*1}(t)<\mu_1-\epsilon_{\mu},E_{2,i,t})\\&+
     \mathbb{P}(A_t=i,{{\tilde{p}}_{01}}^{*i}(t)>\frac{({\mu}_1-\epsilon_{\mu}) {{\hat{p}}_{10}^i}(t)}{{1-\mu_1+\epsilon_{\mu}}},S_i(t)=0,E_{2,i,t})\\&
+  \mathbb{P}(A_t=i,{{\tilde{p}}_{10}}^{*i}(t)<\frac{{{\hat{p}}_{01}^i}(t)(1-{\mu}_1+\epsilon_{\mu})}{{\mu}_1-\epsilon_{\mu}},S_i(t)=1,E_{2,i,t}).
\end{aligned}
\end{equation}
%We derive upper bounds on each term of the resulting expression in Equation (\ref{eq:regret_H_2}).
Similar to the previous case, we get (using Propositions \ref{lemma_H:1_Markovian} and \ref{lemma:2a})
\begin{equation*}%\label{eq:mu*}
\begin{aligned}
&    \sum_{t=\tau_{2,i}+1}^n \mathbb{P}(A_t=i,\tilde{\mu}^{*1}(t)<\mu_1-\epsilon_{\mu},E_{2,i,t})
%\le \sum_{t=1}^n \mathbb{P}(\tilde{\mu}^{*1}(t)<\mu_1-\epsilon_{\mu})
\le \frac{2}{{\epsilon_{\mu}}^2},  
\end{aligned}
\end{equation*}
%where the last inequality follows from Lemma \ref{lemma_H:1_Markovian}. 
%Subsequently, we have,
\begin{equation*}%\label{eq:H_S_0}
\begin{aligned}
&\sum_{t=\tau_{2,i}+1}^n\mathbb{P}(A_t=i,{{\tilde{p}}_{01}}^{*i}(t)>\frac{({\mu}_1-\epsilon_{\mu}) {{\hat{p}}_{10}^i}(t)}{{1-\mu_1+\epsilon_{\mu}}},S_i(t)=0,E_{2,i,t})\\&
%=\sum_{t=\tau+1}^n\mathbb{P}(A_t=i,{{\tilde{p}}_{01}}^{*i}(t)>\frac{({\mu}_1-\epsilon_{\mu}) {{\hat{p}}_{10}^i}(t)}{{1-\mu_1+\epsilon_{\mu}}},\\&D({{\hat{p}}_{01}^i}(t)||{{\tilde{p}}_{01}}^{*i}(t))
% \le \frac{\log f(t)}{T_i(t-1)},S_i(t)=0,E_{2,t})
%\\&\le \sum_{t=\tau+1}^n \mathbb{P}(A_t=i,{{\tilde{p}}_{01}}^{*i}(t)>\frac{({\mu}_1-\epsilon_{\mu}) {{\hat{p}}_{10}^i}(t)}{{1-\mu_1+\epsilon_{\mu}}},\\&D({{\hat{p}}_{01}^i}(t)||{{\tilde{p}}_{01}}^{*i}(t))\le \frac{\log f(t)}{T_i(t-1)})\\&
%\le \sum_{t=\tau+1}^n \mathbb{P}(A_t=i,{{\tilde{p}}_{01}}^{*i}(t)>\frac{({\mu}_1-\epsilon_{\mu}) {{\hat{p}}_{10}^i}(t)}{{1-\mu_1+\epsilon_{\mu}}},\\&D({{\hat{p}}_{01}^i}(t)||{{\tilde{p}}_{01}}^{*i}(t))\le \frac{\log f(t)}{T_i(t-1)},F_{2,t})+
%\sum_{t=\tau+1}^n \mathbb{P}(F_{2,t}^c)\\&
%\le \sum_{t=\tau+1}^n \mathbb{P}(A_t=i,{{\tilde{p}}_{01}}^{*i}(t)>\frac{({\mu}_1-\epsilon_{\mu}) (p_{10}^i-\epsilon_1)}{{1-\mu_1+\epsilon_{\mu}}},\\&D(p_{01}^i+\epsilon_1||{{\tilde{p}}_{01}}^{*i}(t))\le \frac{\log f(n)}{T_i(t-1)})+
%\sum_{t=\tau+1}^n \mathbb{P}(F_{2,t}^c)\\&
\le (\frac{\log f(n)}{D(p_{01}^i+\epsilon_1||\frac{(\mu_1-\epsilon_{\mu})(p_{10}^i-\epsilon_1)}{1-\mu_1+\epsilon_{\mu}})}-\tau_{2,i})^+
+
\sum_{t=\tau+1}^n \mathbb{P}(F_{2,i,t}^c),
\end{aligned}
\end{equation*}
%where the first equality follows from the definition of ${{\tilde{p}}_{01}}^{*i}(t)$. 
%The third inequality follows from the fact that
% $D(x||{{\tilde{p}}_{01}}^{*i}(t))$ is decreasing in $x \in [p_{01}^i-\epsilon_1,p_{01}^i+\epsilon_1]$ and 
% $p_{01}^i+\epsilon_1<\frac{(\mu_1-\epsilon_{\mu})(p_{10}^i-\epsilon_1)}{1-\mu_1+\epsilon_{\mu}}$.
%The last inequality follows from Lemma \ref{lemma:2a}.
%In a similar manner, using Lemma \ref{lemma:2a}, we get
\begin{equation*}%\label{eq:H_S_1}
\begin{aligned}
&\sum_{t=\tau_{2,i}+1}^n \mathbb{P}(A_t=i,{{\tilde{p}}_{10}}^{*i}(t)<\frac{{{\hat{p}}_{01}^i}(t)(1-{\mu}_1+\epsilon_{\mu})}{({\mu}_1-\epsilon_{\mu})},S_i(t)=1,E_{2,i,t})
\\&
\le  (\frac{\log f(n)}{D(p_{10}^i-\epsilon_1||\frac{(p_{01}^i+\epsilon_1)(1-\mu_1+\epsilon_{\mu})}{\mu_1-\epsilon_{\mu}})}-\tau_{2,i})^++\sum_{t=\tau_{2,i}+1}^n \mathbb{P}(F_{2,i,t}^c).
\end{aligned}
\end{equation*}
Choose
$\tau_{2,i}=t_{2,i}+{\frac{\log f(n)}{D(p_{01}^i+\epsilon_1||\frac{(\mu_1-\epsilon_{\mu})(p_{10}^i-\epsilon_1)}{1-\mu_1+\epsilon_{\mu}})}}+ 
\frac{\log f(n)}{D(p_{10}^i-\epsilon_1||\frac{(p_{01}^i+\epsilon_1)(1-\mu_1+\epsilon_{\mu})}{\mu_1-\epsilon_{\mu}})}.$\\
%+
%\frac{\log f(n)}{\pi_i(0)D(p_{01}^i||\mu_1)+\pi_i(1)D(p_{10}^i||1-\mu_1)}.$   \\
The proof follows by choosing $\epsilon_1=\epsilon_{\mu}=\log^{-1/4} (n)$ and
using Equations (\ref{eq:e_2_tc}), (\ref{eq:f_2_tc}) and (\ref{eq:regret_H_2}).
\subsubsection{Truly Markovian Optimal and i.i.d Suboptimal Arms}
Let $E_{3,i,t}:=\{\textcolor{black}{TV}({\hat{p}}_{01}^1(t-1)||1-{\hat{p}}_{10}^1(t-1))> \frac{1}{{(t-1)}^{1/4}},\textcolor{black}{TV}({\hat{p}}_{01}^i(t-1)||1-{\hat{p}}_{10}^i(t-1))\le \frac{1}{{(t-1)}^{1/4}}\}$. 
%and $F_{3,t}:=\{|{{\hat{p}}_{01}^1}(t)-p_{01}^1|\le\epsilon_1,|{{\hat{p}}_{10}^1}(t)-p_{10}^1|\le\epsilon_1\}.$   
\\
%Hence, for 
Choose
$t_{3,i}= 
\frac{\mathbbm{1}\{p_{01}^1+p_{10}^1<1\}}{\textcolor{black}{TV^4}(p_{01}^1+\epsilon_1||1-{p}_{10}^1-\epsilon_1)}+
\frac{\mathbbm{1}\{p_{01}^1+p_{10}^1>1\}}{\textcolor{black}{TV^4}(p_{01}^1-\epsilon_1||1-{p}_{10}^1+\epsilon_1)}.$ %\textcolor{black}{Similar to the previous case},  
%we obtain
%\begin{equation}\label{eq:e_3_tc}
%\begin{aligned}
%&  \sum_{t=t_3}^n  \mathbb{P}(E_{3,t}^c)\\&
%\le \frac{2}{{\epsilon_1}^2}+2 \sum_{t=1}^{\infty} (t+1)^3  \exp(-2(t-1){\epsilon_1}^2(p_{01}^1+p_{10}^1)^2)\\&
%+\sum_{t=1}^{\infty} 4 \exp(-\frac{2}{9}\sqrt{t})+\sum_{t=1}^{\infty} (t+1)^3  \exp(-2\sqrt{t-1}),
%    \end{aligned}
%\end{equation}
%where the last inequality follows from Lemma \ref{lemma:Hellinger}.
%Let
%$F_{3,t}:=\{|{{\hat{p}}_{01}^1}(t)-p_{01}^1|\le\epsilon_1,|{{\hat{p}}_{10}^1}(t)-p_{10}^1|\le\epsilon_1\}.$    
%\begin{equation}\label{eq:f_3_tc}
%\begin{aligned}
%&   \sum_{t=1}^n \mathbb{P}(F_{3,t}^c)
%\\&
%\le     \frac{2}{{\epsilon_1}^2}+
%2\sum_{t=1}^{\infty} (t+1)^3  \exp(-2(t-1){\epsilon_1}^2(p_{01}^1+p_{10}^1)^2),
%    \end{aligned}
%\end{equation}
%where last inequality follows from Lemma \ref{lemma:concentratation}.
Similar to Equation (\ref{eq:regret_H_Markovian}),
 \begin{equation*}
 %\label{eq:regret_H_case3}
 \begin{aligned}
    & \mathbb{E}[T_i(n)]
    \le \tau_{3,i} +\sum_{t=\tau_{3,i}+1}^n\mathbb{P}(E_{3,i,t}^c)+\sum_{t=\tau_{3,i}+1}^n\mathbb{P}(A_t=i,E_{3,i,t}).
 \end{aligned}
 \end{equation*}
 After each arm is chosen once, a sub-optimal arm is chosen 
 if at least one of following conditions is true.
 \begin{enumerate}
     \item ${\tilde{p}}_{01}^{*1}(t)<p_{01}^1-\epsilon_p$ and $S_1(t)=0$,
    \item ${\tilde{p}}_{10}^{*1}(t)>p_{10}^1+\epsilon_q$ and  $S_1(t)=1$,
     %\item $\tilde{\mu}^{*1}(t)<\frac{{\tilde{p}}_{01}^{*i}(t)}{{{\tilde{p}}_{01}}^{*i}(t)+{{\hat{p}}_{10}^i}(t)}$ ($E_2$, say),
     \item $\tilde{\mu}^{*i}(t)>\frac{p_{01}^1-\epsilon_p}{p_{01}^1-\epsilon_p+{{\hat{p}}_{10}^1}(t)}$ and $S_1(t)=0$,
     \item $\tilde{\mu}^{*i}(t)>\frac{{{\hat{p}}_{01}^1}(t)}{{{\hat{p}}_{01}^1}(t)+p_{10}^1+\epsilon_q}$ and $S_1(t)=1$.
 \end{enumerate}
\textcolor{black}{We follow identical set of procedures and use Proposition \ref{lemma_H:1_Markovian}.}
We Choose
$\tau_{3,i}=t_{3,i}+\frac{\log f(n)}{D(\mu_i+\epsilon_1||\frac{p_{01}^1-\epsilon_{p}}{p_{01}^1-\epsilon_{p}+p_{10}^1+\epsilon_1})}+\frac{\log f(n)}{D(\mu_i+\epsilon_1||\frac{p_{01}^1-\epsilon_{1}}{p_{01}^1-\epsilon_{1}+p_{10}^1+\epsilon_{q}})}.$\\
%,\frac{\log f(n)}{\mu_i D(\mu_i||p_{01}^1)+(1-\mu_i)D(1-\mu_i||p_{10}^1)}\}.$\\
The proof follows by 
choosing $\epsilon_1=\epsilon_{p}=\epsilon_{q}=\log^{-1/4} (n)$ .
%and 
%using Equations (\ref{eq:e_3_tc}), (\ref{eq:f_3_tc}) and (\ref{eq:regret_H_2_case3}).
\subsubsection{i.i.d. Arms:}
Let $E_{4,i,t}:=\{\textcolor{black}{TV}({\hat{p}}_{01}^1(t-1)||1-{\hat{p}}_{10}^1(t-1))\le  \frac{1}{{(t-1)}^{1/4}},\textcolor{black}{TV}({\hat{p}}_{01}^i(t-1)||1-{\hat{p}}_{10}^i(t-1))\le \frac{1}{{(t-1)}^{1/4}}\}$.
%\begin{equation}\label{eq:e_4_tc}
%\begin{aligned}
%&  \sum_{t=1}^n  \mathbb{P}(E_{4,t}^c)
%\le 
%\sum_{t=1}^{\infty} 8 \exp(-\frac{2}{9}\sqrt{t})+2\sum_{t=1}^{\infty} (t+1)^3  \exp(-2\sqrt{t-1}),
%    \end{aligned}
%\end{equation}
%where the inequality follows from Lemma \ref{lemma:Hellinger}.
%Similar to Equation (\ref{eq:regret_H_Markovian}),
% \begin{equation*}
 %\label{eq:regret_H_case4}
% \begin{aligned}
%    & \mathbb{E}[T_i(n)]
%    \le \tau +\sum_{t=\tau+1}^n\mathbb{P}(E_{4,t}^c)+\sum_{t=\tau+1}^n\mathbb{P}(A_t=i,E_{4,t}).
% \end{aligned}
% \end{equation*}
 After each arm is chosen once, 
 a sub-optimal arm is chosen 
 if either ${\tilde{\mu}}^{*1}(t)<\mu_1-\epsilon_{\mu}$ or ${\tilde{\mu}}^{*i}(t)>\mu_1-\epsilon_{\mu}$. 
 % Therefore,
 %\begin{equation}\label{eq:regret_H_2_case4}
 %\begin{aligned}
%&  \sum_{t=\tau+1}^n   \mathbb{P}(A_t=i,E_{4,t})\\&\le \sum_{t=\tau+1}^n \mathbb{P}(A_t=i,{\tilde{\mu}}^{*1}(t)<\mu_1-\epsilon_{\mu},E_{4,t})\\&
%+\sum_{t=\tau+1}^n \mathbb{P}(A_t=i,{\tilde{\mu}}^{*i}(t)>\mu_1-\epsilon_{\mu},E_{4,t}).
%\end{aligned}
%\end{equation}
% We now analyze individual terms of Equation (\ref{eq:regret_H_2_case4}). Using Lemma \ref{lemma_H:1_Markovian}
% \begin{equation}\label{eq:mu*_case_4}
%\begin{aligned}
%&    \sum_{t=\tau+1}^n \mathbb{P}(A_t=i,\tilde{\mu}^{*1}(t)<\mu_1-\epsilon_{\mu},E_{4,t}) 
%\le \frac{2}{{\epsilon_{\mu}}^2}. 
%\end{aligned}
%\end{equation}
%Subsequently, we have,
%\begin{equation}\label{eq:H_case_4}
%\begin{aligned}
%&\sum_{t=\tau+1}^n\mathbb{P}(A_t=i,\tilde{\mu}^{*i}(t)>\mu_1-\epsilon_{\mu},E_{4,t})\\&
%\le\sum_{t=\tau+1}^n\mathbb{P}(A_t=i,\tilde{\mu}^{*i}(t)>\mu_1-\epsilon_{\mu},\\&D({\hat{\mu}^i(t)||\tilde{\mu}^{*i}(t)})
% \le \frac{\log f(t)}{T_i(t-1)})\\&
%\le \frac{1}{2{\epsilon_1}^2}+(\frac{\log f(n)}{D(\mu_i+\epsilon_1||\mu_1-\epsilon_1)}-\tau)^+,
%\end{aligned}
%\end{equation}
%where the first and last inequalities follow from the definition of $\tilde{\mu}^{*i}(t)$ and 
Using Propositions \ref{lemma:Hellinger} and
\ref{lemma:2a}, we get
%, respectively.
%Combining Equations (\ref{eq:mu*_case_4})
%and (\ref{eq:H_case_4}),
%we obtain
\begin{equation*}
\begin{aligned}
 &   \mathbb{E}[T_i(n)]
 \le \tau_{4,i} +\sum_{t=\tau_{4,i}+1}^n\mathbb{P}(E_{4,i,t}^c)+ 
 \frac{2}{{\epsilon}_{\mu}^2}+\frac{1}{2{\epsilon}_{1}^2}\\&+(\frac{\log f(n)}{D(\mu_i+\epsilon_1||\mu_1-\epsilon_1)}-\tau_{4,i})^+.
\end{aligned}
\end{equation*}
%The proof follows by choosing
We choose 
$\tau_{4,i}=\frac{\log f(n)}{D(\mu_i+\epsilon_1||\mu_1-\epsilon_1)}$ and $\epsilon_1=\epsilon_{\mu}=\log^{-1/4} (n)$ to complete the proof. \qed
%\end{comment}
%and using Equation (\ref{eq:e_4_tc}).
%\section{Proof of Propositions \ref{lemma:mixing}-\ref{lemma:Hellinger}}\label{app:mixing}
\section{}\label{app:234}
\textbf{Proof of Proposition \ref{lemma:mixing}:}
We compute the probability that the Markov chain observes the sequence $m_1,m_2,\ldots,m_t$. 
Let $\hat{\pi}(i,t)=\frac{N_i(t)}{t-1}$.
\begin{equation}\label{eq:Type_11}
\begin{aligned}
&\mathbb{P}((M_1,M_2,\ldots, M_t)=(m_1,m_2,\ldots,m_t))\\& 
=\mathbb{P}(M_2=m_2|M_1=m_1)\ldots \mathbb{P}(M_t=m_t|M_{t-1}=m_{t-1})\\&
%=\exp(\sum_{(i,j)\in \mathcal{S}} N_{ij}(t)\log P_{ij})\\&
=\exp((t-1)\sum_{i \in \mathcal{S}} \hat{\pi}(i,t) \sum_{j \in \mathcal{S}} \hat{P}_{ij}(t)\log P_{ij}).
\end{aligned}
\end{equation}
Let $T=\{\hat{\pi}(i,t),\hat{P}_{ij}(t)\}$ be the type of the sequence. We want to find the number of sequences that have type $T$. We consider a Markov chain where the transition probability from state $i$ to state $j$ is $\hat{P}_{ij}(t)$. Let $N(T)$ be the number of sequences whose type is $T$. We have, 
\begin{equation}\label{eq:Type_22}
 1\ge N(T)\exp((t-1)\sum_{i\in \mathcal{S}} \hat{\pi}(i,t) \sum_{j \in \mathcal{S}} \hat{P}_{ij}(t)\log \hat{P}_{ij}(t)). \end{equation}
Let $\mathcal{H}$ be the set of types which satisfy  $|\frac{N_0(t)}{t-1}-\pi_0|>\epsilon_1$. Clearly, types in $\mathcal{H}$ also satisfy  $|\frac{N_1(t)}{t-1}-\pi_1|>\epsilon_1$. 
%since $\pi_0+\pi_1=1$. 
Hence, 
\begin{equation*}
\begin{aligned}
&\mathbb{P} (|\frac{N_0(t)}{t-1}-\pi_0|>\epsilon_1)=\sum_{T \in \mathcal{H}} N(T)P(T)\\&
=\sum_{T \in \mathcal{H}} \exp(-(t-1)(\hat{\pi}(0,t) D(\hat{P}_{01}(t)||P_{01})\\&+\hat{\pi}(1,t) D(\hat{P}_{10}(t)||P_{10})))\\&
\le(t+1)^3  \exp(-(t-1)\min_{T \in \mathcal{H}} (\hat{\pi}(0,t) D(\hat{P}_{01}(t)||P_{01})\\&+\hat{\pi}(1,t) D(\hat{P}_{10}(t)||P_{10}))),
\end{aligned}
\end{equation*}
where the equality follows from  (\ref{eq:Type_11}) and (\ref{eq:Type_22}). The inequality follows since each of $\hat{P}_{01}(t)$,$\hat{P}_{10}(t),\hat{\pi}(0,t)$  can take $(t+1)$ possible values and hence, $|\mathcal{H}|\le (t+1)^3$. Therefore the
problem reduces to the following problem.
\begin{equation*}
    \begin{aligned}
&        \min\limits_{x,y} \frac{y}{x+y} D(x||P_{01})+\frac{x}{x+y} D(y||P_{10}),\\&
        \text{subject to }  
|\frac{y}{x+y}-\pi_0|>\epsilon_1,|\frac{x}{x+y}-\pi_1|>\epsilon_1,\\&
 x\ge 0,
 y\ge 0,
 x\le 1,
 y\le 1.
    \end{aligned}
\end{equation*}
Let $(x^*,y^*)$ be a solution to the above problem. Using Pinsker's inequality,
$D(x^*||P_{01})\ge 2(x^*-P_{01})^2$ and 
$D(y^*||P_{10})\ge 2(y^*-P_{10})^2$.
We know that $(\frac{x^*}{x^*+y^*}-\frac{P_{01}}{P_{01}+P_{10}})^2>{\epsilon_1}^2$ and $(\frac{y^*}{x^*+y^*}-\frac{P_{10}}{P_{01}+P_{10}})^2>{\epsilon_1}^2$.
Hence,
%Therefore,
%\begin{equation*}
%\begin{aligned}
%   &
% $  (\frac{x^*-P_{01}}{P_{01}+P_{10}})^2
  %  =\big[(\frac{x^*}{P_{01}+P_{10}}-\frac{x^*}{x^*+y^*})^2+(\frac{x^*}{x^*+y^*}-\frac{P_{01}}{P_{01}+P_{10}})^2\\&+2(\frac{x^*}{P_{01}+P_{10}}-\frac{x^*}{x^*+y^*})(\frac{x^*}{x^*+y^*}-\frac{P_{01}}{P_{01}+P_{10}})\big]\\&
%    > \big[{\epsilon_1}^2+2(\frac{x^*}{P_{01}+P_{10}}-\frac{x^*}{x^*+y^*})(\frac{x^*}{x^*+y^*}-\frac{P_{01}}{P_{01}+P_{10}})\big]$.
%\end{aligned}
%\end{equation*}
%where the inequality follows since 
%$(\frac{x^*}{P_{01}+P_{10}}-\frac{x^*}{x^*+y^*})^2\ge 0$.
%Similarly,
%\begin{equation*}
%\begin{aligned}
%$ (\frac{y^*}{P_{01}+P_{10}}-\frac{P_{10}}{P_{01}+P_{10}})^2
%>\big[{\epsilon_1}^2+2(\frac{y^*}{P_{01}+P_{10}}-\frac{y^*}{x^*+y^*})(\frac{y^*}{x^*+y^*}-\frac{P_{10}}{P_{01}+P_{10}})\big]$.
%\end{aligned}
%\end{equation*}
%Hence, using Equations (\ref{eq:epsilon_1}) and
%(\ref{eq:epsilon_2}),
%Hence,
%\begin{equation*}
%\begin{aligned}
%& 
$\frac{y^*D(x^*||P_{01})}{x^*+y^*}+\frac{x^*D(y^*||P_{10})}{x^*+y^*}
> 2{\epsilon_1}^2(P_{01}+P_{10})^2$.
%\end{aligned}
%\end{equation*}
The rest of the proof follows immediately.\qed\\
\textbf{Proof of Proposition \ref{lemma:concentratation}:}
We take
$C_t:=\{|\frac{N_0(t)}{t-1}-\pi_0|>\epsilon_1\}$.
%Therefore,
\begin{equation*}
\begin{aligned}
& \sum_{t=1}^{n}\mathbb{P}(D_{0,t})=\sum_{t=1}^{n}\mathbb{P}(D_{0,t},C_t^c)+\mathbb{P}(D_{0,t},C_t)\\&
\le \sum_{t=1}^n \mathbb{P}(|\hat{P}_{01,N_0(t)}-P_{01}|>\epsilon_1,|\frac{N_0(t)}{t-1}-\pi|\le \epsilon_1)+\mathbb{P}(C_t)\\&
\le \sum_{s_0=1}^{\infty}\mathbb{P}(|\hat{P}_{01,s_0}-P_{01}|>\epsilon_1)
+\sum_{t=1}^{n}\mathbb{P}(C_t)\\&
\le \sum_{s_0=1}^{\infty}2\exp(-2s_0{\epsilon_1}^2)+\sum_{t=1}^{n}\mathbb{P}(C_t)\\&
\le \frac{1}{{\epsilon_1}^2}+
\sum_{t=1}^{\infty} (t+1)^3  \exp(-2(t-1){\epsilon_1}^2(P_{01}+P_{10})^2),
\end{aligned}    
\end{equation*}
where the third and last inequalities follow from Propositions \ref{lemma:pinsker} and %and the last inequality follows form Lemma 
\ref{lemma:mixing}, respectively.
Proof for $\sum \limits_{t=1}^n\mathbb{P}(D_{1,t})$ follows in a similar manner.\qed\\
\textbf{Proof of Proposition \ref{lemma:Hellinger}:}
\textit{Case I-$P_{01}+P_{10}=1$:} Using triangle inequality,
%First, we derive an upper bound on $H^2(\hat{P}_{01}(t)||1-\hat{P}_{10}(t))$. 
%\begin{equation*}
%    \begin{aligned}
 $ \textcolor{black}{TV}(\hat{P}_{01}(t)||1-\hat{P}_{10}(t))
% =1-\sqrt{\hat{P}_{01}(t)(1-\hat{P}_{10}(t))}-\sqrt{(1-\hat{P}_{01}(t))\hat{P}_{10}(t)}\\&
%= |\hat{P}_{01}(t)+\hat{P}_{10}(t)-1|\\& 
\le |\hat{P}_{01}(t)-P_{01}|+|\hat{P}_{10}(t)-P_{10}|+|P_{01}+P_{10}-1|$,      
%    \end{aligned}
%\end{equation*}
%where the first inequality follows from the fact if $\hat{P}_{01}(t)\ge 1-\hat{P}_{10}(t)$, then 
%$\textcolor{black}{TV}(\hat{P}_{01}(t)||1-\hat{P}_{10}(t))\le \hat{P}_{01}(t)+\hat{P}_{10}(t)-1,$ else $\textcolor{black}{TV}(\hat{P}_{01}(t)||1-\hat{P}_{10}(t))\le 1-\hat{P}_{01}(t)-\hat{P}_{10}(t)$. 
%The second inequality follows from the triangle inequality.
We take $C_t:=\{|\frac{N_0(t)}{t-1}-\pi_0|>\epsilon\}$ and $\epsilon=\frac{1}{{(t-1)}^{1/4}}$.
Hence, using Propositions \ref{lemma:pinsker} and  \ref{lemma:mixing},
%if $p+q=1$,
\begin{equation*}
\begin{aligned}
  &  \sum_{t=1}^n \mathbb{P}(\textcolor{black}{TV}(\hat{P}_{01}(t)||1-\hat{P}_{10}(t))\ge \frac{1}{t^{1/4}})\\&
  %=\sum_{t=1}^n \mathbb{P}(H^2(\hat{P}_{01}(t)||1-\hat{P}_{10}(t))\ge \frac{1}{t^{1/4}},C_t^c)+\\&
  %\sum_{t=1}^n \mathbb{P}(H^2(\hat{P}_{01}(t)||1-\hat{P}_{10}(t))\ge \frac{1}{t^{1/4}},C_t)\\&
%  \le \sum_{t=1}^n \mathbb{P}(H^2(\hat{P}_{01}(t)||1-\hat{P}_{10}(t))\ge \frac{1}{t^{1/4}},C_t^c)+
%  \mathbb{P}(C_t)\\&
 \le \sum_{t=1}^n \mathbb{P}( |\hat{P}_{01}(t)-P_{01}|+|\hat{P}_{10}(t)-P_{10}|\\&+|P_{01}+P_{10}-1|\ge \frac{1}{t^{1/4}},C_t^c)+
  \mathbb{P}(C_t)\\&
%    \le  \sum_{t=1}^n  \mathbb{P}(3|\hat{P}_{01,N_0(t)}-P_{01}|\ge \frac{1}{t^{1/4}},|\frac{N_0(t)}{t-1}-\pi_0|\le  \epsilon)+\\& \mathbb{P}(3|\hat{P}_{10,N_1(t)}-P_{10}|\ge \frac{1}{t^{1/4}},|\frac{N_1(t)}{t-1}-\pi_1|\le \epsilon)
%   +\mathbb{P}(C_t)\\&
    \le  \sum_{s_0=1}^\infty  \mathbb{P}(3|\hat{P}_{01,s_0}-P_{01}|\ge \frac{1}{{s_0}^{1/4}})\\& +\sum_{s_1=1}^\infty \mathbb{P}(3|\hat{P}_{10,s_1}-P_{10}|\ge \frac{1}{{s_1}^{1/4}})
    +\sum_{t=1}^n\mathbb{P}(C_t)\\&
    %\end{aligned}
    %\end{equation*}
    %\begin{equation*}
    %\begin{aligned}    
    %\le  \sum_{s_0=1}^\infty  2\exp( \frac{-2s_0}{9{s_0}^{1/2}})+
    %\sum_{s_1=1}^\infty  2\exp( \frac{-2s_1}{9{s_1}^{1/2}})+
    %\sum_{t=1}^n\mathbb{P}(C_t)\\&
    %\le \sum_{s=1}^\infty 4 \exp(-2\frac{1}{s^{1/2}}s)+\mathbb{P}(|p+q-1|\ge \frac{1}{s^{1/4}})
    \le \sum_{t=1}^{\infty} 4 \exp(-\frac{2}{9}\sqrt{t})+\sum_{t=1}^{\infty} (t+1)^3 \exp(-2\sqrt{t-1}).
\end{aligned}
\end{equation*}
%The second inequality follows from the fact that $x+y+z\le 3 \max\{x,y,z\}$. It also utilizes the fact that $P_{01}+P_{10}=1$ and hence,
%$\mathbb{P}(3|P_{01}+P_{10}-1|\ge \frac{1}{t^{1/4}})=0$.
%The last inequality follow from Lemmas \ref{lemma:pinsker} and  \ref{lemma:mixing}.\par
%Clearly, the the first term in the resulting sum is finite. 
%If $P_{01}+P_{10}=1$, the second term is zero. \par
%Hence,  $\sum \limits_{s=1}^\infty \mathbb{P}(H^2(\hat{p}(s)||1-\hat{q}(s)))\ge \frac{1}{s^{1/4}})$ is finite.\par 
\textit{Case II-$P_{01}+P_{10}\neq 1$:}
We take
$D_t:=\{|\hat{P}_{10}(t)-P_{10}|<\epsilon_1,|\hat{P}_{01}(t)-P_{01}|< \epsilon_1\}$ and choose $0<\epsilon_1<\frac{|P_{01}+P_{10}-1|}{2}$.
Hence,
%if $p+q=1$,
\begin{equation*}
\begin{aligned}
  &  \sum_{t=\tau}^n \mathbb{P}(\textcolor{black}{TV}(\hat{P}_{01}(t)||1-\hat{P}_{10}(t))< \frac{1}{t^{1/4}})\\&
  %=\sum_{t=\tau}^n \mathbb{P}(H^2(\hat{P}_{01}(t)||1-\hat{P}_{10}(t))< \frac{1}{t^{1/4}},D_t)\\&+
  %\sum_{t=\tau}^n \mathbb{P}(H^2(\hat{P}_{01}(t)||1-\hat{P}_{10}(t))< \frac{1}{t^{1/4}},D_t^c)\\&
  \le \sum_{t=\tau}^n \mathbb{P}(\textcolor{black}{TV}(\hat{P}_{01}(t)||1-\hat{P}_{10}(t))< \frac{1}{t^{1/4}},D_t)+
  \mathbb{P}(D_t^c).
  \end{aligned}
  \end{equation*}
Now, for $P_{01}+P_{10}<1$,
%since $P_{01}+\epsilon_1<1-P_{10}-\epsilon_1$ %we obtain
%\begin{equation*}
%\begin{aligned}
%&\mathbb{P}(\textcolor{black}{TV}(\hat{P}_{01}(t)||1-\hat{P}_{10}(t))<\frac{1}{t^{1/4}},D_t)\\&<\mathbb{P}(\textcolor{black}{TV}(P_{01}+\epsilon_1||1-P_{10}-\epsilon_1)<\frac{1}{t^{1/4}}).   
%\end{aligned}
%\end{equation*}
%Hence, 
if $\tau>\frac{1}{\textcolor{black}{TV}^4(P_{01}+\epsilon_1||1-{P}_{10}-\epsilon_1)}$,
\begin{equation*}
\begin{aligned}
  &  \sum_{t=\tau}^n \mathbb{P}(\textcolor{black}{TV}(\hat{P}_{01}(t)||1-\hat{P}_{10}(t))< \frac{1}{t^{1/4}})\\&
  \le \sum_{t=\tau}^n \mathbb{P}(\textcolor{black}{TV}(P_{01}+\epsilon_1||1-{P}_{10}-\epsilon_1)< \frac{1}{t^{1/4}})+
  \mathbb{P}(D_t^c)\\&
%\le \sum_{t=1}^n\mathbb{P}(D_t^c)
\le  \sum_{t=1}^n\mathbb{P}(|\hat{P}_{10}(t)-P_{10}|>\epsilon_1)+\sum_{t=1}^n\mathbb{P}(|\hat{P}_{01}(t)-P_{01}|>\epsilon_1).
  \end{aligned}
  \end{equation*}
  Similar result holds for 
%Similarly, if 
$P_{01}+P_{10}>1$.
%then $H^2(\hat{P}_{01}(t)||1-\hat{P}_{10}(t))>H^2(P_{01}-\epsilon_1||1-P_{01}+\epsilon_1)$. 
%then 
%\begin{equation*}
%\begin{aligned}
%&\mathbb{P}(\textcolor{black}{TV}(\hat{P}_{01}(t)||1-\hat{P}_{10}(t))<\frac{1}{t^{1/4}},D_t)\\&<\mathbb{P}(\textcolor{black}{TV}(P_{01}-\epsilon_1||1-P_{10}+\epsilon_1)<\frac{1}{t^{1/4}}).
%\end{aligned}
%\end{equation*}
%Rest of the proof follows immediately.
%Hence, if $P_{01}+P_{10}>1$ and $\tau>\frac{1}{H^8(P_{01}-\epsilon_1||1-{P}_{10}+\epsilon_1)}$,
%\begin{equation*}
%\begin{aligned}
%  &  \sum_{t=\tau}^n \mathbb{P}(H^2(\hat{P}_{01}(t)||1-\hat{P}_{10}(t))< \frac{1}{t^{1/4}})
%\le \sum_{t=1}^n\mathbb{P}(D_t^c)\\&\le  \sum_{t=1}^n\mathbb{P}(|\hat{P}_{10}(t)-P_{10}|>\epsilon_1)+\sum_{t=1}^n\mathbb{P}(|\hat{P}_{01}(t)-P_{01}|>\epsilon_1).   
%  \end{aligned}
%  \end{equation*}
  The rest of the proof follows from Proposition \ref{lemma:concentratation}.\qed
%  \begin{comment}
%\section{Proof of Propositions \ref{lemma_H:1_Markovian}-\ref{lemma:2a}}\label{app:p_index}
\\\textbf{Proof of Proposition \ref{lemma_H:1_Markovian}:}
\begin{equation*}
 \begin{aligned}
 & \mathbb{P}(\tilde{p}_s^*<p-\epsilon_{p})
\le      \mathbb{P}\big (D(\hat{p}_s||p-\epsilon_{p})> \frac{\log f(t)}{s},\hat{p}_s< p-\epsilon_{p} \big)\\&  
        \le   \mathbb{P}\big (D(\hat{p}_s||p)> \frac{\log f(t)}{s}+2{\epsilon_{p}}^2,\hat{p}_s< p\big) \\&
        \le \sum_{s=1}^n \mathbb{P}\big (D(\hat{p}_s||p)> \frac{\log f(t)}{s}+2{\epsilon_{p}}^2,\hat{p}_s< p\big)\\&
        \le  \sum_{s=1}^n \exp{\Big(-s\Big(2{\epsilon_{p}}^2+ \frac{\log f(t)}{s}\Big)\Big)}\\&
        %= \sum_{s=1}^n \frac{1}{f(t)}  \exp(-2s{\epsilon_{p}}^2)
        \le 
        \sum_{s=1}^{\infty}\frac{1}{f(t)}  \exp(-2s{\epsilon_{p}}^2) \le \frac{1}{2{\epsilon_{p}}^2f(t)}, 
%        \le \frac{1}{f(t)2{\epsilon}^2}
 \end{aligned}
 \end{equation*}
 where the second and fourth inequalities follow from Proposition \ref{lemma:pinsker} and Corollary
\ref{corollary:1}, respectively.
The last inequality uses the fact that $\sum \limits_{s=1}^{\infty} \exp(-sx)= \frac{1}{e^x-1}\le \frac{1}{x}$.
 Hence, 
  \begin{equation*}
 \begin{aligned}
 &\sum_{s=1}^{n}\mathbb{P}(\tilde{p}_s^{*}<p-\epsilon_{p})
\le \sum_{t=1}^{\infty}\frac{1}{2{\epsilon_{p}}^2f(t)}\le \int_{0}^{\infty} \frac{dt}{2{\epsilon_{p}}^2f(t)}\le \frac{2}{{\epsilon_{p}}^2}.
\end{aligned}
\end{equation*}
The last inequality follows from the fact that $1+x \log^2(x)\ge x \log^2(x)$ and $\int_e^{\infty} \frac{dx}{x \log^2(x)}=1$.\\
%\textbf{Proof of Lemma \ref{lemma_H:2_Markovian}:}
%similar to Lemma \ref{lemma_H:1_Markovian},
 \begin{equation*}
 \begin{aligned}
 & \mathbb{P}(\tilde{q}_s^*>q+\epsilon_{q})
\le      \mathbb{P}\big (D(\hat{q}_s||q+\epsilon_{q})> \frac{\log f(t)}{s},\hat{q}_s> q+\epsilon_{q} \big)\\&  
%    \le  \mathbb{P}\big (D(\hat{\mu}(t-1)||\mu-\epsilon)> \frac{\log f(t)}{T(t-1)},\hat{\mu}(s)< \mu-\epsilon \big)\\&  
        \le   \mathbb{P}\big (D(\hat{q}_s||q)> \frac{\log f(t)}{s}+2{\epsilon_{q}}^2,\hat{q}_s> q\big) \\&
        \le \sum_{s=1}^n \mathbb{P}\big (D(\hat{q}_s||q)> \frac{\log f(t)}{s}+2{\epsilon_{q}}^2,\hat{q}_s> q\big)\\&
        \le  \sum_{s=1}^n \exp{\Big(-s\Big(2{\epsilon_{q}}^2+ \frac{\log f(t)}{s}\Big)\Big)}
      %  = \sum_{s=1}^n \frac{1}{f(t)}  \exp(-2s{\epsilon_{q}}^2)\le 
        %\sum_{s=1}^{\infty}\frac{1}{f(t)}  \exp(-2s{\epsilon_{q}}^2) \\&
        \le \frac{1}{2{\epsilon_{q}}^2f(t)}.
%        \le \frac{1}{f(t)2{\epsilon}^2}
 \end{aligned}
 \end{equation*}
 \qed
% where the second and fourth inequalities follow from Lemma \ref{lemma:pinsker} and Corollary
%\ref{corollary:1}, respectively.
%The rest of the proof is similar to Lemma \ref{lemma_H:1_Markovian}.
\\ \textbf{Proof of Proposition \ref{lemma:2a}:} Since $D(p_2+\epsilon_1||\frac{(p_1-\epsilon_p)(q_2-\epsilon_1)}{q_1+\epsilon_1}+Z(s))\le \frac{a}{s}$ implies $D(p_2+\epsilon_1||\frac{(p_1-\epsilon_p)(q_2-\epsilon_1)}{q_1+\epsilon_1})\le \frac{a}{s}$,
\begin{equation*}
 \begin{aligned}
& \kappa_1
\le \sum_{s=c}^n \mathbb{P}(D(p_2+\epsilon_1||\frac{(p_1-\epsilon_p)(q_2-\epsilon_1)}{q_1+\epsilon_1})\le \frac{a}{s})\\&
= ( \frac{a}{D(p_2+\epsilon_1||\frac{(p_1-\epsilon_p)(q_2-\epsilon_1)}{q_1+\epsilon_1})}-(c-1))^+.
\end{aligned}
\end{equation*}
Similarly,
%where the inequality follows from the fact that 
%The equality follows since
%$\mathbb{P}(D(p_2+\epsilon_1||\frac{(p_1-\epsilon_p)(q_2-\epsilon_1)}{q_1+\epsilon_1})\le \frac{a}{s})$ 
%is either $0$ or $1$ depending on the value of $s$.
%\begin{equation*}
%\begin{aligned}
   $\kappa_2
    \le 
    %\sum_{s=c}^n \mathbb{P}(D(q_2-\epsilon_1||\frac{(q_1+\epsilon_1)(p_2+\epsilon_1)}{p_1-\epsilon_p})\le \frac{a}{s})
 (\frac{a}{D(q_2-\epsilon_1||\frac{(q_1+\epsilon_1)(p_2+\epsilon_1)}{p_1-\epsilon_p})}-(c-1))^+$.\\
%    \end{aligned}
%\end{equation*}
%where the inequality follows since $D(q_2-\epsilon_1||\frac{(q_1+\epsilon_1)(p_2+\epsilon_1)}{p_1-\epsilon_p}-Z(s))\le \frac{a}{s}$ implies $D(q_2-\epsilon_1||\frac{(q_1+\epsilon_1)(p_2+\epsilon_1)}{p_1-\epsilon_p})\le \frac{a}{s}$.\\
%\textbf{Proof of Lemma \ref{lemma:mu_H_2a_case3}:}\label{app:subopt_2} 
Using Proposition \ref{lemma:pinsker},
\begin{equation*}
\begin{aligned}
  & \kappa_3
 % \le \sum\limits_{s=c}^{n} \mathbb{P}(D(\hat{\mu}_s||\frac{p-\epsilon_p}{p-\epsilon_p+q+\epsilon_1})\le \frac{a}{s})\\&
  \le \sum\limits_{s=1}^{\infty} \mathbb{P}(\hat{\mu}_s\ge \mu+\epsilon_1)\\&
  +\sum\limits_{s=c}^{n} \mathbb{P}(D(\hat{\mu}_s||\frac{p-\epsilon_p}{p-\epsilon_p+q+\epsilon_1})\le \frac{a}{s},\hat{\mu}_s< \mu+\epsilon_1)\\&
  \le \frac{1}{2{\epsilon_1}^2}+\sum_{s=c}^n \mathbb{P}(D(\mu+\epsilon_1||\frac{p-\epsilon_p}{p-\epsilon_p+q+\epsilon_1})\le \frac{a}{s})\\&
  = \frac{1}{2{\epsilon_1}^2}+(\frac{a}{D(\mu+\epsilon_1||\frac{p-\epsilon_{p}}{p-\epsilon_{p}+q+\epsilon_1})}-(c-1))^+.
\end{aligned}
\end{equation*}
\qed
%\end{comment}
%where the third inequality follows from Lemma \ref{lemma:pinsker}.

\textbf{Proof of Theorem \ref{theo:eigen}:}
We know that
$\sigma_i=p_{01}^i+p_{10}^i$. 
The upper bound on the regret of UCB-SM \cite{tekin2010online} for arms modeled as two-state Markov chains ($\textcolor{black}{r(s)=s}$) is
$\sum_{i\neq 1} \frac{4L}{(\mu_1-\mu_i)},$
with $L= \frac{360}{\min \limits_i \sigma_i} $.\\
%The asymptotic upper bound on the regret of \textcolor{black}{TV}-KL-UCB ($R_{\textcolor{black}{TV}-KL-UCB}$, say) is given in Theorem \ref{theo:regret_2}.\\
\textit{1) Truly Markovian arms:}
Using Proposition \ref{lemma:pinsker}, 
\begin{equation}\label{eq:compare_regret}
\begin{aligned}
&\sum_{i\neq 1}\Delta_i \big[\frac{\mathbbm{1}\{p_{01}^1 p_{10}^i<p_{10}^1\}}{D(p_{01}^i||\frac{p_{01}^1 p_{10}^i}{p_{10}^1})}+\frac{1}{D(p_{10}^i||\frac{p_{10}^1 p_{01}^i}{p_{01}^1})}\big]\\& \le  \sum_{i\neq 1} \frac{\Delta_i({p_{10}^1}^2+{p_{01}^1}^2)}{{2(p_{01}^ip_{10}^1-p_{01}^1p_{10}^i)}^2}
%\le \sum_{i\neq 1} \frac{1}{2(\mu_1-\mu_i){(p_{01}^i+p_{10}^i)}^2}\\& 
%\le \sum_{i\neq 1} \frac{1}{2(\mu_1-\mu_i)\min\limits_i{(p_{01}^i+p_{10}^i)}^2},
\le \sum_{i\neq 1} \frac{1}{2(\mu_1-\mu_i)\min\limits_i{\sigma_i}^2},
\end{aligned}
\end{equation}
since $\Delta_i=\frac{p_{01}^ip_{10}^1-p_{01}^1p_{10}^i}{(p_{01}^i+p_{10}^i)(p_{01}^1+p_{10}^1)}$. The 
result holds true if
% $\frac{R_{\textcolor{black}{TV}-KL-UCB}}{\log n}\le \frac{R_{UCB-SM}}{\log n}$ holds if 
$\frac{1}{\min \limits_i{(p_{01}^i+p_{10}^i)}^2}\le \frac{360*4}{\min \limits_i (p_{01}^i+p_{10}^i)}$ or $\min \limits_i \sigma_i \ge \frac{1}{1440}$.\\
\textit{2) i.i.d. optimal \& truly Markovian suboptimal:}
Proof follows directly from
Equation (\ref{eq:compare_regret}), $\mu_1=p_{01}^1$ and $1-\mu_1=p_{10}^1$.\\
\textit{3) Truly Markovian optimal \& i.i.d. suboptimal:} Using Proposition \ref{lemma:pinsker},
\begin{equation*}
\begin{aligned}
&\sum_{i\neq 1} 
\frac{2\Delta_i}{D(\mu_i||\frac{p_{01}^1}{p_{01}^1+p_{10}^1})}
=\sum_{i\neq 1} 
\frac{2\Delta_i}{D(\mu_i||\mu_1)}
\le \sum_{i\neq 1}\frac{1}{(\mu_1-\mu_i)}.
\end{aligned}   
\end{equation*}
%where the inequality follows from Proposition \ref{lemma:pinsker}.\\
%Hence, $\frac{R_{\textcolor{black}{TV}-KL-UCB}}{\log n}\le \frac{R_{UCB-SM}}{\log n}$.\\
\textit{4) i.i.d. arms:}
Since the upper bound on the regret of \textcolor{black}{TV}-KL-UCB matches
the lower bound,  
the result follows immediately.\qed
%\begin{comment}
\section{}\label{app:klucbmc}
\textbf{KL-UCB-MC Algorithm:}
\begin{algorithm}
\caption{KL-UCB-MC}\label{algo:2}
\label{NCalgorithm}
\begin{algorithmic}[1]
\small
\State Input $K$ (number of arms).
\State Choose each arm once.
 \If { (state of arm $i=0$)}
 \State Update $U_i$ using Equation (\ref{eq:state0}).
\Else
\State Update $U_i$ using Equation (\ref{eq:state1}).
\EndIf
\State Choose $A_t=\arg \max \limits_i U_i$.
\end{algorithmic}
\end{algorithm}
KL-UCB-MC is a variation of standard KL-UCB for i.i.d. rewards \cite{garivier2011kl,cappe2013kullback} where one obtains a confidence bound for the estimate of $p_{01}^i$ (estimate of $p_{10}^i$) and uses the estimate of $p_{10}^i$ (estimate of $p_{01}^i$) in state $0$ (state $1$) of arm $i$ using KL-UCB. This is represented by Equations  (\ref{eq:state0}) and
(\ref{eq:state1}),
%(\ref{eq:state00}) and (\ref{eq:state11}), 
respectively. The main difference between KL-UCB-MC and \textcolor{black}{TV}-KL-UCB is that KL-UCB-MC is always in the STP\_PHASE of \textcolor{black}{TV}-KL-UCB, irrespective of the arm being truly Markovian or i.i.d. Therefore, the asymptotic upper bound on the regret of KL-UCB-MC is same as that of \textcolor{black}{TV}-KL-UCB for truly Markovian arms, irrespective of the arms being truly Markovian or i.i.d. 
%We proceed to analyze the regret associated with KL-UCB-MC and show that 
The resulting asymptotic upper bound on the regret is smaller than that of \cite{tekin2010online} (See Theorem \ref{theo:eigen}). However, KL-UCB-MC results in large constants in the regret for i.i.d. rewards (given by Theorem \ref{theo:regret_2}(a)). 
Asymptotic performances of KL-UCB-MC and \textcolor{black}{TV}-KL-UCB are exactly same for arms with truly Markovian rewards.
 We know that the asymptotic upper bound on the regret of KL-UCB-MC is less than that of \cite{tekin2010online} if $\min \limits_i (p_{01}^i+p_{10}^i)\ge \frac{1}{1440}$ (See Theorem \ref{theo:eigen}). Numerical results  reveal (See Figure \ref{fig:asymptotic_1_1}) that even when this condition is not met, the asymptotic regret upper bound is less than that of \cite{tekin2010online}. \\
 \textbf{\textcolor{black}{Choice of Hellinger Distance}:}
 \label{app:rationale-Hellinger}
 \textcolor{black}{Similar to total variation distance, Hellinger distance ($H(\cdot||\cdot)$)} can be chosen 
% The rationale behind choice of Hellinger distance 
over KL distance which is a natural choice for representing the similarity between two probability distributions in the bandit literature. \textcolor{black}{Similar to} Proposition \ref{lemma:Hellinger}, we \textcolor{black}{can} prove that 
 if $B_t:=\{H^2(\hat{P}_{01}(t)||1-\hat{P}_{10}(t))<\frac{1}{t^{1/4}}\}$, the events in the sequence $\{{B_t}\}_{t=0}^{\infty}$ occur infinitely often if $P_{01}+P_{10}=1$. This is same as proving 
 $\sum \limits_{t=1}^\infty \mathbb{P}(H^2(\hat{P}_{01}(t)||1-\hat{P}_{10}(t))\ge \frac{1}{t^{1/4}})$ is finite if $P_{01}+P_{10}=1$.  To show this, we \textcolor{black}{need to} utilize the relationship
 \begin{equation}\label{eq:app_hellinger}
 \begin{aligned}
&     H^2(\hat{P}_{01}(t)||1-\hat{P}_{10}(t))\le |\hat{P}_{01}(t)+\hat{P}_{10}(t)-1|\\& \le |\hat{P}_{01}(t)-P_{01}|+|\hat{P}_{10}(t)-P_{10}|+|P_{01}+P_{10}-1|.
     \end{aligned}
 \end{equation}
Now, if we replace $H(\cdot||\cdot)$ by $D(\cdot||\cdot)$, we need to prove that if 
$G_t:=\{D^2(\hat{P}_{01}(t)||1-\hat{P}_{10}(t))<\frac{1}{t^{1/4}}\}$, the events in the sequence $\{{G_t}\}_{t=0}^{\infty}$ occur infinitely often iff $P_{01}+P_{10}=1$.
To prove this, we need to find an appropriate upper bound on KL distance, similar to Equation (\ref{eq:app_hellinger}).
Pinsker's inequality which is a well-known bound on the KL distance, provides a lower bound and hence, cannot be used for the proof.
In \cite{dragomir2000some}, the authors propose the following upper bound %on KL distance
%. Using \cite{dragomir2000some}, we obtain 
\begin{equation*}
\begin{aligned}
&    D(\hat{P}_{01}(t)||1-\hat{P}_{10}(t))\\&\le \frac{1-2(\hat{P}_{01}(t)+\hat{P}_{10}(t))+{(\hat{P}_{01}(t)+\hat{P}_{10}(t))}^2}{\hat{P}_{10}(t)(1-\hat{P}_{10}(t))}.
\end{aligned}
\end{equation*}
%However, using \cite{dragomir2000some}, unlike
Hence, unlike Equation (\ref{eq:app_hellinger}), in this case, $\hat{P}_{01}(t)$ and $\hat{P}_{10}(t)$ cannot be separated. Hence, we cannot apply Chernoff's bound to show the finiteness of $\sum \limits_{t=1}^\infty \mathbb{P}(D(\hat{P}_{01}(t)||1-\hat{P}_{10}(t))\ge \frac{1}{t^{1/4}})$. \textcolor{black}{However, results in \cite[Theorem~I.2]{agrawal2020finite} can be utilized to design similar online test using KL distance 
%for designing a test 
to detect whether an arm is truly Markovian or i.i.d.}\\
\textbf{Extension to Non-zero Reward in State $0$:}
\textcolor{black}{Although in our model, there is no reward in state 0, the proposed model and algorithm can be extended to take into account non-zero reward in state 0 easily. Recall that the mean reward and stationary distribution associated with arm $i$ are $\mu_i$ and $\pi_i$, respectively. Also, the reward obtained by playing an arm in state $s$ is $r(s)$. Hence, $\mu_i=r(0)\pi_i(0)+r(1)\pi_i(1)=r(0)\frac{p^i_{10}}{p^i_{01}+p^i_{10}}+r(1)\frac{p^i_{01}}{p^i_{01}+p^i_{10}}$.
Now, in STP\_PHASE, if the current state of arm $i$ is 0, then we compute $U_i$ in the following way.\\ %\begin{equation*}
%\begin{split}
$U_i=\sup \{r(0)\frac{\hat{p}^i_{10}}{\tilde{p}+\hat{p}^i_{10}}+r(1)\frac{\tilde{p}}{\tilde{p}+\hat{p}^i_{10}}:D(\hat{p}^i_{01}(t-1)||\tilde{p})\le \frac{\log f(t)}{T_i(t-1)}\}$.
%\end{split}    
%\end{equation*}
 Similarly, in STP\_PHASE, if the current state of arm $i$ is 1, then
%\begin{equation*}
%\begin{split}
  $U_i=\sup \{r(0)\frac{\tilde{p}}{\tilde{p}+\hat{p}^i_{01}}+r(1)\frac{\hat{p}^i_{01}}{\tilde{p}+\hat{p}^i_{01}}:D(\hat{p}^i_{10}(t-1)||\tilde{p})\le \frac{\log f(t)}{T_i(t-1)}\}$.
%  \end{split}    
%\end{equation*}
 The rest of the algorithm remains unmodified.}
%\end{comment}
\section*{Acknowledgment}
This work  was  supported  by  the following grants: Navy N00014-19-1-2566, ARO W911NF-19-1-0379, NSF CMMI-1826320, ARO W911NF-17-1-0359, NSF Grants CNS 2106801 and CCF1934986 and Start-up Grant at
IIT Guwahati. The work of A. Roy was partly done while he
was with Coordinated Science Laboratory, University of Illinois at
Urbana-Champaign, Champaign, USA.
\bibliography{bandit}

\end{document}